\theoremstyle{plain}
\newtheorem{theorem}{Theorem}[section]
\newtheorem{lemma}[theorem]{Lemma}
\newtheorem{corollary}[theorem]{Corollary}
\theoremstyle{definition}
\newtheorem{definition}[theorem]{Definition}
\newtheorem{assumption}[theorem]{Assumption}
\theoremstyle{remark}
\newtheorem{remark}[theorem]{Remark}
\pgfplotsset{width=10cm,compat=1.9}
\DeclareMathOperator*{\argmin}{argmin}
\newcommand{\ZZ}{\mathcal{Z}}
\newcommand{\eps}{\epsilon}
\newcommand{\wt}{\widetilde}
\newcommand{\LL}{\mathcal{L}}
\crefname{algorithm}{Algorithm}{Algorithms}
\crefname{assumption}{Assumption}{Assumptions}
\crefname{equation}{}{}
\crefname{figure}{Fig.}{Figs.}
\crefname{table}{Table}{Tables}
\crefname{section}{Section}{Sections}
\crefname{subsection}{Section}{Sections}
\crefname{theorem}{Theorem}{Theorems}
\crefname{lemma}{Lemma}{Lemmmas}
\crefname{proposition}{Proposition}{Propositions}
\crefname{definition}{Definition}{Definitions}
\crefname{corollary}{Corollary}{Corollaries}
\crefname{remark}{Remark}{Remarks}
\crefname{example}{Example}{Examples}
\crefname{appendix}{Appendix}{Appendices}
\newcommand{\XX}{\mathcal{X}}
\newcommand{\FF}{\mathcal{F}_{L, D}}
\newcommand{\WW}{\mathcal{W}}
\newcommand{\ws}{w^{*}}
\newcommand{\GG}{\mathcal{G}_{\mu, L, D}}
\newcommand{\hf}{\widehat{F}_{\bx}}
\newcommand{\DD}{\mathcal{D}}
\newcommand{\expec}{\mathbb{E}}
\newcommand{\ER}{\mathbb{E}\hf(\widehat{w}_R) - \hf^*}
\newcommand{\ERX}{\mathbb{E}\widehat{F}_{\bx}(w_R^{ag}) - \widehat{F}_{\bx}^*}
\newcommand{\EPX}{\mathbb{E}F(w_R^{ag}) - F^*}
\newcommand{\EPL}{\mathbb{E}F(\widehat{w}_R) - F^*}
\newcommand{\epsor}{\epsilon_0^r}
\newcommand{\delor}{\delta_0^r}
\newcommand{\epsr}{\epsilon^r}
\newcommand{\delr}{\delta^r}
\newcommand{\epso}{\epsilon_0}
\newcommand{\delo}{\delta_0}
\newcommand{\rand}{\mathcal{R}^{(i)}}
\newcommand{\RR}{\mathcal{R}}
\newcommand{\bz}{\mathbf{Z}}
\newcommand{\bx}{\mathbf{X}}
\newcommand{\tepso}{\widetilde{\epso}}
\newcommand{\tdelo}{\widetilde{\delo}}
\newcommand{\tdelr}{\widetilde{\delta}^r}
\newcommand{\tedosim}{\underset{(\tepso, \tdelo)}{\simeq}}
\newcommand{\edosim}{\underset{(\epso, \delo)}{\simeq}}
\newcommand{\edisim}{\underset{(\epsilon_i, \delta_i)}{\simeq}}
\newcommand{\edorsim}{\underset{(\epsor, \delor)}{\simeq}}
\newcommand{\edorzsim}{\underset{(\epsor, 0)}{\simeq}}
\newcommand{\edzsim}{\underset{(\epso, 0)}{\simeq}}
\newcommand{\edsim}{\underset{(\epsilon, \delta)}{\simeq}}
\newcommand{\tedsim}{\underset{(\widetilde{\epsilon}, \widetilde{\delta})}{\simeq}}
\newcommand{\Xoz}{X_1^0}
\newcommand{\Xoo}{X_1^1}
\newcommand{\Al}{\mathcal{A}}
\title{Private Federated Learning Without a Trusted Server: Optimal Algorithms for Convex Losses}
\author{Andrew Lowy\thanks{University of Southern California. \texttt{lowya@usc.edu}} \and Meisam Razaviyayn\thanks{University of Southern California. \texttt{razaviya@usc.edu}}}
\date{}
\begin{document}
\maketitle

\begin{abstract}
This paper studies federated learning (FL)---especially cross-silo FL---with data from people who do not trust the server or other silos. In this setting, each silo (e.g. hospital) has data from different people (e.g. patients) and must maintain the privacy of each person's data (e.g. medical record), even if the server or other silos act as adversarial eavesdroppers. 
This requirement motivates the study of \textit{Inter-Silo Record-Level Differential Privacy} (ISRL-DP), which requires silo $i$'s \textit{communications} to satisfy record/item-level differential privacy (DP). ISRL-DP ensures that the data of each person (e.g. patient) in silo~$i$ (e.g. hospital~$i$) cannot be leaked. ISRL-DP is different from well-studied privacy notions. Central and user-level DP assume that people trust the server/other silos. On the other end of the spectrum, local DP assumes that people do not trust anyone at all (even their own silo). Sitting  between central and local DP,  ISRL-DP makes the realistic assumption (in cross-silo FL) that \textit{people trust their own silo, but not the server or other silos}. In this work, we provide
\textit{tight} (up to logarithms) 
\textit{upper and lower bounds 
} for ISRL-DP FL with convex/strongly convex loss functions and homogeneous (i.i.d.) silo data. 
Remarkably, we show that similar bounds are attainable for smooth losses with \textit{arbitrary heterogeneous silo data distributions}, via an accelerated ISRL-DP algorithm. We also provide tight upper and lower bounds for ISRL-DP federated empirical risk minimization, and use acceleration to attain the optimal bounds in fewer rounds of communication than the state-of-the-art. Finally, with a secure ``shuffler'' to anonymize silo messages (but without a trusted server), our algorithm attains the optimal \textit{central DP} rates under more practical trust assumptions. Numerical experiments  %
show favorable privacy-accuracy tradeoffs for our algorithm in classification and regression tasks. 
\end{abstract}

\section{Introduction}
\vspace{-.06in}
Machine learning tasks often involve data  from different ``silos'' (e.g. cell-phone users or organizations such as hospitals) containing sensitive information (e.g. location or health records). In federated learning (FL), each silo (a.k.a. ``client'') stores its data locally and a central server  coordinates updates among different silos to achieve a global learning objective \citep{kairouz2019advances}. One of the primary reasons for the introduction of FL was to offer greater privacy~\citep{mcmahan2017originalFL}. 
However,  storing data locally is not sufficient to prevent data leakage. Model parameters or updates can still reveal sensitive information (e.g. via model inversion attacks or membership inference attacks) \citep{inversionfred, inversionhe, inversionsong, zhu2020deep}. 

\vspace{.2cm}
\textit{Differential privacy} (DP)~\citep{dwork2006calibrating} protects against privacy attacks.  Different notions of DP  have been proposed for FL. 
The works of~\cite{jayaraman2018distributed, truex2019hybrid, noble2022differentially} considered \textit{central DP} (CDP) FL, which protects the privacy of silos' \textit{aggregated} data against an \textit{external adversary} who observes the \textit{final trained model}.\footnote{We abbreviate central differential privacy by CDP. This is  different than the \textit{concentrated differential privacy} notion in \cite{bun16}, for which the same abbreviation is sometimes used in other works.}  Unfortunately, CDP does not guarantee silo $i$'s privacy if an adversary eavesdrops on the silo's messages or has access to other silos or the server. 
Other works~\citep{mcmahan17, geyer17, jayaraman2018distributed, gade2018privacy, wei2020user, zhou2020differentially, levy2021learning, ghazi2021user} considered \textit{user-level DP} (a.k.a. client-level DP). User-level DP guarantees privacy of each silo's \textit{full local data set} in the presence of a trusted server. If the server and other silos are trusted, then user-level DP is a practical notion for \textit{cross-device FL}, where each silo/client corresponds to a single person (e.g., cell-phone user) with many records (e.g., text messages). 
However, if the server is untrusted, then user-level DP suffers from the same critical shortcoming of CDP: it \textit{allows silo data to be leaked to the untrusted server or to other silos}. Furthermore, user-level DP is less suitable for \textit{cross-silo FL}, where silos are typically organizations (e.g. hospitals, banks, or schools) that contain data from many different people (e.g. patients, customers, or students). In cross-silo FL, each person has a record (a.k.a. ``item'') that may contain sensitive data. Thus, an appropriate notion of DP for cross-silo FL should protect the privacy of \textit{each individual record} (``item-level DP'') in silo $i$, rather than silo $i$'s full aggregate data. 

\begin{wrapfigure}{R}{0.5\textwidth}
  \vspace{-.25in}
  \centering
  \includegraphics[width = 
  0.5\textwidth]{
  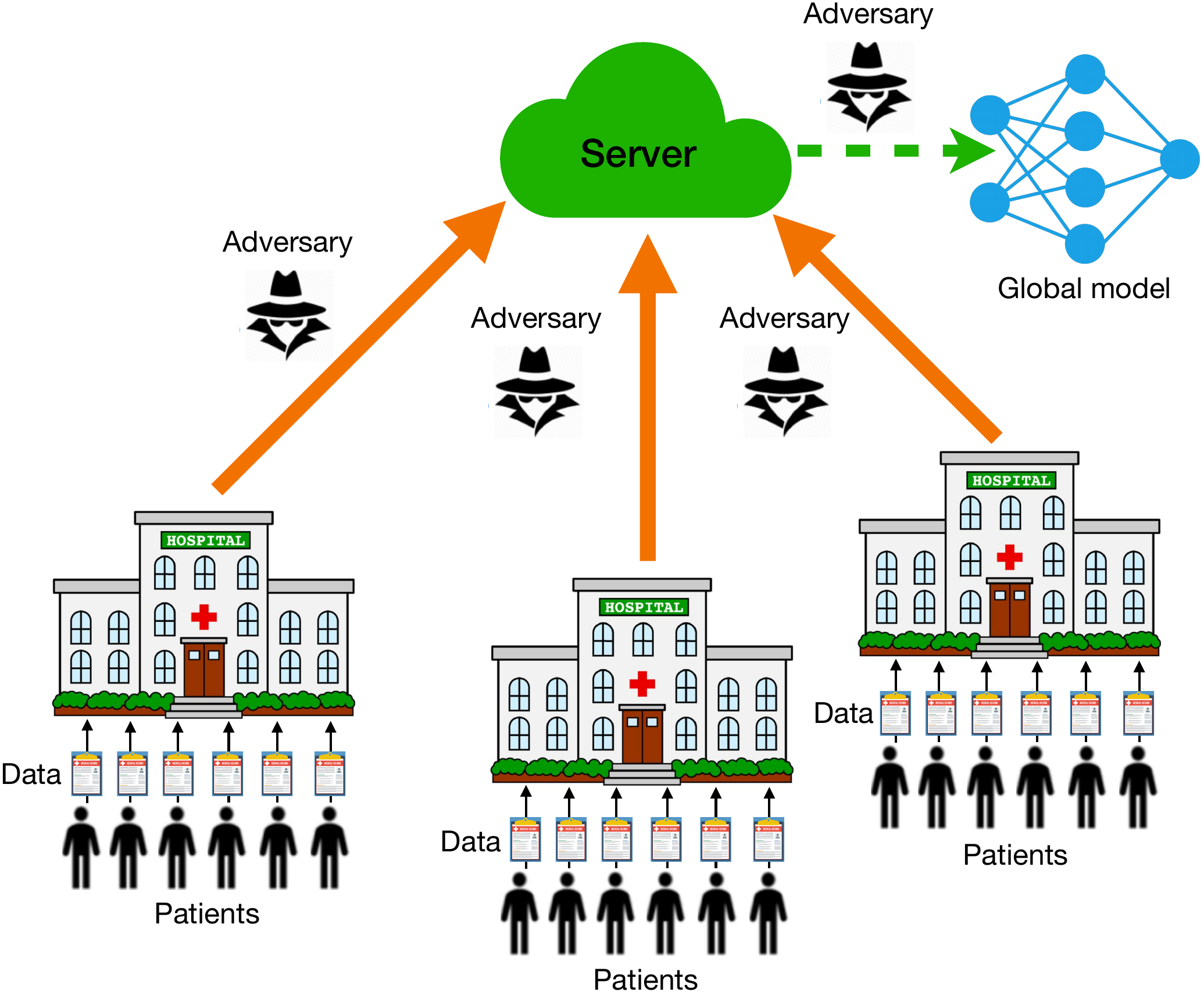
  }
  \vspace{-.7cm}
  \caption{
    \footnotesize
    ISRL-DP protects the privacy of each patient's record regardless of whether the server/other silos are trustworthy, as long as the patient's \textit{own hospital} is trusted. 
    By contrast, user-level DP protects aggregate data of patients in hospital $i$ and does not protect against adversarial server/other silos.
    }
\label{fig:LDP def}
 \vspace{-.18in}
  \end{wrapfigure}

\vspace{.2cm}
Another notion of DP is  \textit{local DP} (LDP)~\citep{whatcanwelearnprivately, duchi13}. 
While central and user-level DP assume that people trust all of the silos and the server, LDP assumes that individuals (e.g. patients) do not trust \textit{anyone} else with their sensitive data, \textit{not even their own silo} (e.g. hospital). Thus, LDP would require each person (e.g. patient) to randomize her report (e.g. medical test results) before releasing it (e.g. to their own doctor/hospital). Since patients/customers/students usually trust their \textit{own} hospital/bank/school, LDP may be unnecessarily stringent, \textit{hindering performance/accuracy}.

\vspace{.2cm}
In this work, we consider a privacy notion called \textit{inter-silo record-level differential privacy} (ISRL-DP), which 
requires that all of the communications of each silo  satisfy (item-level) DP; see~\cref{fig:LDP def}. By the post-processing property of DP, this also ensures that the the broadcasts by the server and the global model are DP. Privacy notions similar or identical to ISRL-DP have been considered in~\cite{truex20, huang19, huang2020differentially, wu2019value, wei19, dobbe2020, zhao2020local, arachchige2019local, seif20, virginia}. We provide a  rigorous definition of ISRL-DP in~\cref{def: informal ISRL-DP} and~\cref{app: ISRL-DP}.

\vspace{.2cm}
\textbf{Why ISRL-DP?} ISRL-DP is the natural notion of DP for cross-silo FL, where each silo contains data from many individuals who trust their own silo but may not trust the server or other silos (e.g., hospitals in \cref{fig:LDP def}). 
The item-level privacy guarantee that ISRL-DP provides for each silo (e.g. hospital) ensures that no person's record can be leaked. 
In contrast to central DP and user-level DP, the protection of ISRL-DP is guaranteed \textit{even against an adversary with access to the server and/or other silos} (e.g. hospitals). This is because each silo's \textit{communications} are DP with respect to their own data records and cannot leak information to any adversarial eavesdropper. On the other hand, since individuals (e.g. patients) trust their own silo (e.g. hospital), ISRL-DP does not require individuals to randomize their own data reports (e.g. health records). Thus, ISRL-DP leads to better performance/accuracy than local DP by relaxing the strict local DP requirement.  
Another benefit of ISRL-DP is that each silo $i$ can set its own $(\epsilon_i, \delta_i)$ item-level DP budget depending on its privacy needs; see~\cref{app: unbalanced upper bounds} and also \cite{virginia, aldaghri2021feo2}.

\vspace{.2cm}
In addition, ISRL-DP can be useful in \textit{cross-device} FL without a trusted server: If the ISRL privacy parameters are chosen sufficiently small, then \textit{ISRL-DP implies user-level DP} (see~\cref{app: dp relationships}). Unlike user-level DP, ISRL-DP does not allow data to be leaked to the untrusted server/other users.  

\vspace{.2cm}
Another intermediate trust model between the low-trust local model and the high-trust central/user-level models is the \textit{shuffle model} of DP~\citep{prochlo, cheu2019distributed}. 
In this model, a secure shuffler  receives noisy reports from the silos and randomly permutes them 
before the reports are sent to the untrusted server.\footnote{Assume that the reports can be decrypted by the server, but not by the shuffler \citep{esarevisited, fmt20}.} An algorithm is \textit{Shuffle Differentially Private} (SDP) if silos' shuffled messages are CDP; see~\cref{def: SDP}. %
\cref{fig: privacy notions} summarizes which parties are assumed to be trustworthy (from the perspective of a person contributing data to a silo) in each of the described privacy notions. 

\paragraph{Problem setup:} Consider a FL setting with $N$ silos, each containing a local data set with $n$ samples:\footnote{In Appendix~\ref{app: unbalanced upper bounds}, we consider the more general setting where data set sizes $n_i$ and ISRL-DP parameters $(\epsilon_i, \delta_i)$ may vary across silos, and the weights $p_i$ on each silo's loss $F_i$ in~\cref{eq: FL} may differ (i.e. $p_i \neq 1/N$).} $X_i = (x_{i, 1}, \cdots , x_{i, n})$ for $i \in [N] := \{1,\ldots,N\}$. 
In each round of communication~$r$, silos download the global model $w_r$ and use their local data to improve the model. Then, silos send local updates to the server (or other silos, in peer-to-peer FL), who updates the global model to $w_{r+1}$.  For each silo~$i$, let $\mathcal{D}_i$ be an unknown probability distribution on a %
data universe $\XX_i$ (i.e. $X_i \in \XX_i^{n}$). Let $\XX := \bigcup_{i=1}^N \XX_i.$ 
Given a loss function $f: \WW \times \XX \to \mathbb{R}$, define silo $i$'s local objective as 
\begin{equation} 
\label{eq: SCO}
F_i(w):= \mathbb{E}_{x_i \sim \mathcal{D}_i}[f(w, x_i)],
\end{equation} 
where $\WW \subset \mathbb{R}^d$ is a parameter domain.
Our goal is to find a model parameter that performs well for all silos, by solving the FL problem\quad\quad\quad\quad\quad\quad\quad\quad\quad\quad\quad\quad\quad\quad\quad\quad\quad\quad\quad\quad\quad\quad\quad\quad\quad
\begin{wrapfigure}{r}{0.41\textwidth}
  \vspace{-.32in}
  \centering
  \includegraphics[width = 
  0.41
  \textwidth]{
  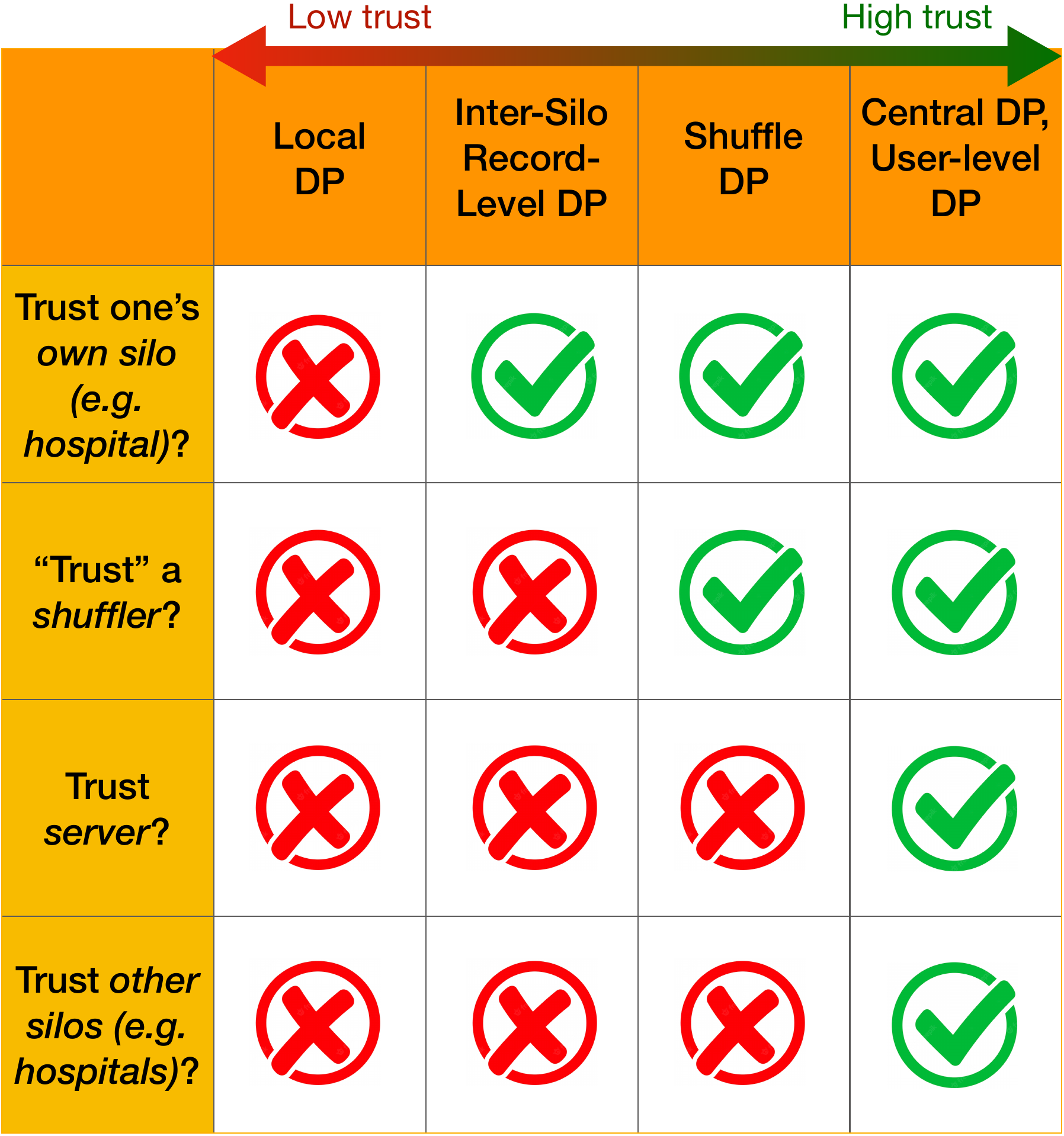
  }
   \vspace{-.25in}
  \caption{\footnotesize Trust assumptions of DP FL notions: We put ``trust'' in quotes because the shuffler is assumed to be secure and silo messages must already satisfy (at least a weak level of) ISRL-DP in order to realize SDP: anonymization alone cannot ``create'' DP~\citep{dwork2014}.}
\label{fig: privacy notions}
   \vspace{-.15in}
  \end{wrapfigure}
 \begin{equation}
\small
\label{eq: FL}
\min_{w \in \WW} \left\{F(w):= \frac{1}{N}\sum_{i=1}^N  F_i(w)\right\},
\end{equation} \normalsize
\normalsize
while maintaining the privacy of each silo's local data. 
 At times, we will  focus on \textit{empirical risk minimization} (ERM), where 
$\widehat{F}_i(w) :=  \frac{1}{n}\sum_{j=1}^{n} f(w,x_{i,j})$ is silo $i$'s local objective. Thus, in the ERM case, our goal is to solve~$\min_{w \in \WW} \hf(w):= \frac{1}{N}\sum_{i=1}^N \widehat{F}_i(w)$,
while maintaining privacy.
When $F_i$ takes the form~\cref{eq: SCO} (not necessarily ERM), we may refer to the problem as \textit{stochastic convex optimization} (SCO) for emphasis. For ERM, we make no assumptions on the data; for SCO, we assume the samples $\{x_{i,j}\}_{i \in [N], j \in [n]}$ are drawn independently. For SCO, we say  problem~\cref{eq: FL} is ``i.i.d.'' or ``homogeneous'' if $\XX_i = \XX$ and $\DD_i = \DD$, $\forall i.$  The \textit{excess risk} of an algorithm $\Al$ for solving~\cref{eq: FL} is $\expec F(\Al(\mathbf{X})) - F^*$, where $F^* = \inf_{w \in \WW} F(w)$ and the expectation is taken over both the random draw of $\mathbf{X} = (X_1, \ldots, X_N)$ and the randomness of $\Al$. For ERM, the \textit{excess empirical risk} of $\Al$ is $\expec \hf(\Al(\bx)) - \hf^*$, where the expectation is taken solely over the randomness of $\Al$. %
Thus, a fundamental question in FL is about the \textit{minimum achievable excess risk while maintaining privacy}. In this work, we specifically study the following questions 
for FL with 
convex and strongly convex loss functions:\footnote{Function $g: \mathcal{W} \to \mathbb{R}$ is \textit{$\mu$-strongly convex} ($\mu \geq 0$) if $g(w) \geq g(w') + \langle \partial g(w'), w - w' \rangle + \frac{\mu}{2}\|w - w'\|^2 ~\forall ~w, w' \in \WW$ and all sub-gradients $\partial g(w')$. If $\mu = 0,$ $g$ is \textit{convex.}} 

\vspace{0.05cm}

\begin{center}
\noindent\fbox{
    \parbox{0.55\textwidth}{
    \vspace{-0.cm}
\textbf{Question 1.} What is the minimum achievable excess risk for solving~\cref{eq: FL} with inter-silo record-level DP?

\vspace{0.2cm}

\textbf{Question 2.} With a trusted shuffler (but no trusted server), can the optimal central DP rates be attained?
    \vspace{-0.cm}
    }
    }
    \end{center}
\vspace{0.05cm}

\paragraph{Contributions:} Our first contribution is a complete answer to \textbf{Question 1} when silo data is \textit{i.i.d.}: we give tight upper and lower bounds in \cref{sec: ldp upper bounds}. The ISRL-DP rates sit between the local DP and central DP rates: higher trust allows for higher accuracy. Further, we show that the ISRL-DP rates nearly match the optimal non-private rates if $d \lesssim n \epso^2$, where $\epso$ is the ISRL-DP parameter  (``privacy for free'').

\vspace{.2cm}
Second, we give a complete answer to \textbf{Question 1} when $F = \hf$ is an \textit{empirical} loss in~\cref{sec: ERM}.\footnote{ERM is a special case of the FL problem~\cref{eq: FL}: if $\mathcal{D}_i$ is the empirical distribution on $X_i$, then $F = \widehat{F}_{\mathbf{X}}$.} While \citep{girgis21a} provided a tight upper bound for the (non-strongly) convex case, we use a novel \textit{accelerated} algorithm to achieve this upper bound in fewer communication rounds. Further, we obtain matching lower bounds. We also cover the strongly convex case. 

\vspace{.2cm}
Third, we give a partial answer to \textbf{Question 1} when silo data is \textit{heterogeneous} (non-i.i.d.), providing algorithms for \textit{smooth} $f(\cdot, x)$ that \textit{nearly} achieve the optimal i.i.d. rates in~\cref{sec: hetero upper}. For example, if $f(\cdot, x)$ is $\mu$-strongly convex and $\beta$-smooth, then the excess risk of our algorithm nearly matches the i.i.d. lower bound up to a multiplicative factor of $\wt{\mathcal{O}}(\beta/\mu)$. Our algorithm is significantly more effective (in terms of excess risk) than existing ISRL-DP FL algorithms (e.g. \cite{arachchige2019local, dobbe2020, zhao2020local}): see Appendix~\ref{app: related work} for a thorough discussion of related work. 

\vspace{.2cm}
Fourth, we address \textbf{Question 2} in~\cref{sec: shuffle}: We give a positive answer to \textbf{Question 2} when silo data is i.i.d. Further, with heterogeneous silo data, the optimal central DP rates are \textit{nearly} achieved without a trusted server, if the loss function is smooth.
We summarize our results in \cref{table: summary}.

\begin{wrapfigure}{R}{
0.51\textwidth
}
\vspace{-.3in}
  \centering
  \includegraphics[
  width = 
  0.51
  \textwidth]{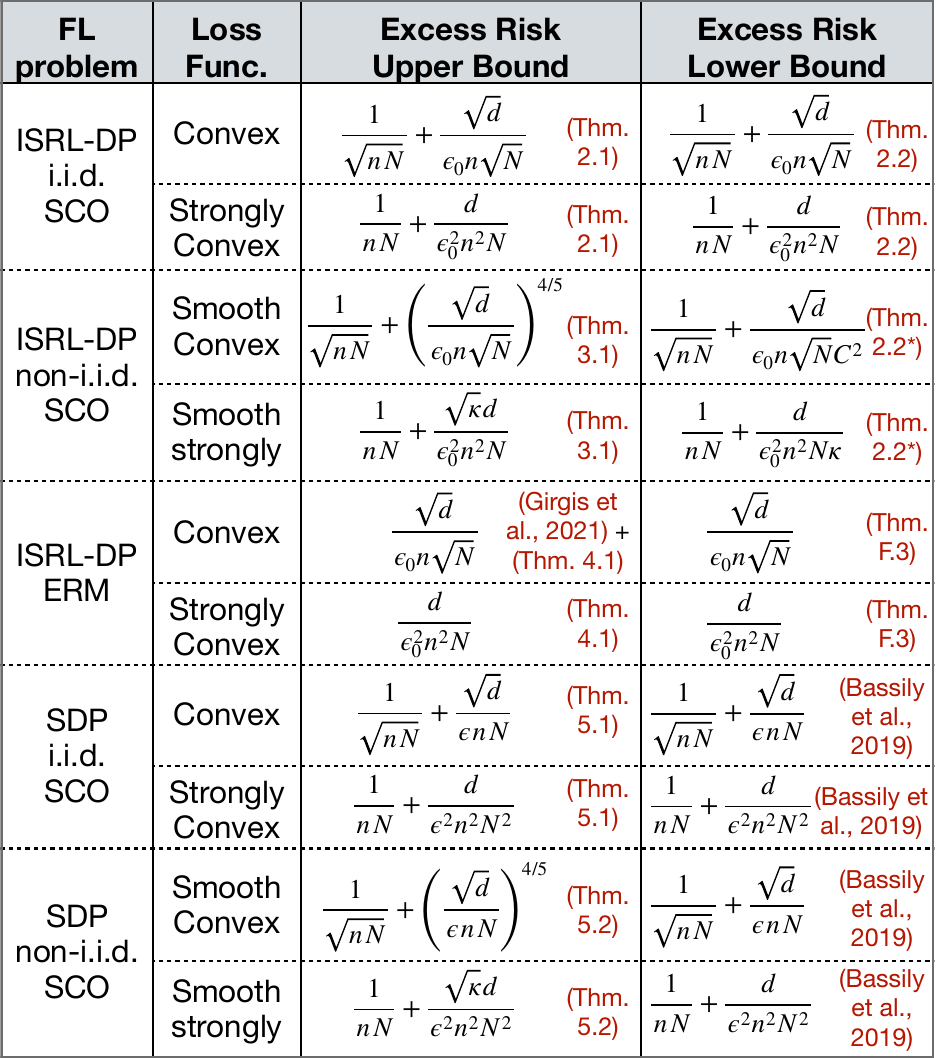}
  \vspace{-.22in}
  \caption{\footnotesize We fix $M=N$, omit logs, and $C^2 := (\epso n \sqrt{N}/\sqrt{d})^{2/5}$.   Round complexity in \cref{thm: informal accel non smooth ERM upper bound} improves on \citep{girgis21a}. *For our non-i.i.d. algorithm,~\cref{thm: SCO lower bound} only applies if $\epso = \mathcal{O}(1/n)$ or $N = \mathcal{O}(1)$: see~\cref{app: lower bounds}. 
  }
\label{table: summary}
\vspace{-.1in}
\end{wrapfigure}

\vspace{-.035in}
\subsection{Preliminaries}
\label{sec: prelims}
\vspace{-.035in}
\textbf{Differential Privacy:} Let $\mathbb{X} := \XX^{nN}$ and $\rho:\mathbb{X}^2 \to [0, \infty)$ be a  distance between databases. Two databases $\bx, \bx' \in \mathbb{X}$ are \textit{$\rho$-adjacent} if $\rho(\bx, \bx') \leq 1$. DP ensures that (with high probability) an adversary cannot distinguish between the outputs of algorithm $\Al$ when it is run on adjacent databases: 
\begin{definition}[Differential Privacy]
\label{def: DP}
Let $\epsilon \geq 0, ~\delta \in [0, 1).$ A randomized algorithm $\Al: \mathbb{X} \to \mathcal{W}$ is \textit{$(\epsilon, \delta)$-differentially private} (DP) (with respect to $\rho$) if for all $\rho$-adjacent data sets $\bx, \bx' \in \mathbb{X}$ and all measurable subsets $S \subseteq \WW$, we have 
\begin{equation}
\label{eq: DP}
\mathbb{P}(\Al(\bx) \in S) \leq e^\epsilon \mathbb{P}(\Al(\bx') \in S) + \delta.
\end{equation}
\end{definition}
\normalsize 

\begin{definition}[Inter-Silo Record-Level Differential Privacy]
\label{def: informal ISRL-DP}
A randomized algorithm $\Al$ is $(\epso, \delo)$-ISRL-DP if for all 
$\rho$-adjacent data sets $\bx, \bx'$, the full transcript of silos' sent messages 
satisfies~\cref{eq: DP}, where $\rho(\bx, \bx') := \sum_{i=1}^N \sum_{j=1}^{n} \mathbbm{1}_{\{x_{i,j} \neq x'_{i,j}\}}$ for $\bx, \bx' \in \mathbb{X}$.
\end{definition}

\begin{definition}[Shuffle Differential Privacy~\citep{prochlo, cheu2019distributed}]
\label{def: SDP}
A randomized algorithm $\Al$ is \textit{$(\epsilon, \delta)$-shuffle DP} \textit{(SDP)} if for all $\rho$-adjacent databases $\bx, \bx' \in \mathbb{X}$ and all measurable subsets $S$, the collection of all uniformly randomly permuted messages that are sent by the shuffler 
satisfies~\cref{eq: DP}, with $\rho(\bx, \bx') := \sum_{i=1}^N \sum_{j=1}^{n} \mathbbm{1}_{\{x_{i,j} \neq x'_{i,j}\}}$.
\end{definition}

\paragraph{Notation and Assumptions:} 
Let $\|\cdot\|$ be the $\ell_2$ norm and $\Pi_{\WW}(z):= \argmin_{w \in \WW}\|w - z\|^2$ denote the projection operator. Function $h: \mathcal{W} \to \mathbb{R}^m$ is \textit{$L$-Lipschitz }if $\|h(w) - h(w')\| \leq L \|w - w'\|$, $\forall w, w' \in \WW.$ A differentiable function~$h(\cdot)$ is \textit{$\beta$-smooth} if its derivative $\nabla h$ is $\beta$-Lipschitz. If $h$ is $\beta$-smooth and $\mu$-strongly convex, denote its \textit{condition number} by $\kappa := \beta/\mu$. For differentiable (w.r.t. $w$) $f(w,x)$, we denote its gradient w.r.t. $w$ by $\nabla f(w,x)$. 
Write $a \lesssim b$ if $\exists C > 0$ such that $a \leq Cb.$ Write $a = \widetilde{\mathcal{O}}(b)$ if $a \lesssim \log^2(\theta) b$ for some parameters $\theta$. Assume the following throughout: 
\begin{assumption}
\label{ass: basic}
\begin{enumerate}
     \item $\WW \subset \mathbb{R}^d$ is closed, convex, and $\|w - w'\| \leq D$, $\forall w, w' \in \WW$.
     \vspace{-.04in}
    \item $f(\cdot, x)$ is $L$-Lipschitz and convex for all $x \in \XX$. In some parts of the paper, we assume $f(\cdot, x)$ is $\mu$-strongly convex.
    \vspace{-.04in}
    \item $\sup_{w \in \WW} \mathbb{E}_{x_i \sim \mathcal{D}_i} \|\nabla f(w, x_i) - \nabla F_i(w) \|^2 \leq \phi^2$ for all $i \in [N]$.
    \vspace{-.04in}
    \item In each round $r$, a uniformly random subset $S_r$ of $M_r \in [N]$  silos is available to communicate with the server, where $\{M_r\}_{r \geq 0}$ are independent random variables with $\frac{1}{M}:= \mathbb{E}(\frac{1}{M_r})$.
\end{enumerate}
\end{assumption}
In \cref{ass: basic} part 4, the network determines $M_r$: it is not a design parameter. This assumption is more general (and realistic for cross-device FL~\citep{kairouz2019advances}) than most (DP) FL works, which usually assume $M = N$ or $M_r = M$ is deterministic. On the other hand, in cross-silo FL, typically all silos can reliably communicate in each round, i.e. $M = N$~\citep{kairouz2019advances}.

\section{Inter-Silo Record-Level DP FL with Homogeneous Silo Data}
\label{sec: ldp upper bounds}
In this section, we provide tight (up to logarithms) upper and lower bounds on the excess risk of ISRL-DP algorithms for FL with i.i.d. silo data. For 
consistency of presentation,
we assume that there is an untrusted server. However, our algorithms readily extend to peer-to-peer FL (no server), by having silos send private messages directly to each other and perform model updates themselves. 
\subsection{Upper Bounds via Noisy Distributed Minibatch SGD}
We begin with our upper bounds, obtained via \textit{Noisy Distributed Minibatch SGD (MB-SGD)}: In each round $r$, all $M_r$ available silos 
receive $w_r$ from the server and 
send noisy stochastic gradients to the server: $\widetilde{g}_r^{i} := \frac{1}{K} \sum_{j=1}^{K} \nabla f(w_r, x_{i,j}^r) + u_i,$ where $u_i \sim \mathcal{N}(0, \sigma^2 \mathbf{I}_d)$ and $x^r_{i,j}$ are drawn uniformly from $X_i$ (and then replaced). The server averages these $M_r$ reports and updates $w_{r+1} := \Pi_{\WW}[w_r - \frac{\eta_r}{M_r} \sum_{i \in S_r}\widetilde{g}_r^{i}]$.
After $R$ rounds, a weighted average of the iterates is returned: $\widehat{w}_R = \frac{1}{\Gamma_R} \sum_{r=0}^{R-1} \gamma_r w_r$ with $\Gamma_R := \sum_{r=0}^{R-1} \gamma_r.$ With
proper choices of $\{\eta_r, \gamma_r\}_{r=0}^{R-1}$, $\sigma^2$, $K$, and $R$, we have:

\begin{theorem}[Informal]
\label{thm: informal iid SCO upper bound}
Let $\epso \leq 2  \ln(2/\delo), \delo \in (0,1)$.
Then 
Noisy MB-SGD
is $(\epso, \delo)$-ISRL-DP. Moreover: 
\newline
1. If $f(\cdot, x)$ is convex, then 
\begin{equation} 
\label{eq: convex iid sco upper}
\EPL = \widetilde{\mathcal{O}}
\left(\frac{LD}{\sqrt{M}}\left(\frac{1}{\sqrt{n}} + \frac{\sqrt{d\ln(1/\delo)
}}{\epso n}\right)\right).
\end{equation} 
2. If $f(\cdot, x)$ is $\mu$-strongly convex, then 
 \begin{equation} 
\label{eq: sc iid sco upper}
\EPL = \widetilde{\mathcal{O}}\left(\frac{L^2}{\mu M}\left(\frac{1}{n} + \frac{d 
\ln(1/\delo)
}{\epso^2 n^2} \right) \right).
\end{equation} 
\end{theorem}

\noindent The first terms in each of \cref{eq: convex iid sco upper} and \cref{eq: sc iid sco upper} ($LD/\sqrt{Mn}$ for convex and $L^2/\mu Mn$ for strongly convex) are bounds on the \textit{uniform stability} \citep{bousquet2002stability} of
Noisy MB-SGD. We use these uniform stability bounds to bound the generalization error of our algorithm. Our stability bound for $\mu > 0$ in~\cref{lem: convex unif stability} is novel even for $N=1$.
The second terms in~\cref{eq: convex iid sco upper} and~\cref{eq: sc iid sco upper} are  bounds on the empirical risk of 
the algorithm. 
We use Nesterov smoothing \citep{nesterov2005smooth} to extend our bounds to non-smooth losses. This requires us to choose a different stepsize and $R$ from \cite{bft19} (for $N=1, \mu = 0$), which eliminates the restriction on the smoothness parameter that appears in \cite[Theorem 3.2]{bft19}.\footnote{In \citet{bft19}, the number of iterations is denoted by $T$, rather than $R$.} Privacy of Noisy MB-SGD follows from the Gaussian mechanism~\cite[Theorem A.2]{dwork2014}, privacy amplification by subsampling~\citep{ullman2017}, and advanced composition~\cite[Theorem 3.20]{dwork2014} or moments accountant~\cite[Theorem 1]{abadi16}.
See~\cref{app: proof of iid upper bound} for the detailed proof.

\subsection{Lower Bounds}
We provide excess risk lower bounds for the case $M=N$, establishing the optimality of Noisy MB-SGD for two function classes: $\mathcal{F}_{L, D} := \{f ~|~ f(\cdot, x) \;\text{is convex,} ~L\text{-Lipschitz,} ~\forall  x \in \XX \, ~\text{and} ~\|w - w'\| \leq D~\forall w,w' \in \WW %
\}$;
and $\mathcal{G}_{\mu, L, D} := \{f \in \mathcal{F}_{L,D} ~|~f(\cdot, x) ~\text{is} ~\mu\text{-strongly convex}, \; \forall x \in \XX\}$. 
We restrict attention to distributions $\mathcal{D}$ satisfying~\cref{ass: basic}, part 3. 
The $(\epso, \delo)$-ISRL-DP algorithm class $\mathbb{A}_{(\epso, \delo), C}$  contains all \textit{sequentially interactive} algorithms and all \textit{fully interactive}, \textit{$C$-compositional} (defined in~\cref{app: lower bounds}) algorithms.\footnote{\textit{Sequentially interactive} algorithms can query silos adaptively in sequence, but cannot query any one silo more than once. \textit{Fully interactive} algorithms can query each silo any number of times. See \citep{joseph2019} for further discussion.} If $\Al$ is sequentially interactive or $\mathcal{O}(1)$-compositional, write $\Al \in \mathbb{A}_{(\epso, \delo)}
$. The vast majority of DP algorithms in the literature are $C$-compositional.  %
For example, any algorithm that uses the strong composition theorems of~\citep{dwork2014, kairouz15, abadi16} for its privacy analysis is $1$-compositional: see~\cref{app: AC compositional}. In particular, 
Noisy MB-SGD is $1$-compositional, hence it is in $\mathbb{A}_{(\epso, \delo)}$.

\begin{theorem}[Informal]
\label{thm: SCO lower bound}
Let 
~$\epso \in (0, \sqrt{N}], ~\delo = o(1/nN)$, $M=N$, and $\Al \in \mathbb{A}_{(\epso, \delo), C}$. Then: \\
1. There exists a $f \in \FF$ and a distribution $\mathcal{D}$ such that for $\bx \sim \mathcal{D}^{nN}$, we have:
\small
\begin{equation*}
\small
\mathbb{E}F(\Al(\bx)) - F^* = \widetilde{\Omega}\left(\frac{\phi D}{\sqrt{Nn}} + LD\min\left\{1, \frac{\sqrt{d}}{\epso n \sqrt{N} 
C^2
} \right\}\right).
\end{equation*}
\normalsize
\vspace{-.03in}
2. There exists a $\mu$-smooth $f \in \GG$ and distribution $\mathcal{D}$ such that for $\bx \sim \mathcal{D}^{nN}$, we have
\small
\vspace{-.03in}
\begin{equation*}
\label{sc lower bound}
\small
\mathbb{E}F(\Al(\bx)) - F^* = \widetilde{\Omega}\left(
\frac{\phi^2}{\mu nN} + LD\min\left\{1, \frac{d}{\epso^2 n^2 N 
C^4
}\right\}
\right).
\end{equation*}
Further, if $\Al \in \mathbb{A}$, then the above lower bounds hold with $C = 1$.
\end{theorem}

The lower bounds for $\mathbb{A}_{(\epso, \delo)}$ are nearly tight\footnote{Up to logarithms, a factor of $\phi/L$, and for strongly convex case--a factor of $\mu D/L$. If $d > \epso^2 n^2 N$, then the ISRL-DP algorithm that outputs any $w_0 \in \WW$ attains the matching upper bound $\mathcal{O}(LD).$} 
by 
\cref{thm: informal iid SCO upper bound}.
The first term in each of the lower bounds is the optimal non-private rate; the second terms are proved in~\cref{app: proof of lower bounds}. In particular, if $d \lesssim \epso^2 n$, then the non-private term in each bound is dominant, so the ISRL-DP rates match the respective non-private rates, resulting in ``privacy for free''~\citep{ny}. 
The ISRL-DP rates sit between the rates for LDP and CDP: higher trust allows for higher accuracy. For example, for $\FF$, the LDP rate is $\Theta(LD \min\{1, \sqrt{d}/\epso \sqrt{N}\})$~\citep{duchi13}, and the CDP rate is $\Theta(\phi^2/\sqrt{Nn} + LD\min\{1, \sqrt{d}/\epso n N\})$~\citep{bft19}. 

\vspace{.2cm}
\cref{thm: SCO lower bound} is more generally applicable than existing LDP and CDP lower bounds. When $n=1$, ISRL-DP is equivalent to LDP and~\cref{thm: SCO lower bound} recovers the LDP lower bounds~\citep{duchi13, smith17}. However, \cref{thm: SCO lower bound} holds for a wider class of algorithms than the lower bounds of~\citep{duchi13, smith17}, which were limited to \textit{sequentially interactive} LDP algorithms. When $N=1$, ISRL-DP is equivalent to CDP and \cref{thm: SCO lower bound} recovers the CDP lower bounds~\citep{bft19}. 

\vspace{.2cm}
Obtaining our more general lower bounds under the more complex notion of ISRL-DP is challenging. The lower bound approaches of \citet{duchi13, smith17, duchi19} are heavily tailored to LDP and sequentially interactive algorithms. Further, the applicability of the standard CDP lower bound framework (e.g. \citep{bst14, bft19}) to ISRL-DP FL is unclear. 

\vspace{.2cm}
In light of these challenges, we take a different approach to proving~\cref{thm: SCO lower bound}: We first analyze the \textit{central} DP guarantees of $\Al$ when silo data sets $X_1, \cdots, X_N$ are shuffled in each round, showing that CDP amplifies to $\epsilon = \widetilde{O}(\frac{\epso}{\sqrt{N}})$. We could not have concluded this from existing amplification results~\citep{efm20, fmt20, balle2019privacy, cheu2019distributed, balle2020privacy} since these results are all limited to sequentially interactive LDP algorithms and $n=1$. Thus, we prove \textit{the first privacy amplification by shuffling bound for fully interactive ISRL-DP FL algorithms}. Then, we apply the CDP lower bounds of \cite{bft19} to $\Al_s$, the ``shuffled'' version of $\Al$. This implies that the shuffled algorithm $\Al_s$ has excess population loss that is lower bounded as in \cref{thm: SCO lower bound}. Finally, we observe that the i.i.d. assumption implies that $\Al_s$ and $\Al$ have the same expected population loss. Note that our proof techniques can also be used to obtain ISRL-DP lower bounds for other problems in which a CDP lower bound is known. 

\section{Inter-Silo Record-Level DP FL with Heterogeneous Silo Data}
\label{sec: hetero upper}
Consider the \textbf{non-i.i.d.} FL problem, where $F_i(w)$ takes the  form \cref{eq: SCO} for some unknown distributions $\DD_i$ on $\XX_i$, $\forall i$. 
The applicability of the uniform stability approach that we used to obtain our i.i.d. upper bounds is unclear in this setting.
Instead, we directly minimize $F$
by modifying Noisy MB-SGD as follows: \\
\textbf{1. }We draw disjoint batches of 
$K$ local samples \textit{without replacement} from each silo
and set $R = \lfloor \frac{n}{K} \rfloor$. Thus,
stochastic gradients are independent across iterations, so our bounds apply to $F$.  \\
\textbf{2. }We use \textit{acceleration}~\citep{ghadimilan1} to increase the convergence rate.\\ 
\textbf{3. }To provide ISRL-DP, we re-calibrate $\sigma^2$ and
apply \textit{parallel composition} \citep{mcsherry2009privacy}. 
\\
After these modifications, we call the resulting algorithm \textit{One-Pass Accelerated Noisy MB-SGD}. It is described in \cref{alg: accel dp MB-SGD}. In the strongly convex case, we use a \textit{multi-stage implementation} of One-Pass Accelerated Noisy MB-SGD (inspired by~\citep{ghadimilan2}) to further expedite convergence: see~\cref{app: multistage} for details. 
Carefully tuning step sizes, $\sigma^2$, and $K$ yields:
\begin{theorem}[$M=N$ case]
\label{thm: Convex hetero SCO MB-SGD upper}
Let $f(\cdot, x)$ be
$\beta$-smooth for all $x \in \XX.$ 
Assume $\epso \leq 8\ln(1/\delo), ~\delo \in (0,1)$. Then One-Pass Accelerated Noisy MB-SGD is $(\epso, \delo)$-ISRL-DP. Moreover, if $M=N$, then: \\
1. For convex $f(\cdot, x)$, we have
\begin{equation} 
 \EPX \lesssim \frac{\phi D}{\sqrt{Nn}} + \left(\frac{\beta^{1/4} L D^{3/2}  \sqrt{d \ln(1/\delo)}}{\epso n \sqrt{N}}\right)^{4/5}.
    \label{eq: convex noniid bound}
\end{equation} 
2. For $\mu$-strongly convex $f(\cdot, x)$ with $\kappa = \frac{\beta}{\mu}$, we have
\begin{equation} 
 \EPX = \widetilde{\mathcal{O}}\left(
\frac{\phi^2}{\mu nN} + \sqrt{\kappa}\frac{L^2}{\mu}\frac{d \ln(1/\delo)}{\epso^2 n^2 N} 
 \right).
    \label{eq: sc noniid bound}
\end{equation} 
\end{theorem}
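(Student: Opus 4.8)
The plan is to prove the privacy claim by \emph{parallel composition} (exploiting the one-pass, without-replacement structure) and the utility claims by invoking the convergence theory of accelerated stochastic approximation with unbiased, bounded-variance oracles, then optimizing the batch size $K$ at the end.

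\emph{Privacy.} Since we draw disjoint size-$K$ batches without replacement and set $R=\lfloor n/K\rfloor$, the round-$r$ message of client $i$, $\widetilde g_r^i=\frac1K\sum_j\nabla f(w_r,x_{i,j}^r)+u_i$ with $u_i\sim\mathcal N(0,\sigma^2 I_d)$, depends on $X_i$ only through its $r$-th block of $K$ samples, and these blocks partition $X_i$. Conditioned on the previous transcript $\bz_{1:r-1}$, the map $X_i\mapsto\widetilde g_r^i$ is the Gaussian mechanism applied to a minibatch-average gradient whose $\ell_2$-sensitivity to changing one sample in the block is $2L/K$ by \cref{ass: Lip}; hence, with $\sigma^2=32L^2\ln(1.25/\delo)/(\epso^2K^2)$ and the hypothesis $\epso\le 8\ln(1/\delo)$, each round's message is $(\epso,\delo)$-DP with respect to its block. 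A neighboring $X_i'$ differs from $X_i$ in exactly one block, so only one round's message changes; therefore parallel composition \cite{mcsherry2009privacy} (rather than sequential/advanced composition) shows the full client transcript $(\widetilde g_1^i,\dots,\widetilde g_R^i)$ is $(\epso,\delo)$-DP conditionally on the other clients' data, i.e. One-Pass Accelerated Noisy MB-SGD is $(\epso,\delo)$-LDP in the sense of \cref{def: LDP}. This is exactly the step that removes the $\sqrt{R\ln(1/\delo)}$ noise inflation present in \cref{thm: informal iid SCO upper bound}.

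\emph{Utility.} With $M_r=N$, the aggregated gradient $\bar g_r:=\frac1N\sum_{i=1}^N\widetilde g_r^i$ is an unbiased estimate of $\nabla F(w_r)$, since $\expec[\frac1K\sum_j\nabla f(w_r,x_{i,j}^r)\mid w_r]=\nabla F_i(w_r)$ (samples drawn from $\DD_i$), $\frac1N\sum_i\nabla F_i=\nabla F$, and the $u_i$ are mean zero. By independence across clients, its conditional variance splits into a sampling part and a noise part, $\expec[\|\bar g_r-\nabla F(w_r)\|^2\mid w_r]\le \frac{\phi^2}{NK}+\frac{d\sigma^2}{N}=\frac{\phi^2}{NK}+\frac{32L^2d\ln(1.25/\delo)}{N\epso^2K^2}=:\widetilde\sigma^2$. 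Feeding these oracles into the projected accelerated MB-SGD of \cite{ghadimilan1} (\cref{alg: accel dp MB-SGD}) for $R=\lfloor n/K\rfloor$ iterations gives, in the convex case, $\EPL\lesssim \beta D^2/R^2+\widetilde\sigma D/\sqrt R$; substituting $R=n/K$ yields $\beta D^2K^2/n^2+D\sqrt{\phi^2/(Nn)+32L^2d\ln(1.25/\delo)/(N\epso^2Kn)}$. Balancing the first term (increasing in $K$) against the DP-noise component of the second (decreasing in $K$) gives $K=\widetilde\Theta((L\sqrt d\, n^{3/2}/(\beta D\epso\sqrt N))^{2/5})$, hence $R=n/K=\widetilde\Theta((\beta D\epso n\sqrt N/(L\sqrt d))^{2/5})$, and plugging back reproduces \cref{eq: convex noniid bound} (the irreducible $\phi D/\sqrt{Nn}$ term coming from the sampling variance). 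For the $\mu$-strongly convex case, one uses the multi-stage/restarted accelerated SA variant, whose guarantee after $R$ iterations is $\EPL\lesssim \beta D^2\exp(-R/\sqrt\kappa)+\widetilde\sigma^2/(\mu R)$; choosing $R=\widetilde\Theta(\sqrt\kappa)$ and $K=n/R$ makes the transient term lower order and gives $\widetilde\sigma^2/(\mu R)=\phi^2/(\mu Nn)+\widetilde O(\sqrt\kappa)\,L^2d\ln(1/\delo)/(\mu N\epso^2n^2)$, which is \cref{eq: sc noniid bound}.

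\emph{Main obstacle.} The privacy argument is routine once the parallel-composition observation is made. The delicate part is the utility analysis: (i) one must verify that \cref{alg: accel dp MB-SGD} attains the accelerated rates with general unbiased oracles of variance $\widetilde\sigma^2$ over a bounded convex $\WW$, i.e. that the projections, the weights $\gamma_r$, and the additive Gaussian perturbations are all compatible with the Ghadimi--Lan recursion; and (ii) in the strongly convex case one must check that the one-pass budget $R=\lfloor n/K\rfloor$ is large enough to run $\widetilde\Theta(\sqrt\kappa)$ restart stages (which needs $n\gtrsim\sqrt\kappa$ up to logs) and track the restart schedule carefully — this is precisely where the extra $\sqrt\kappa$ factor relative to the i.i.d. bound appears. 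The batch-size optimization is then just calculus.
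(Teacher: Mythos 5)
Your proposal is correct and takes essentially the same route as the paper: privacy via parallel composition over the disjoint-block partition plus the Gaussian mechanism with sensitivity $2L/K$, and utility by plugging the oracle variance $\widetilde\sigma^2 = \phi^2/(NK) + d\sigma^2/N$ (with $M=N$ so the heterogeneity term $\upsilon^2$ vanishes) into the Ghadimi--Lan accelerated rates (regularized AC-SA for convex, multi-stage restarts for strongly convex) with $T=R=\lfloor n/K\rfloor$, then tuning $K$. The one cosmetic difference is that the paper actually runs AC-SA on the Tikhonov-regularized $\widetilde F(w)=\hat F(w)+\tfrac{\lambda}{2}\|w\|^2$ with $\lambda=V/(2D\sqrt R)$ to invoke the simplified rate $\beta D^2/R^2 + V D/\sqrt R$ (\cref{lem: woodworth lem 4}); you invoke the same rate directly from the accelerated method without naming the regularization, which is fine at this level of detail. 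You also correctly flag the implicit $n\gtrsim\sqrt\kappa$ constraint in the strongly convex case, which the paper leaves implicit.
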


\noindent Remarkably, the bound~\cref{eq: sc noniid bound} nearly matches the optimal \textit{i.i.d.} bound~\cref{eq: sc iid sco upper} up to the factor $\widetilde{\mathcal{O}}(\sqrt{\kappa})$. In particular, \textit{for well-conditioned loss functions, our algorithm achieves the optimal i.i.d. rates even when silo data is arbitrarily heterogeneous}. The gap between the  bound~\cref{eq: convex noniid bound} and the i.i.d. bound~\cref{eq: convex iid sco upper} is $\mathcal{O} ((\sqrt{d}/\epso n \sqrt{N})^{1/5})$. Closing the gaps between the non-i.i.d. upper bounds in~\cref{thm: Convex hetero SCO MB-SGD upper} and the i.i.d. lower bounds in~\cref{thm: SCO lower bound} is left as an open problem. Compared to previous upper bounds,
\cref{thm: Convex hetero SCO MB-SGD upper} is a major improvement: see~\cref{app: related work} for details. 

\begin{algorithm}[ht]
\caption{Accelerated Noisy MB-SGD}
\label{alg: accel dp MB-SGD}
\begin{algorithmic}[1]
\STATE {\bfseries Input:}~ 
Data $X_i \in \XX_i^{n}, ~i \in [N]$, strong convexity modulus $\mu \geq 0$, noise parameter $\sigma^2$, iteration number $R \in \mathbb{N}$, batch size $K \in [n]$, step size parameters $\{\eta_r \}_{r \in [R]}, \{\alpha_r \}_{r \in [R]}$. 
 \STATE  Initialize $w_0^{ag} = w_0 \in \mathcal{W}$ and $r = 1.$
 \FOR{$r \in [R]$}
 \STATE Server updates and broadcasts $w_r^{md} = \frac{(1- \alpha_r)(\mu + \eta_r)}{\eta_r + (1 - \alpha_r^2)\mu}w_{r-1}^{ag} + \frac{\alpha_r[(1-\alpha_r)\mu + \eta_r]}{\eta_r + (1-\alpha_r^2)\mu}w_{r-1}$
 \FOR{$i \in S_r$ \textbf{in parallel}} 
 \STATE Silo $i$ draws $\{x_{i,j}^r\}_{j=1}^k$ 
 from $X_i$ (\textit{replace samples for ERM}) and noise $u_i \sim \mathcal{N}(0, \sigma^2 \mathbf{I}_d).$
 \STATE Silo $i$ computes $\widetilde{g}_r^{i} := \frac{1}{K} \sum_{j=1}^{K} \nabla f(w_r^{md}, x_{i,j}^r) + u_i.$
 \ENDFOR
 \STATE Server aggregates $\widetilde{g}_{r} := \frac{1}{M_r} \sum_{i \in S_r} \widetilde{g}_r^{i}$ and updates
 $w_{r} := \argmin_{w \in \mathcal{W}}\left\{\alpha_r \left[\langle \widetilde{g}_{r}, w\rangle + \frac{\mu}{2}\|w_r^{md} - w\|^2 \right] + \left[(1-\alpha_r) \frac{\mu}{2} + \frac{\eta_r}{2}\right]\|w_{r-1} - w\|^2\right\}.$
 \STATE Server updates and broadcasts $w_{r}^{ag} = \alpha_r w_r + (1-\alpha_r)w_{r-1}^{ag}.$
 \ENDFOR \\
\STATE {\bfseries Output:} $w_R^{ag}.$
\end{algorithmic}
\end{algorithm}

\vspace{.2cm}
In~\cref{app: non-iid proof}, we provide a general version (and proof) of \cref{thm: Convex hetero SCO MB-SGD upper} for $M \leq N$. If $M < N$ but $M$ is sufficiently large or silo heterogeneity sufficiently small, then the same bounds in \cref{thm: Convex hetero SCO MB-SGD upper} hold with $N$ replaced by $M$. Intuitively, the $M < N$ case is harder when data is highly heterogeneous, since stochastic estimates of $\nabla F$ will have larger variance. In~\cref{lem: gradient variance} (\cref{app: non-iid proof}), we use a combinatorial argument to bound the variance of stochastic gradients. 
\section{Inter-Silo Record-Level DP Federated ERM
}
\label{sec: ERM}
In this section, we provide an ISRL-DP FL algorithm, \textit{Accelerated Noisy MB-SGD}, with 
optimal excess \textit{empirical} risk. 
The difference between our proposed algorithm and \textit{One-Pass} Accelerated Noisy MB-SGD is that silo $i$ now samples from $X_i$ \textit{with replacement} in each round: see line 6 in \cref{alg: accel dp MB-SGD}. This allows us to a) amplify privacy via local subsampling and advanced composition/moments accountant, allowing for smaller $\sigma^2$; and b) run more iterations of our algorithm to better optimize $\hf$. These modifications are necessary for obtaining the optimal rates for federated ERM. We again employ Nesterov smoothing~\citep{nesterov2005smooth} to extend our results to non-smooth $f$.

\begin{theorem}[Informal]
\label{thm: informal accel non smooth ERM upper bound}
Let $\epso \leq 2  \ln(2/\delo), \delo \in (0,1)$.
Then, there exist algorithmic parameters such that
\cref{alg: accel dp MB-SGD} is $(\epso, \delo)$-ISRL-DP and: \\
1. If $f(\cdot, x)$ is convex, then 
\begin{equation} 
\label{eq: convex acccel non smooth ERM upper}
\ERX =  \widetilde{\mathcal{O}}\left(LD\frac{\sqrt{d \ln(1/\delo)
}}{\epso n \sqrt{M}}\right). 
\end{equation}  
\noindent 2. If $f(\cdot, x)$ is $\mu$-strongly convex 
then 
\begin{equation}
\label{eq: sc accel non smooth ERM upper}
\ERX = \widetilde{\mathcal{O}}
\left(\frac{L^2}{\mu}\frac{d 
\ln(1/\delo)
}{\epso^2 n^2 M}\right). 
\end{equation} 
\end{theorem}
See~\cref{app: ERM upper} for the formal statement and proof. With non-random $M_r = M$, \cite{girgis21a} provides an
upper bound for (non-strongly) convex ISRL-DP ERM that nearly matches
the one we provide in~\cref{eq: convex acccel non smooth ERM upper}.
However,~\cref{alg: accel dp MB-SGD} achieves the upper bounds for convex and strongly convex loss in fewer rounds of communication
than 
\citet{girgis21a}: see~\cref{app: ERM upper} for details. In~\cref{app: ERM lower bounds}, we get matching lower bounds, establishing the optimality of~\cref{alg: accel dp MB-SGD} for ERM. 

\vspace{-.05in}

\section{Shuffle DP Federated Learning}
\label{sec: shuffle}
Assume access to a secure shuffler and fix $M_r = M$, $S_r = [M]$ for simplicity. In each round $r$, the shuffler receives $MK$ randomly subsampled noisy stochastic gradients---with $K$ coming from each available silo: $(Z^{(1)}_r, \ldots Z^{(MK)}_r)$. 
The shuffler then draws a uniformly random permutation of $[MK]$, $\pi$, and sends $(Z^{(\pi(1))}_r, \cdots, Z^{(\pi(MK))}_r)$ to the server for aggregation. 
When this shuffling procedure is combined with ISRL-DP Noisy Distributed MB-SGD, we obtain the following result via \textit{privacy amplification by shuffling}~\citep{fmt20}:
\begin{theorem}[i.i.d.]
\label{thm: sdp iid sco upper bound}
Let 
$\epsilon \leq \ln(2/\delta),~\delta \in (0,1)$.
Then there are choices of algorithmic parameters such that Shuffled 
Noisy MB-SGD is $(\epsilon, \delta)$-SDP. Moreover:
\newline
1. If $f(\cdot, x)$ is convex, then 
\vspace{-0.07in}
\begin{equation*}
\label{eq: sdp convex iid sco upper informal}
\EPL = \widetilde{\mathcal{O}}\left(LD \left(\frac{1}{\sqrt{nM}} + \frac{\sqrt{d
\ln(1/\delta)
}}
{\epsilon n N}\right)\right). 
\end{equation*} 
2. If $f(\cdot, x)$ is $\mu$-strongly convex, then
\begin{equation*}
\label{eq: sdp sc sco}
\EPL = \widetilde{\mathcal{O}}\left(\frac{L^2}{\mu}\left(\frac{1}{nM} + \frac{d
\ln(1/\delta)
}{\epsilon^2 n^2 N^2} \right) \right). 
\end{equation*}
\end{theorem}
\vspace{-0.04in}
See~\cref{proof of sdp sco upper bound} for details and proof. When $M=N$, the rates in \cref{thm: sdp iid sco upper bound} match the \textit{optimal central DP rates}~\citep{bft19}, and are attained \textit{without a trusted server}. Thus, with shuffling, 
\textit{Noisy MB-SGD is simultaneously optimal for i.i.d. FL in the inter-silo and central DP models}. 

\vspace{.2cm}
If silo data is \textit{heterogeneous}, then we use a shuffle DP variation of \textit{One-Pass Accelerated Noisy MB-SGD}, described in~\cref{app: sdp hetero}, to get: 
\begin{theorem}[Non-i.i.d.]
\label{thm: sdp hetero}
Assume $f(\cdot, x)$ is $\beta$-smooth~$\forall x \in \XX$. Let $\epsilon \leq 15, \delta \in (0, \frac{1}{2})$. Then, there is an $(\epsilon, \delta)$-SDP variation of One-Pass Accelerated Noisy MB-SGD such that for $M=N$, we have: \\
1. If $f(\cdot, x)$ is convex, then 
\begin{equation} 
 \EPX \lesssim \frac{\phi D}{\sqrt{Nn}} + \left(\frac{\beta^{1/4} L D^{3/2}  \sqrt{d} \ln(d/\delta)}{\epsilon n N}\right)^{4/5}.
    \label{eq: sdp convex noniid bound}
\end{equation} 
2. If $f(\cdot, x)$ is $\mu$-strongly convex with $\kappa = \frac{\beta}{\mu}$, then
 \begin{equation} 
 \EPX = \widetilde{\mathcal{O}}\left(
\frac{\phi^2}{\mu nN} + \sqrt{\kappa}\frac{L^2}{\mu}\frac{d \ln(1/\delta)}{\epsilon^2 n^2 N^2} 
 \right).
    \label{eq: sdp sc noniid bound}
\end{equation}
\end{theorem}
The bound~\cref{eq: sdp sc noniid bound} matches the optimal \textit{i.i.d.}, \textit{central DP} bound~\citep{bft19} up to $\widetilde{\mathcal{O}}(\sqrt{\kappa})$. Hence, if $f$ is not ill-conditioned, then~\cref{eq: sdp sc noniid bound} shows that \textit{it is not necessary to have either i.i.d. data or a trusted server to attain the optimal CDP rates}. 
See~\cref{app: sdp hetero} for proof and the $M < N$ case.

\section{Numerical Experiments}
\label{sec: experiments}
We validate our theoretical findings with three sets of experiments. Our results indicate that ISRL-DP MB-SGD yields accurate, private models in practice. Our method performs well even relative to \textit{non-private} Local SGD, a.k.a. FedAvg~\citep{mcmahan2017originalFL}, and outperforms ISRL-DP Local SGD for most privacy levels. 
\cref{app: experiment details} contains details of experiments, and additional results.\footnote{We also describe the ISRL-DP Local SGD baseline in~\cref{app: experiment details}. Code for all experiments is available at \url{https://github.com/lowya/Private-Federated-Learning-Without-a-Trusted-Server}.} 

\vspace{.2cm}
\noindent\underline{\textbf{Binary Logistic Regression with MNIST:}}
Following \citet{woodworth2020}, we consider binary logistic regression on MNIST~\citep{mnist}.
The task is to classify digits as odd or even. Each of 25 odd/even digit pairings is assigned to a silo ($N = 25$). 
\cref{fig:MNIST-testerr v eps} shows that 
\textit{ISRL-DP MB-SGD outperforms (non-private) Local SGD for $\epsilon \geq  12$} and outperforms ISRL-DP Local SGD. %

\vspace{.2cm}
\noindent \underline{\textbf{Linear Regression with Health Insurance Data:}}
We divide the data set \citep{kaggle} ($d=7, n \times N = 1338$) into $N$ silos based on the level of the target: patient medical costs. \cref{fig:linreg plot} shows that ISRL-DP MB-SGD  outperforms ISRL-DP Local SGD, especially in the high privacy regime $\epsilon \leq 2$.
\textit{For $\epsilon \geq 2$, ISRL-DP MB-SGD is in line with (non-private) Local SGD.}%

\vspace{.2cm}
\noindent\textbf{\underline{Multiclass Logistic Regression with Obesity Data:}} We train a softmax regression model for a $7$-way classification task with an obesity data set~\citep{obesitydataset}. We divide the data into $N = 7$ heterogeneous silos based on the value of the target variable, obesity level, which takes $7$ values: Insufficient Weight, Normal Weight, Overweight Level I, Overweight Level II, Obesity Type I, Obesity Type II and Obesity Type III. As shown in~\cref{fig: obesity}, \textit{our algorithm significantly outperforms ISRL-DP Local SGD by $10-30\%$ across all privacy levels}. 
\vspace{-.07in}
\begin{figure}[H]
\begin{center}
\vspace{-.2cm}
        \subfigure{
                \includegraphics[width=.45\textwidth        ]{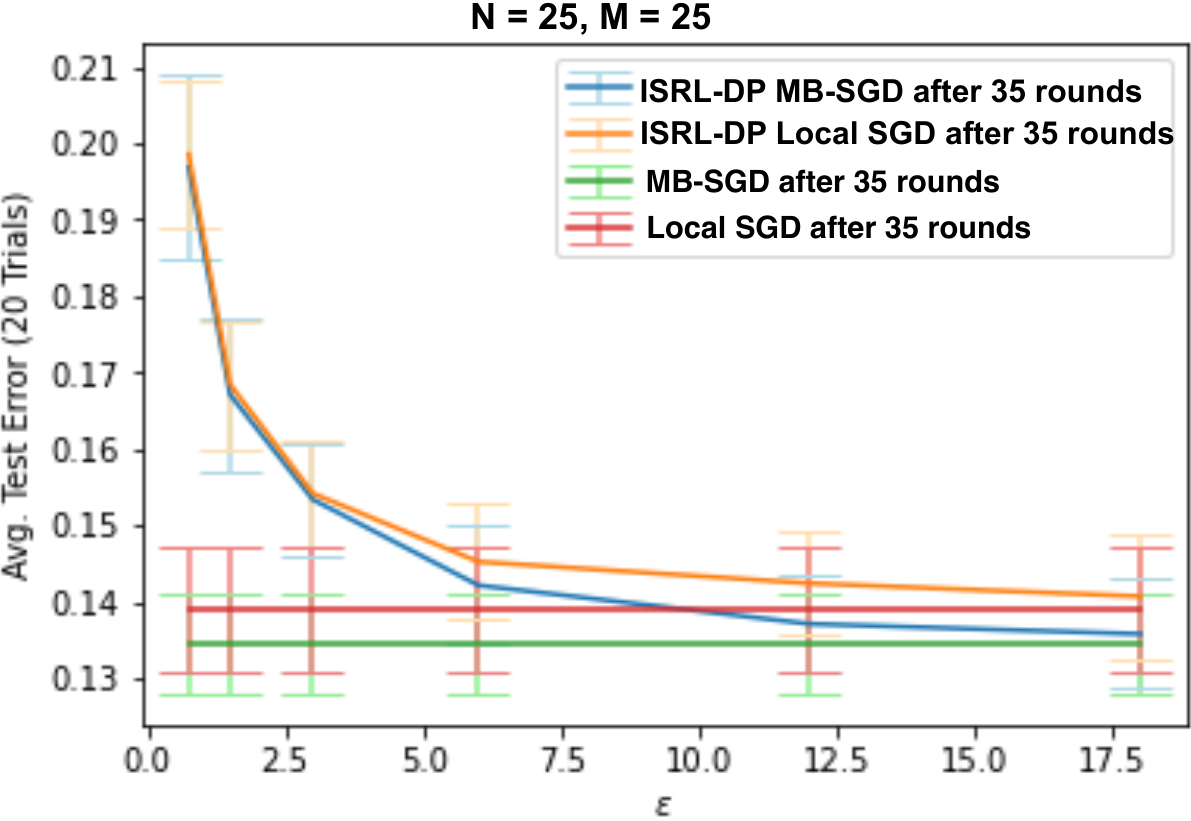}
               
        \vspace{-0.3cm}
        }
        \subfigure{
        \includegraphics[width=.45\textwidth]
                {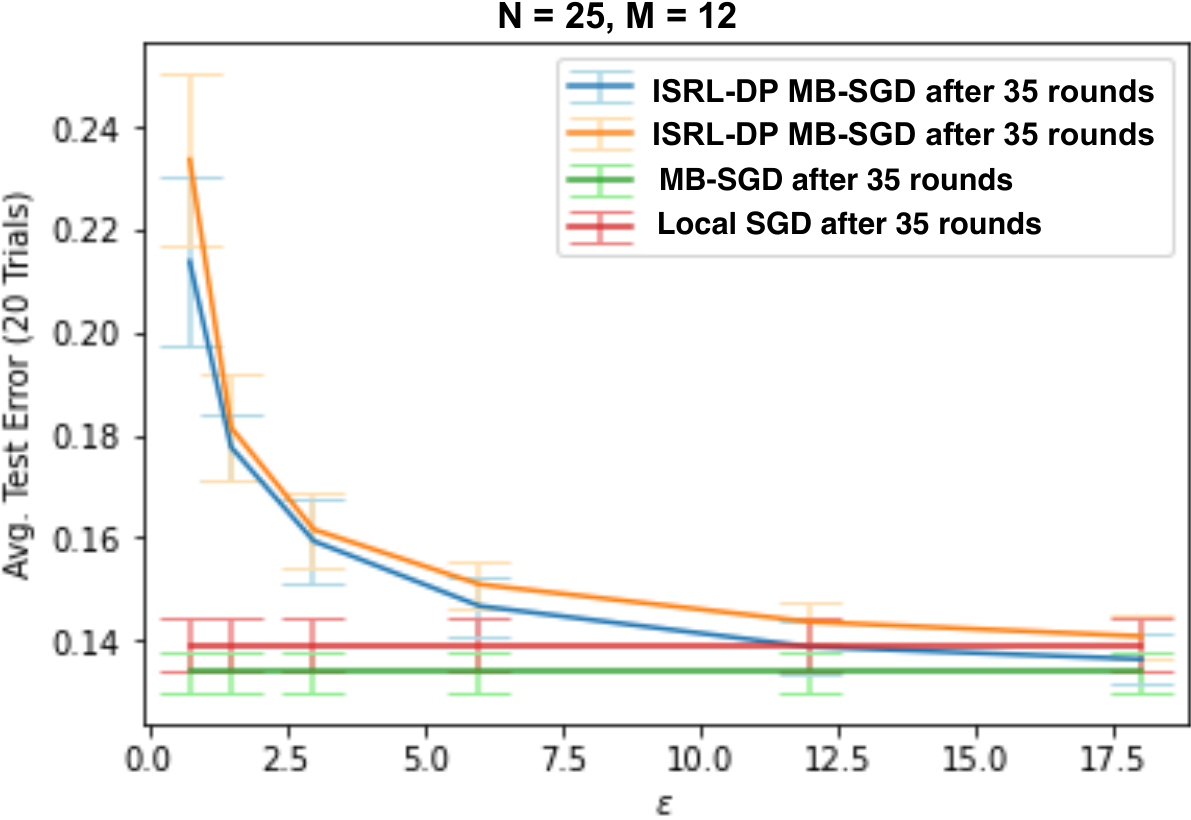}
        }
      \end{center}   
      \vspace{-.75cm}
    \caption{ 
    \footnotesize
      Binary logistic regression on MNIST. $\delta = 1/n^2.$ $90\%$ error bars are shown.
  \label{fig:MNIST-testerr v eps}}
\vspace{-.1in}
\end{figure}
\vspace{-.07in}
\begin{figure}[H]
\begin{center}
\vspace{-.4cm}
        \subfigure{
                \includegraphics[width=.45\textwidth        ]{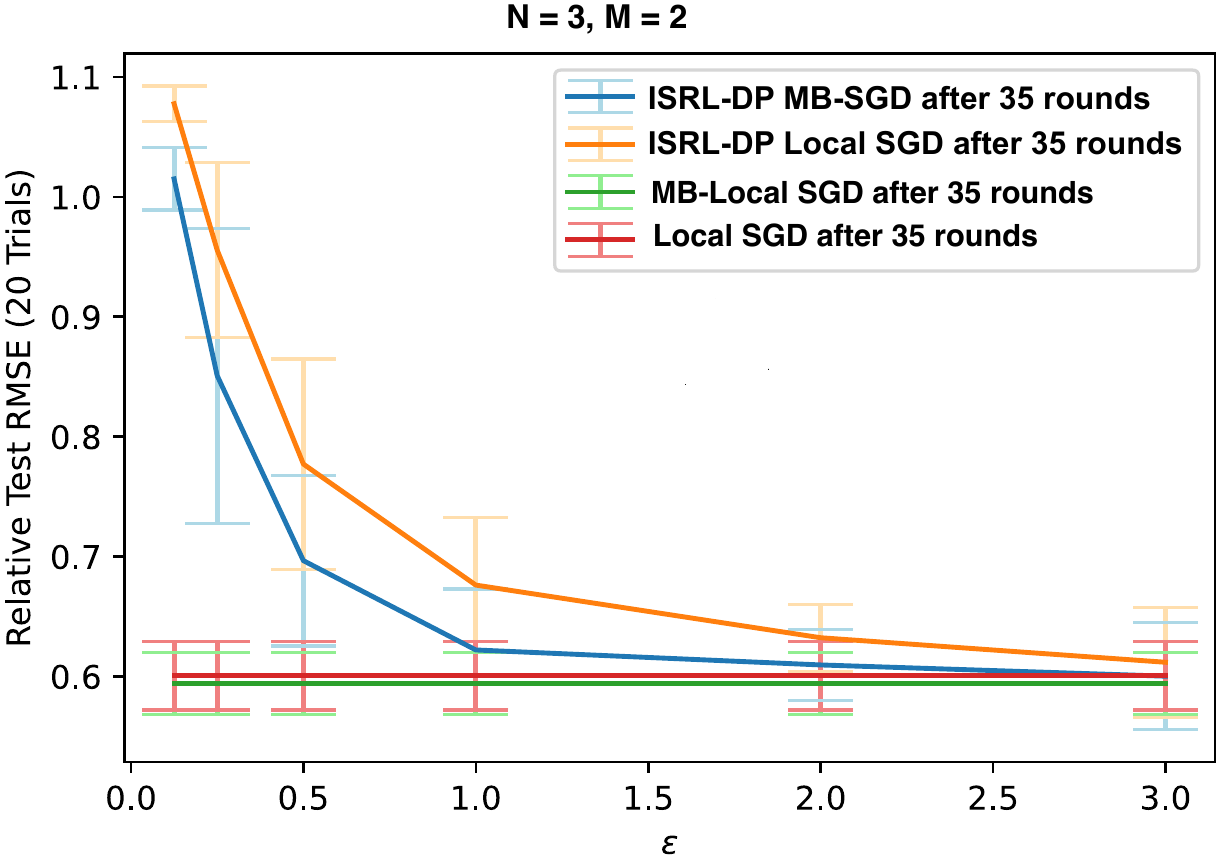}
        }
        \subfigure{
        \includegraphics[width=.45\textwidth]
                {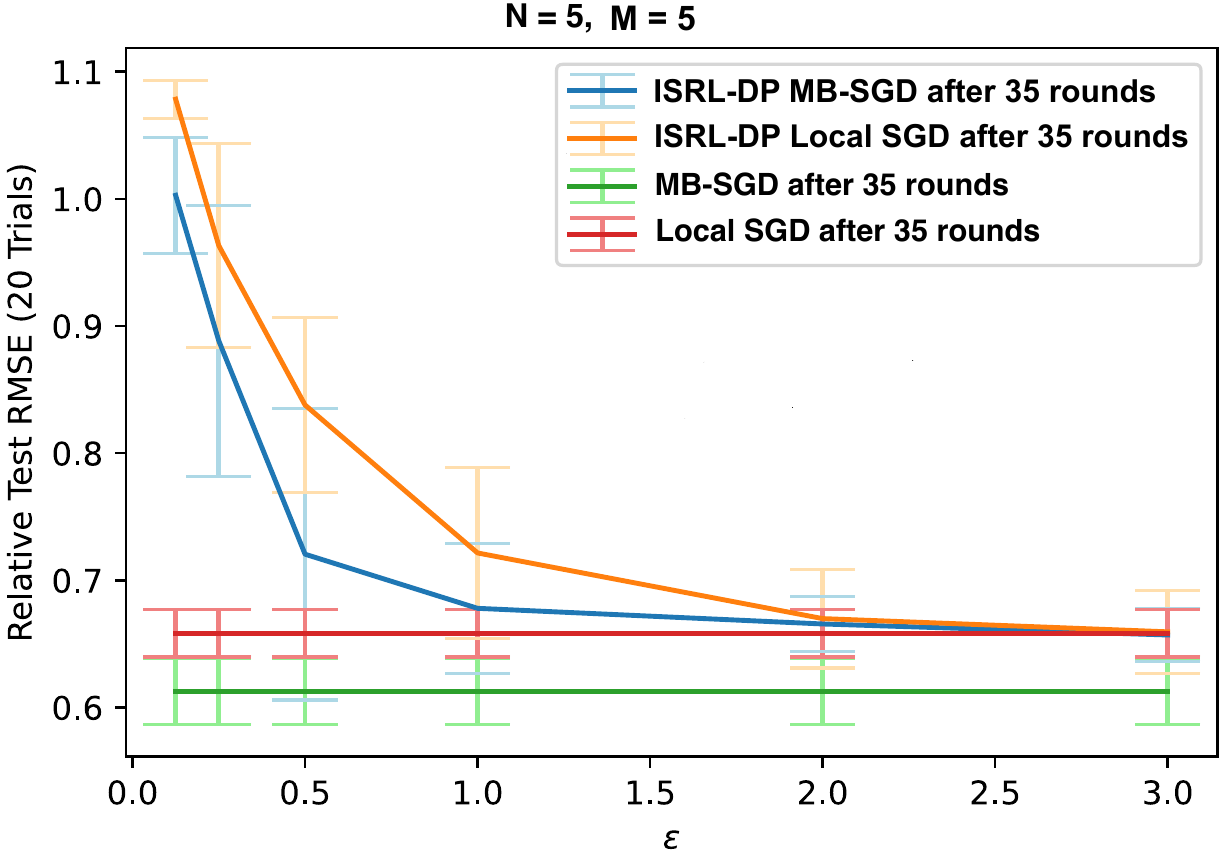}
        }
      \end{center}                
       \vspace{-.75cm}
    \caption{
    \footnotesize
      Linear regression on health insurance data. $\delta = 1/n^2.$ $90\%$ error bars are shown.
  \label{fig:linreg plot}}
\vspace{-.05in}
\end{figure} 
\vspace{-.12in}
\begin{figure}[H]
\begin{center}
\vspace{-.4cm}
        \subfigure{
                \includegraphics[width=.45\textwidth        ]{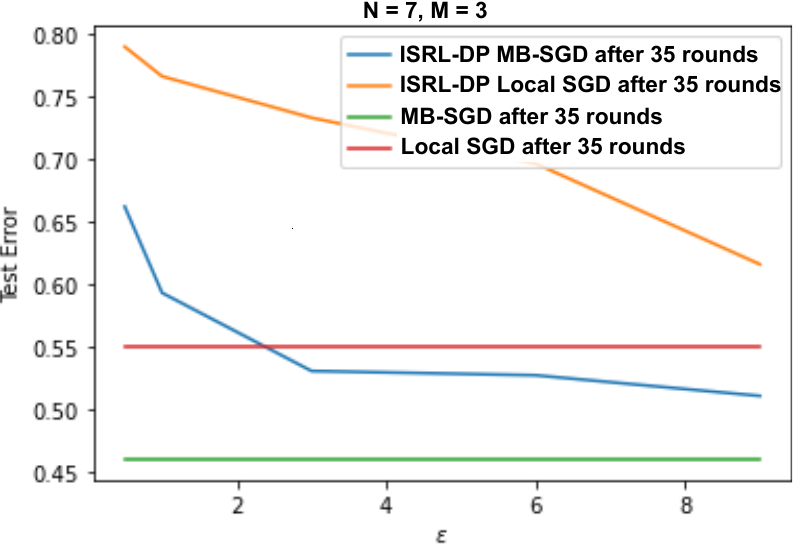}
        }
        \subfigure{
        \includegraphics[width=.45\textwidth]
                {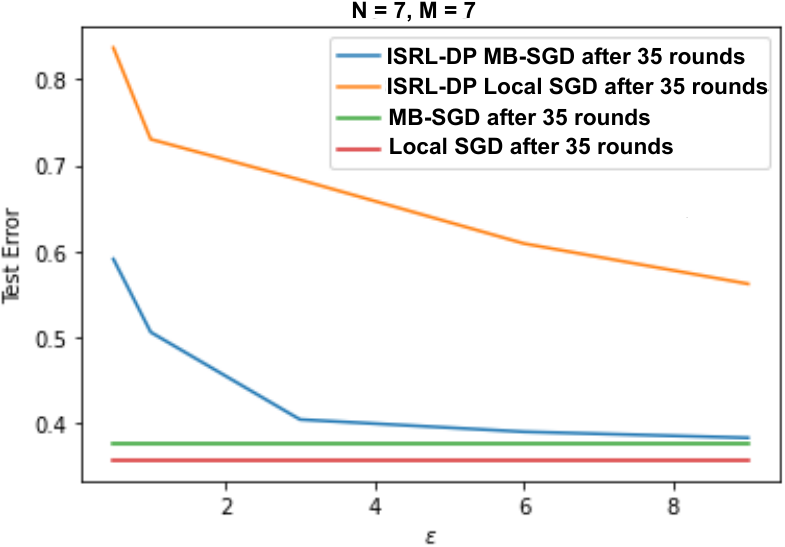}
        }
      \end{center}                
       \vspace{-.75cm}
    \caption{
    \footnotesize
      Softmax regression on obesity data. $\delta = 1/n^2.$ 
  \label{fig: obesity}}
\vspace{-.1in}
\end{figure}
\vspace{-.1in}

\section{Concluding Remarks and Open Questions
}
This paper considered FL without a trusted server and advocated for inter-silo record-level DP (ISRL-DP) as a practical privacy notion in this setting, particularly in cross-silo applications. We provided optimal ISRL-DP algorithms for convex/strongly convex FL in both the \textit{i.i.d.} and \textit{ERM} settings when all $M=N$ clients are able to communicate. The i.i.d. rates sit between the rates for 
the stringent ``no trust''
local DP and 
relaxed ``high trust'' 
central DP notions, and allow for ``privacy for free'' when $d \lesssim \epso^2 n$. 
Additionally, we devised an accelerated ISRL-DP algorithm to obtain state-of-the-art upper bounds for \textit{heterogeneous} FL. We also gave a \textit{shuffle DP} algorithm that (nearly) attains the optimal central DP rates for (non-i.i.d.) i.i.d. FL. An open problem is to close the gap between our i.i.d. lower bounds and non-i.i.d. upper bounds: e.g. can $\sqrt{\kappa}$ in~\cref{eq: sc noniid bound} be removed? Also, when $M < N$, are our upper bounds tight? Finally, what performance is possible for \textit{non-convex} ISRL-DP FL?

\subsection*{Acknowledgments}
This work was partly supported by a gift from the USC-Meta Center for Research and Education in AI and Learning. AL thanks Xingyu Zhou for helpful feedback on an earlier version. 
\bibliography{references}
\bibliographystyle{iclr2023_conference}
\clearpage
\appendix 
\section*{Appendix}
To ease navigation, we provide a high-level table of contents for this Appendix:

\vspace{.2cm}

\noindent \textbf{Contents}

\noindent\textbf{Appendix~\ref{app: related work}: {Thorough Discussion of Related Work}}

\noindent\textbf{Appendix~\ref{app: ISRL-DP}: {Rigorous Definition of Inter-Silo Record-Level DP}}

\noindent\textbf{Appendix~\ref{app: dp relationships}: {Relationships Between Notions of DP}} 

\noindent\textbf{Appendix~\ref{app: iid SCO}: {Proofs and Supplementary Material for \cref{sec: ldp upper bounds}}} 

\noindent\textbf{Appendix~\ref{app: hetero}: {Proofs and Supplemental Material for~\cref{sec: hetero upper}}}

\noindent\textbf{Appendix~\ref{app: ERM}: {Proofs and Supplementary Material for \cref{sec: ERM}}} 

\noindent\textbf{Appendix~\ref{app: shuffle}: {
 Proofs and Supplementary Materials for~\cref{sec: shuffle}\color{black} \color{black}
}}

\noindent\textbf{Appendix~\ref{app: unbalanced upper bounds}: {ISRL-DP Upper Bounds with Unbalanced Data and Differing Silo Privacy Needs}} 

\color{black}
\noindent\textbf{Appendix~\ref{app: experiment details}: {Numerical Experiments: Details and Additional Results}} 

\noindent\textbf{Appendix~\ref{app: limitations}: {Limitations and Societal Impacts}}
\clearpage
\section{Thorough Discussion of Related Work}
\label{app: related work}
\paragraph{Federated Learning:} In the absence of differential privacy constraints, federated learning (FL) has received a lot of attention from researchers in recent years. Among these, the most relevant works to us are \citep{koloskova, li2020fedavg, scaffold, woodworth2020local, woodworth2020, yuan20}, which have proved bounds on the convergence rate of FL algorithms. From an algorithmic standpoint, all of these works propose and analyze either Minibatch SGD (MB-SGD), FedAvg/Local SGD~\citep{mcmahan2017originalFL}, or an extension or accelerated/variance-reduced variation of one of these (e.g. SCAFFOLD~\citep{scaffold}). Notably, \cite{woodworth2020} proves tight upper and lower bounds that establish the near optimality of accelerated MB-SGD for the heterogeneous SCO problem with non-random $M_r = M = N$ in a fairly wide parameter regime. \cite{lobel2010, touri15, nedic2017} provide convergence results with random connectivity graphs. Our upper bounds describe the effect of the mean of $1/M_r$ on DP FL. 

\paragraph{DP Optimization:} In the centralized setting, DP ERM and SCO is well-understood for convex and strongly convex loss functions~\citep{bst14, wang2017ermrevisited, bft19, fkt20, lowy2021, asiL1geo}.

Tight excess risk bounds for local DP SCO were provided in~\cite{duchi13, smith17}. 

A few works have also considered Shuffle DP ERM and SCO. \cite{girgis21a, esarevisited} showed that the optimal CDP convex 
ERM rate can be attained in the lower trust (relative to the central model) shuffle model of DP.
The main difference between our treatment of shuffle DP and \cite{cheu2021shuffle} is that our results are much more general than \cite{cheu2021shuffle}. For example,  \cite{cheu2021shuffle} does not consider FL: instead they consider the simpler problem of stochastic convex optimization (SCO). SCO is a simple special case of FL in which each silo has only $n=1$ sample. Additionally, \cite{cheu2021shuffle} only considers the i.i.d. case, but not the more challenging non-i.i.d. case. Further, \cite{cheu2021shuffle} assumes perfect communication ($M=N$), while we also analyze the case when $M < N$ and some silos are unavailable in certain rounds (e.g. due to internet issues).  Note that our bounds in~\cref{thm: sdp iid sco upper bound} recover the results in \cite{cheu2021shuffle} in the special case considered in their work.

\paragraph{DP Federated Learning:} More recently, there have been many proposed attempts to ensure the privacy of individuals' data during and after the FL process. Some of these have used secure multi-party computation (MPC) \citep{chen2018privacy, ma2018privacy}, but this approach leaves users vulnerable to inference attacks on the trained model and does not provide the rigorous guarantee of DP. Others \citep{mcmahan17, geyer17, jayaraman2018distributed, gade2018privacy, wei2020user, zhou2020differentially, levy2021learning, ghazi2021user, noble2022differentially} have used user-level DP or central DP (CDP), which rely on a trusted third party, or hybrid DP/MPC approaches~\citep{jayaraman2018distributed, truex2019hybrid}. The work of \citet{jayaraman2018distributed} proves CDP empirical risk bounds and high probability guarantees on the population loss when the data is i.i.d. across silos. However, ISRL-DP and SDP are not considered, nor is heterogeneous (non-i.i.d.) FL. It is also worth mentioning that \citep{geyer17} considers random $M_r$ but does not prove any bounds. 

Despite this progress, prior to our present work, very little was known about the excess risk potential of ISRL-DP FL algorithms, except in the two extreme corner cases of $N=1$ and $n=1$. When $N = 1$, ISRL-DP and CDP are essentially equivalent; tight ERM \citep{bst14} and i.i.d. SCO \citep{bft19, fkt20} bounds are known for this case. In addition, for LDP i.i.d. SCO when $n = 1$ and $M_r = N$,~\citep{duchi13} establishes the minimax optimal rate for the class of sequentially interactive algorithms and non-strongly convex loss functions. To the best of our knowledge, all previous works examining the general ISRL-DP FL problem with arbitrary $n, M, N \geq 1$ either focus on ERM and/or do not provide excess risk bounds that scale with both $M$ and $n$, making the upper bounds provided in the present work significantly tighter. Furthermore, none of the existing works on ISRL-DP FL provide lower bounds or upper bounds for the case of random $M_r$. We discuss each of these works in turn below:

\citet{truex20} gives a ISRL-DP FL algorithm, but no risk bounds are provided in their work.

The works of \citet{huang19} and \citep{huang2020differentially} use ISRL-DP ADMM algorithms for smooth convex Federated ERM. However, the utility bounds in their works are stated in terms of an average of the silo functions evaluated at different points, so it is not clear how to relate their result to the standard performance measure for learning (which we consider in this paper): expected excess risk at the point $\widehat{w}$ output by the algorithm. Also, no lower bounds are provided for their performance measure. Therefore, the sub-optimality gap of their result is not clear. 
\citet[Theorem~2]{wu2019value} 
provides an $(\epsilon, 0)$-ISRL-DP ERM bound for fixed $M_r = M = N$ of $\mathcal{O}\left(\kappa \frac{L^2}{\mu}\frac{d}{nN\epsilon^2} + \epsilon\right)$ for $\mu$-strongly convex, $\beta$-smooth $f$ with condition number $\kappa = \beta/\mu,$ and $~1/\epsilon^2$ is an average of $~1/\epsilon^2_i.$ The additive $\epsilon$ term is clearly problematic: e.g. if $\epsilon = \Theta(1)$, then the bound becomes trivial. Ignoring this term, the first term in their bound is still looser than the bound that we provide in \cref{thm: informal accel non smooth ERM upper bound}. Namely, for $\epsilon = \epso$, our bound in part 2 of \cref{thm: informal accel non smooth ERM upper bound} is tighter by a factor of $\mathcal{O}\left(\frac{\ln(1/\delo)}{\kappa n}\right)$ and does not require $\beta$-smoothness of the loss. 
Additionally, the bounds in \citep{wu2019value} require $R$ ``large enough'' and do not come with communication complexity guarantees. In the convex case, the ISRL-DP ERM bound reported in~\cite[Theorem 3]{wu2019value} is difficult to interpret because the unspecified ``constants'' in the upper bound on $\ER$ are said to be allowed to depend on $R$. 

\citet[Theorems 2-3]{wei19}
provide convergence rates for smooth convex Polyak-\L ojasiewicz (a generalization of strong convexity) ISRL-DP ERM, which are complicated non-monotonic functions of $R.$ Since they do not prescribe a choice of $R,$ it is unclear what excess loss and communication complexity bounds are attainable with their algorithm. 

\citet{dobbe2020} proposes a ISRL-DP Inexact Alternating Minimization
Algorithm (IAMA) with Laplace noise; their result \cite[Theorem~3.11]{dobbe2020} gives a convergence rate for smooth, strongly convex ISRL-DP FL %
of order $\mathcal{O}\left(\frac{\Theta \sum_{i \in [M]} \sigma^2_i}{R}\right)$ ignoring smoothness and strong convexity factors, where $\Theta$ is a parameter that is only upper bounded in special cases (e.g. quadratic objective). Thus, the bounds given in~\citep{dobbe2020} are not complete for general strongly convex loss functions. Even in the special cases where a bound for $\Theta$ is provided, our bounds are tighter. Assuming that $\Theta = 1$ and (for simplicity of exposition) that parameters are the same across silos, \citet[Theorem~3.11]{dobbe2020}
implies taking $\sigma^2 = 1/\epsilon^2$ to ensure $(\epsilon, 0)$-ISRL-DP. The resulting convergence rate is then $O(M/\epsilon^2),$ which does not scale with $n$ and is \textit{increasing} with $M.$ Also, the dependence of their rate on the dimension $d$ is unclear, as it does not appear explicitly in their theorem.\footnote{Note that in order for their result to be correct, by \citep{bst14} when $N = M = 1$, their bound must scale at least as $d^2/\epsilon^2 n^2,$ unless their bound is trivial ($ \geq LD$).} Ignoring this issue, the dependence on $M$ and $n$ in the bound of \citet{dobbe2020} is still looser than all of the excess loss bounds that we provide in the present work. 

\citet{zhao2020local} and \citet{arachchige2019local} apply the ISRL-DP FL framework to Internet of Things, and \citep{seif20} uses noisy (full-batch) GD for ISRL-DP wireless channels in the FL (smooth strongly convex) ERM setting. The bounds in these works do not scale with the number of data points $n$, however (only with the number of silos $N$). Therefore, the bounds that we provide in the present work are tighter, and apply to general convex FL problems besides wireless channels and Internet of Things.

\section{Rigorous Definition of Inter-Silo Record-Level DP
}
\label{app: ISRL-DP}
Recall that we denote the data domain of a federated learning algorithm by $\mathbb{X} = \XX^{nN}$. 
\begin{definition}{(Differential Privacy)}
Let $\varepsilon \geq 0, ~\delta \in [0, 1).$ A randomized algorithm $\Al: \mathbb{X} \to \mathcal{W}$ is \textit{$(\varepsilon, \delta)$-DP} if for all $\rho$-adjacent data sets $\bx, \bx' \in \mathbb{X}$ and all measurable subsets $S \subset \WW$, we have 
\small \begin{equation} \small
\label{eq: DP2}
\mathbb{P}(\Al(\bx) \in S) \leq e^\varepsilon \mathbb{P}(\Al(\bx') \in S) + \delta.
\end{equation} \normalsize
\end{definition}
\noindent If \cref{eq: DP2} holds for all measurable subsets $S$, then we denote this property by {$\Al(\bx) \edsim \Al(\bx')$}.
An $R$-round fully interactive randomized algorithm $\mathcal{A}$ for FL is characterized in every round $r \in [R]$ by $N$ local silo functions called \textit{randomizers} $\rand_r: \ZZ^{(r-1) \times N} \times \XX_i^{n} \to \ZZ$ ($i \in [N]$) and an aggregation mechanism. (See below for an example that further clarifies the terminology used in this paragraph.) 
The randomizers send messages $Z^{(i)}_r:= \rand_r(\bz_{1:r-1}, X_i)$, 
to the server or  other silos. The messages $Z^{(i)}_r$ may depend on silo data $X_i$ and the outputs $\bz_{1:r-1}:= \{Z_t^{(j)}\}_{j \in [N], t \in [r-1]}$ of silos' randomizers in prior rounds.\footnote{We assume that $\rand_r(\bz_{1:r-1}, X_i)$ is not dependent on $X_j$ ($j \neq i$) given $\bz_{1:r-1}$ and $X_i$; that is, the distribution of $\rand_r$ is completely characterized by $\bz_{1:r-1}$ and $X_i$. Therefore, the randomizers of $i$ cannot ``eavesdrop'' on another silo's data, which aligns with the local data principle of FL.
We allow for $Z^{(i)}_t$ to be empty/zero if silo $i$
does not output anything to the server in round $t$.} 
Then, the server (or silos, for peer-to-peer FL) 
updates the global model. We consider the output of $\mathcal{A}: \mathbb{X} \to \ZZ^{R \times N}$ to be the \textit{transcript} of all silos' communications: i.e. the collection of all $N \times R$ messages $\{Z^{(i)}_r\}$.
Algorithm $\Al$ is $(\epso, \delo)$-ISRL-DP if the full transcript $\{Z^{(i)}_r\}_{r \in [R], i \in [N]}$ 
is $(\varepsilon_0, \delta_0)$-DP. More precisely: 
\begin{definition}{(Inter-Silo Record-Level Differential Privacy)}
\label{def: LDP}
Let $\rho(\bx, \bx') := \sum_{i=1}^N \sum_{j=1}^{n} \mathbbm{1}_{\{x_{i,j} \neq x'_{i,j}\}}$ for $\bx, \bx' \in \mathbb{X}$.  A randomized algorithm $\Al$ is $(\epso, \delo)$-ISRL-DP if for all $\rho$-adjacent $\bx, \bx'$, we have
\small
\begin{align*}
\small
\{(\rand_1(X_i), \rand_2(\bz_1, X_i), \cdots ,\rand_R(\bz_{1:R-1}, X_i))\}_{i=1}^N \edosim  
\{(\rand_1(X'_i), \rand_2(\bz'_1, X'_i), \cdots , \rand_R(\bz'_{1:R-1}, X'_i))\}_{i=1}^N,
\end{align*}
\normalsize
where $\bz_r := \{\rand_r(\bz_{1:r-1}, X_i)\}_{i=1}^N$ and $\bz'_r := \{\rand_r(\bz'_{1:r-1}, X_i')\}_{i=1}^N$.
\vspace{-.15cm}
\end{definition}

\paragraph{Example clarifying the terminology used in the definition of ISRL-DP given above:} Assume all $M_r = N$ silos are available in every round and consider $\Al$ to be the minibatch SGD algorithm, $w_{r+1} := w_r - \eta g_r$, where $g_r = \frac{1}{NK}\sum_{i=1}^N \sum_{j=1}^K \nabla f(w_r, x_{i,j}^r)$ for $\{x_{i,j}^r\}_{j=1}^K$ drawn randomly from $X_i$. Then the \textit{randomizers} $\rand_r: \ZZ^{(r-1) \times N} \times \XX_i^{n} \to \ZZ$ of silo $i$ are its stochastic gradients: $Z^{(i)}_r = \rand_r(\bz_{1:r-1}, X_i) = \frac{1}{K}\sum_{j=1}^K \nabla f(w_r, x_{i,j}^r)$ for $\{x_{i,j}^r\}_{j=1}^K$ drawn randomly from $X_i$. Note that the output of these randomizers depends on $w_r$, which is a function of previous stochastic gradients of all silos $\bz_{1:r-1} = \{Z^{(i)}_t\}_{i \in [N], t \in [r-1]}$. The \textit{aggregation mechanism} outputs $g_r$ by simply averaging the outputs of silos' randomizers: $g_r = \frac{1}{N}\sum_{i=1}^N Z^{(i)}_r$.  We view the output of $\mathcal{A}: \mathbb{X} \to \ZZ^{R \times N}$ to be the \textit{transcript} of all silos' communications, which in this case is the collection of all $N \times R$ stochastic minibatch gradients $\{Z^{(i)}_r\}_{i \in [N], r\in [R]}$. Note that in practice, the algorithm $\Al$ does not truly output a list of gradients, but rather outputs $\widehat{w} \in \WW$ that is some convex combination of the iterates $\{w_r\}_{r \in [R]}$, which themselves are functions of $\{Z^{(i)}_r\}_{i \in [N], r\in [R]}$. However, by the post-processing property of DP~\cite[Proposition 2.1]{dwork2014}, the privacy of $\widehat{w}$ will be guaranteed if the silo transcripts are DP. Thus, here we simply consider the output of $\Al$ to be the silo transcripts. Clearly, minibatch SGD is not ISRL-DP. To make it ISRL-DP, it would be necessary to introduce additional randomness to make sure that each silo's collection of stochastic gradients is DP, conditional on the messages and data of all other silos. For example, Noisy MB-SGD is a ISRL-DP variation of (projected) minibatch SGD.

\section{Relationships between notions of DP}
\label{app: dp relationships}
\subsection{ISRL-DP is stronger than CDP}
\label{app: CDP stronger than LDP}
This is immediate from post-processing: ISRL-DP ensures that the full transcript of silos' sent messages are DP. Central DP requires that the final output $\hat{w}$ of the algorithm is DP. Suppose we have an ISRL-DP algorithm with final output $\hat{w}$. 
By post-processing, since $\hat{w}$ depends on the data only via the silos' messages, we conclude that $\hat{w}$ is DP. Hence any ISRL-DP algorithm is CDP (with the same privacy parameters). 

Conversely, $(\varepsilon, \delta)$-CDP does not imply $(\varepsilon', \delta')$-ISRL-DP for any $\varepsilon' > 0, \delta' \in (0,1)$. This is because a CDP algorithm may send non-private updates to the server and rely on the server to randomize, completely violating the requirement of ISRL-DP.

\subsection{ISRL-DP Implies User-Level DP for small $(\epso, \delo)$}
\label{ldp vs silo level dp}
Precisely, we claim that if $\Al$ is $(\epso, \delo)$-ISRL-DP then $\Al$ is $(n\epsilon, ne^{(n-1)\epsilon}\delta)$ user-level DP; but conversely $(\epsilon, \delta)$-user-level DP does not imply $(\epsilon', \delta')$-ISRL-DP for any $\epsilon' > 0, \delta' \in (0,1)$.
The first part of the claim is due to group privacy (see %
\cite[Theorem 10]{kamath}
) and the argument used above in \cref{app: CDP stronger than LDP} to get rid of the ``conditional''. The second part of the claim is true 
because a user-level DP algorithm may send non-private updates to the server and rely on the server to randomize, completely violating the requirement of ISRL-DP. 

\vspace{.2cm}
Therefore, if $\epso = \mathcal{O}(1/n)$ and $\delo \ll 1/n$, then any $(\epso, \delo)$-ISRL-DP algorithm also provides a strong user-level privacy guarantee. 

\section{Proofs and Supplementary Material for \cref{sec: ldp upper bounds}}
\label{app: iid SCO}

\subsection{Pseudocode of Noisy MB-SGD}
We present pseudocode for Noisy MB-SGD in~\cref{alg: dp MB-SGD}: 
\label{app: MBSGD pseudocode}
\begin{algorithm}[ht]
\caption{Noisy ISRL-DP MB-SGD}
\label{alg: dp MB-SGD}
\begin{algorithmic}[1]
\STATE {\bfseries Input:} $N, d, R \in \mathbb{N},$ 
$\sigma^2 \geq 0$, 
$X_i \in \XX_i^{n}$ for $i \in [N]$, 
loss function $f(w, x),$ 
$K \in [n]$ 
, 
$\{\eta_r \}_{r \in [R]}$ and
$\{\gamma_r\}_{r \in [R]}.$
 \STATE Initialize $w_0 \in \WW$.
 \FOR{$r \in \{0, 1, \cdots, R-1\}$} 
 \FOR{$i \in S_r$ \textbf{in parallel}} 
  \STATE Server sends global model $w_r$ to silo $i$. 
 \STATE Silo $i$ draws $K$ samples $x_{i,j}^r$ 
uniformly
 from $X_i$ (for $j \in [K]$) and noise $u_i \sim \mathcal{N}(0, \sigma^2 \mathbf{I}_d).$
 \STATE Silo $i$ computes $\widetilde{g}_r^{i} := \frac{1}{K} \sum_{j=1}^{K} \nabla f(w_r, x_{i,j}^r) + u_i$ and sends to server.
 \ENDFOR
 \STATE Server aggregates $\widetilde{g}_{r} := \frac{1}{M_r} \sum_{i \in S_r} \widetilde{g}_r^{i}$.
 \STATE Server updates $w_{r+1} := \Pi_{\WW}[w_r - \eta_r \widetilde{g}_r]$.
 \ENDFOR \\
\STATE {\bfseries Output:} $\widehat{w}_R = \frac{1}{\Gamma_R} \sum_{r=0}^{R-1} \gamma_r w_r,$ where $\Gamma_R := \sum_{r=0}^{R-1} \gamma_r.$ 
\end{algorithmic}
\end{algorithm}

\subsection{Proof of \cref{thm: informal iid SCO upper bound}}
\label{app: proof of iid upper bound}
We begin by proving \cref{thm: informal iid SCO upper bound} for $\beta$-smooth $f(\cdot, x)$ and then extend our result to the non-smooth case via Nesterov smoothing~\citep{nesterov2005smooth}. We will require some preliminaries. We begin with the following definition from \citep{bousquet2002stability}:
\begin{definition}(Uniform Stability)
A randomized algorithm $\mathcal{A}: \mathcal{W} \times \XX^{\widetilde{N}}$ is said to be $\alpha$-uniformly stable (w.r.t. loss function $f: \WW \times \XX$) if for any pair of adjacent data sets $\bx, \bx' \in \XX^{\widetilde{N}}, |\bx \Delta \bx'| \leq 2$, we have \[
\sup_{x \in \XX} \mathbb{E}_{\mathcal{A}}[f(\mathcal{A}(\bx), x) - f(\mathcal{A}(\bx'), x)] \leq \alpha.
\]
\end{definition}

\noindent In our context, $\widetilde{N} = N \times n$. The following lemma, which is well known, allows us to easily pass from empirical risk to population loss when the algorithm in Question 1s uniformly stable: 
\begin{lemma} 
\label{lem: generalization gap for unif stable alg}
Let $\mathcal{A}: \XX^{\widetilde{N}} \to \WW$ be $\alpha$-uniformly stable w.r.t. convex loss function $f:\WW \times \XX \to \mathbb{R}$. Let $\DD$ be any distribution over $\XX$ and let $\bx \sim \DD^{\widetilde{N}}.$ Then the excess population loss is upper bounded by the excess expected empirical loss plus $\alpha$: 
\[
\mathbb{E}[F(\mathcal{A}(\bx), \DD) - F^{*}] \leq \alpha + \mathbb{E}[\hf(\Al(\bx)) - \min_{w \in \WW} \hf(w)],
\]
where the expectations are over both the randomness in $\mathcal{A}$ and the sampling of~$\bx \sim \DD^{\widetilde{N}}.$ Here we denote the empirical loss by $\hf(w)$ and the population loss by $F(w, \DD)$ for additional clarity, and $F^* := \min_{w \in \WW} F(w, \DD) = \min_{w \in \WW} F(w).$ 
\end{lemma}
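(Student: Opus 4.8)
The claim is a classical generalization-via-stability bound, and the plan is to follow the standard symmetrization argument (e.g.\ \cite{bousquet2002stability, shalev2009stochastic}). Write $X = (x_1, \dots, x_{\widetilde N})$ for the data and let $X^{(k)}$ denote $X$ with its $k$-th sample $x_k$ replaced by an independent fresh sample $x_k' \sim \DD$. First I would observe the well-known identity that the \emph{expected generalization gap} equals an averaged stability quantity:
\begin{equation*}
\expec_{X, \mathcal{A}}\left[ F(\mathcal{A}(X), \DD) - \widehat F(\mathcal{A}(X), X)\right] = \frac{1}{\widetilde N}\sum_{k=1}^{\widetilde N} \expec_{X, x_k', \mathcal{A}}\left[ f(\mathcal{A}(X^{(k)}), x_k) - f(\mathcal{A}(X), x_k)\right].
\end{equation*}
This follows by renaming variables: in the term $\expec_{X,\mathcal{A}} f(\mathcal{A}(X), x_k)$, the pair $(x_k, \text{rest of } X)$ has the same joint law as $(x_k', \text{rest of } X)$ when we also pull in the fresh sample, so $\expec F(\mathcal{A}(X),\DD) = \expec f(\mathcal{A}(X^{(k)}), x_k)$ for each $k$, while $\expec \widehat F(\mathcal{A}(X), X) = \frac{1}{\widetilde N}\sum_k \expec f(\mathcal{A}(X), x_k)$ by definition of the empirical risk.

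Next, since $X$ and $X^{(k)}$ are adjacent (they differ in exactly one sample, so $|X \Delta X^{(k)}| \le 2$), the $\alpha$-uniform stability of $\mathcal{A}$ gives, after taking the expectation over the fresh draw and noting $x_k \in \XX$,
\begin{equation*}
\expec_{X, x_k', \mathcal{A}}\left[ f(\mathcal{A}(X^{(k)}), x_k) - f(\mathcal{A}(X), x_k)\right] \le \sup_{x \in \XX}\expec_{\mathcal{A}}\left[ f(\mathcal{A}(X^{(k)}), x) - f(\mathcal{A}(X), x)\right] \le \alpha,
\end{equation*}
and since this holds for every $k$, averaging over $k$ yields $\expec[F(\mathcal{A}(X),\DD) - \widehat F(\mathcal{A}(X),X)] \le \alpha$. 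Finally, I would add and subtract: write
\begin{equation*}
\expec[F(\mathcal{A}(X),\DD) - F^*] = \expec[F(\mathcal{A}(X),\DD) - \widehat F(\mathcal{A}(X),X)] + \expec[\widehat F(\mathcal{A}(X),X) - \min_w \widehat F(w,X)] + \expec[\min_w \widehat F(w,X) - F^*],
\end{equation*}
bound the first bracket by $\alpha$ as just shown, and observe the last bracket is $\le 0$ because $\expec[\min_w \widehat F(w, X)] \le \min_w \expec[\widehat F(w,X)] = \min_w F(w,\DD) = F^*$ (the minimum of an expectation dominates the expectation of the minimum). Combining gives the stated inequality.

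The argument is essentially a bookkeeping exercise and I do not anticipate a genuine obstacle; the one point requiring care is the variable-renaming step establishing that $\expec f(\mathcal{A}(X), x_k) = \expec f(\mathcal{A}(X^{(k)}), x_k')$ under the augmented probability space, which is where the i.i.d.\ assumption on $X \sim \DD^{\widetilde N}$ and the independence of the resampled point $x_k'$ are used. Convexity of $f$ is not actually needed for this lemma in the form stated (it is listed as a hypothesis presumably for consistency with how the lemma is invoked downstream); the only structural facts used are uniform stability and the i.i.d.\ sampling.
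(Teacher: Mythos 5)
Your proof is correct and follows the same structure as the paper's: bound the expected generalization gap $\mathbb{E}[F(\mathcal{A}(X),\DD) - \widehat{F}(\mathcal{A}(X),X)]$ by $\alpha$, then decompose the excess population loss by adding and subtracting $\widehat{F}(\mathcal{A}(X),X)$ and $\min_{w}\widehat{F}(w,X)$, finishing with $\mathbb{E}\min_{w}\widehat{F}(w,X) \le \min_{w}\mathbb{E}\widehat{F}(w,X) = F^*$. The only difference is that you derive the generalization-gap bound from scratch via the standard symmetrization/resampling argument, whereas the paper simply cites it as Theorem 2.2 of Hardt, Recht, and Singer; your side observation that convexity of $f$ is not actually used in this lemma is also correct.
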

\begin{proof}
By Theorem 2.2 in \citep{hardt16}, \[
\mathbb{E}[F(\mathcal{A}(\bx), \DD) - \hf(\Al(\bx))] \leq \alpha. 
\]
Hence \begin{align*}
\mathbb{E}[F(\mathcal{A}(\bx), \DD) - F^{*}] 
&= \mathbb{E}[F(\mathcal{A}(\bx), \DD) - \hf(\Al(\bx)) + \hf(\Al(\bx)) \\
&\;\;\;- \min_{w \in \WW} \hf(w) + \min_{w \in \WW} \hf(w) - F^{*}] \\
&\leq \alpha + \mathbb{E}[\hf(\Al(\bx)) - \min_{w \in \WW} \hf(w)],
\end{align*}
since $\mathbb{E} \min_{w \in \WW} \hf(w) \leq \min_{w \in \WW}\mathbb{E}\left[\hf(w,X)\right] = \min_{w \in \WW} F(w, \DD) = F^{*}.$ 
\end{proof}

\noindent The next step is to bound the uniform stability of \cref{alg: dp MB-SGD}. 
\normalsize
\begin{lemma}
\label{lem: convex unif stability}
Let $f(\cdot, x)$ be convex, $L$-Lipschitz, and $\beta$-smooth loss for all $x \in \XX.$ Let $n := \min\{n\}_{i=1}^N$. Then under Assumption 3, Noisy MB-SGD with constant stepsize $\eta \leq \frac{1}{\beta}$ and averaging weights $\gamma_r = \frac{1}{R}$ is $\alpha$-uniformly stable with respect to $f$ for $\alpha = \frac{2 L^2 R \eta}{n M}$. 
If, in addition $f(\cdot, x)$ is $\mu$-strongly convex, for all $x \in \XX,$ then 
Noisy MB-SGD with constant step size $\eta_r = \eta \leq \frac{1}{\beta}$ and any averaging weights $\{\gamma_r\}_{r=1}^R$ is
$\alpha$-uniformly stable with respect to $f$ for
$\alpha = 
\frac{4L^2}{\mu(M n - 1)}
$ (assuming $\min\{M, n\} > 1$).
\end{lemma}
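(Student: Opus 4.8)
The plan is to adapt the operator-stability argument of Hardt, Recht, and Singer \cite{hardt16} to the federated minibatch setting, tracking the distance between the trajectories of \cref{alg: dp MB-SGD} run on two databases differing in a single sample. Fix $\rho_{i_0}$-adjacent $\bx,\bx'$ that agree everywhere except in one sample of client $i_0$, at local index $j^*$, and couple the two runs so that all internal randomness is shared: the same Gaussian vectors $u_i$, the same minibatch indices drawn by each client in each round, and the same realization of $S_r$ and $M_r$. Write $w_r,w_r'$ for the two iterate sequences and $\delta_r := \|w_r - w_r'\|$. Since the coupled noise cancels in the difference and $\Pi_{\WW}$ is nonexpansive, $\delta_{r+1}$ is controlled by the difference of the two noise-free gradient-step maps $G_r(w):= w - \eta\,\nabla\widehat f_r(w)$, where $\widehat f_r(w) = \frac1{M_r}\sum_{i\in S_r}\frac1{K_i}\sum_{j} f(w,x_{i,j}^r)$ is an average of convex, $L$-Lipschitz, $\beta$-smooth losses, hence itself convex, $L$-Lipschitz and $\beta$-smooth. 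For $\eta\le 1/\beta$, $G_r$ is nonexpansive in the convex case and $(1-\eta\mu)$-Lipschitz in the $\mu$-strongly convex case (using in addition that each sampled loss is $\mu$-strongly convex).

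I would then run the per-round recursion. If client $i_0$ is not active at round $r$, or if $i_0$'s round-$r$ minibatch misses index $j^*$, the two databases induce the same $\widehat f_r$, so $\delta_{r+1}\le\delta_r$ (resp. $\le(1-\eta\mu)\delta_r$). Otherwise, letting $c_r$ be the multiplicity of $j^*$ in $i_0$'s minibatch and splitting $G_r(w_r)-G_r'(w_r') = [G_r(w_r)-G_r(w_r')] + [G_r(w_r')-G_r'(w_r')]$, the first bracket is bounded by the Lipschitz constant of $G_r$ and the second by $\frac{2\eta L c_r}{M_r K_{i_0}}$ (difference of two gradients of an $L$-Lipschitz loss), so $\delta_{r+1}\le \delta_r + \frac{2\eta L c_r}{M_r K_{i_0}}$ (resp. with $(1-\eta\mu)\delta_r$). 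Now take expectations: $c_r$ is independent of $(S_r,M_r)$ with $\mathbb{E}[c_r] = K_{i_0}/n_{i_0}$, and by \cref{ass: unif clients} $\mathbb{E}\!\left[\mathbbm{1}\{i_0\in S_r\}/M_r\right]\le \mathbb{E}[1/M_r] = 1/M$, so each ``bad'' round contributes at most $\frac{2\eta L}{M n_{i_0}}$ to $\mathbb{E}\delta$. In the convex case the increments are nonnegative, so $\mathbb{E}\delta_r$ is nondecreasing and $\mathbb{E}\delta_R \le \frac{2\eta L R}{M n_{i_0}}\le \frac{2\eta L R}{M n_{\min}}$; in the strongly convex case, unrolling the contraction and summing $\sum_{k\ge0}(1-\eta\mu)^k = 1/(\eta\mu)$ gives $\mathbb{E}\delta_r \le \frac{2 L}{\mu M n_{i_0}}$ uniformly in $r$.

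Finally I would pass from iterate distance to stability. Since $\widehat w_R = \frac1{\Gamma_R}\sum_r \gamma_r w_r$ is a convex combination, $\|\widehat w_R - \widehat w_R'\| \le \frac1{\Gamma_R}\sum_r \gamma_r \delta_r$, which is at most $\sup_r \mathbb{E}\delta_r$ in expectation (monotone in the convex case, uniformly bounded in the strongly convex case), and $L$-Lipschitzness of $f(\cdot,x)$ gives $\sup_x \mathbb{E}[f(\widehat w_R,x) - f(\widehat w_R',x)] \le L\,\mathbb{E}\|\widehat w_R - \widehat w_R'\|$. This yields $\alpha \le 2L^2\eta R/(M n_{\min})$ in the convex case and $\alpha \le 2L^2/(\mu M n_{\min})$ in the strongly convex case; the latter implies the stated $\frac{4L^2}{\mu(M n_{\min}-1)}$ since $2/(Mn_{\min})\le 4/(Mn_{\min}-1)$, with the hypothesis $\min\{M,n_{\min}\}>1$ ensuring the stated expression is finite and positive. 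The main obstacle, relative to single-machine SGD stability, is the federated averaging of randomness: one must check that the event ``$j^*$ is touched at round $r$'' couples the minibatch draw with the random participation $(S_r,M_r)$ through an independence that makes the mixed moment collapse to the clean $1/M$ factor, and that the averaged-over-$S_r$ minibatch map $G_r$ retains nonexpansiveness (resp. the $(1-\eta\mu)$ contraction) uniformly over the random active set.
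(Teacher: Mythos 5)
Your proof is correct, and it takes a cleaner route than the paper's in the strongly convex case. The paper's proof isolates the differing sample $(a,b)$ and treats the remaining gradient map as a step on a function that is only $(1-\tfrac{q_r}{M_rK})\mu$-strongly convex and $(1-\tfrac{q_r}{M_rK})\beta$-smooth, then invokes Hardt et al.'s expansion lemma (yielding the $(1-\tfrac{\eta\mu}{2}(1-\tfrac{q_r}{M_rK}))$ contraction and ultimately the $Mn_{\min}-1$ denominator). Your decomposition $G_r(w_r)-G_r'(w_r') = [G_r(w_r)-G_r(w_r')] + [G_r(w_r')-G_r'(w_r')]$ keeps the full minibatch average in the first bracket, so the operator is genuinely $\mu$-strongly convex and $\beta$-smooth and you get the full $(1-\eta\mu)$ contraction (valid since $1/\beta \le 2/(\mu+\beta)$), while the second bracket is a pure single-sample gradient-sensitivity term. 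This avoids the ``fractional strong convexity'' bookkeeping and gives the slightly tighter intermediate bound $2L^2/(\mu M n_{\min})$, which you correctly observe dominates into the stated $4L^2/(\mu(Mn_{\min}-1))$. In the convex case both proofs coincide up to cosmetics. Two small points of looseness to be aware of: (i) $c_r$ is not literally independent of $(S_r, M_r)$ since $c_r=0$ when $i_0\notin S_r$; the clean statement is that $\mathbb{E}[c_r\mid S_r,M_r] = (K_{i_0}/n_{i_0})\mathbbm{1}\{i_0\in S_r\}$, after which your bound $\mathbb{E}[\mathbbm{1}\{i_0\in S_r\}/M_r]\le 1/M$ gives the $1/(Mn_{i_0})$ factor (in fact the exact value is $1/N$, so both you and the paper are discarding some slack here, which is fine for the lemma as stated). (ii) In the strongly convex unrolling you should say $\mathbb{E}\delta_{r+1}\le(1-\eta\mu)\mathbb{E}\delta_r + 2\eta L/(Mn_{i_0})$ before summing the geometric series, since $c_r$ and $\delta_r$ are not independent of each other — the recursion on expectations goes through because $c_r, S_r, M_r$ are drawn fresh at round $r$ independently of the filtration generating $\delta_r$.
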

\begin{proof}[Proof of \cref{lem: convex unif stability}]
The proof of the convex case extends techniques and arguments used in the proofs of~\cite[Theorem 3.8]{hardt16}, \cite[Lemma 4.3]{fv19}, and \cite[Lemma 3.4]{bft19} to the ISRL-DP FL setting; the strongly convex bound requires additional work to get a tight bound. For now, fix the randomness of $\{M_r\}_{r \geq 0}.$  Let $\bx, \bx' \in \XX^{\widetilde{N}}$ be two data sets, denoted $\bx = (X_1, \cdots, X_N)$ for $X_i \in \XX^{n}$ for all $i \in [N]$ and similarly for $\bx'$, and assume $|\bx \Delta \bx'| = 2.$ Then there is a unique $a \in [N]$ and $b \in [n]$ such that $x_{a, b} \neq x'_{a, b}.$ For $t \in \{0, 1, \cdots, R\}$, denote the $t$-th iterates of \cref{alg: dp MB-SGD} on these two data sets by $w_t = w_t(\bx)$ and $w_t' = w_t(\bx')$ respectively. We claim that
\small \begin{equation} \small
\label{eq: induct}
    \mathbb{E}\left[\|w_t - w'_t\| \left | \right \{M_r\}_{0 \leq r \leq t} \right]\leq \frac{2L \eta}{n} \sum_{r = 0}^t \frac{1}{M_r}
\end{equation} \normalsize
for all $t.$ We prove the claim by induction. It is trivially true when $t = 0.$ Suppose \cref{eq: induct} holds for all $t \leq \tau.$ Denote the samples in each local mini-batch at iteration $\tau$ by $\{x_{i,j}\}_{i \in [N], j \in [K]}$ (dropping the $\tau$ for brevity). Assume WLOG that $S_{\tau} = [M_{\tau}].$ First condition on the randomness due to minibatch sampling and due to the Gaussian noise, which we denote by $\widebar{u} = \frac{1}{M_{\tau}} \sum_{i \in S_{\tau}} u_i$. Also, denote (for $t \geq 0$) $\widetilde{g}_t := \frac{1}{M_t K} \sum_{i \in S_t, j \in [K]} \nabla f(w_t, x_{i,j}) + \widebar{u}$ and $\widetilde{g}_t := \frac{1}{M_t K} \sum_{i \in S_t, j \in [K]} \nabla f(w'_t, x'_{i,j}) + \widebar{u}$. Then by the same argument used in Lemma 3.4 of \citet{bft19}, we can effectively ignore the noise in our analysis of step $\tau + 1$ of the algorithm since the same (conditionally non-random) update is performed on $X$ and $X'$, implying that the noises cancel out. %
More precisely, by non-expansiveness of projection and gradient descent step for $\eta \leq \frac{2}{\beta}$ (see Lemma 3.7 in \citep{hardt16}), we have
\begin{align*}
   &\|w_{\tau + 1} - w'_{\tau + 1}\|  \\
=&\left\|\Pi_{\WW}\left(w_{\tau} - \eta_{\tau} \widetilde{g}_{\tau}\right) - \Pi_{\WW}\left(w'_{\tau} - \eta_{\tau} \widetilde{g}'_{\tau} \right)    \right\| \\
    \leq& \left\|(w_{\tau} - \eta_{\tau} \widetilde{g}_{\tau}) - (w'_{\tau} - \eta_{\tau} \widetilde{g}'_{\tau})\right\|\\
        \leq&\Biggl|\!\Biggl|w_{\tau} - \eta_{\tau}\Biggl(\Biggl(\frac{1}{M_{\tau}K}\sum_{(i, j) \neq (a,b)} \nabla f(w_{\tau}, x_{i,j})\Biggr) + \widebar{u} \Biggr)  \\
    &- \Biggl(w'_{\tau} - \eta_{\tau}\Biggl(\Biggl(\frac{1}{M_{\tau}K}\sum_{(i, j) \neq (a,b)} \nabla f(w'_{\tau}, x_{i,j})\Biggr) + \widebar{u} \Biggr) \Biggr)
    \Biggr|\!\Biggr| \\
    &
    + \frac{q_{\tau}\eta_{\tau}}{M_{\tau}K}\left\|\nabla f(w_{\tau}, x_{a,b}) - \nabla f(w'_{\tau}, x'_{a,b})\right\| \\
    \leq& \|w_{\tau} - w'_{\tau}\| + \frac{q_{\tau}\eta_{\tau}}{M_{\tau}K}\left\|\nabla f(w_{\tau}, x_{a,b}) - \nabla f(w'_{\tau}, x'_{a,b})\right\|,
\end{align*}
where $q_{\tau} \in \{0, 1, \cdots, K\}$ is a realization of the random variable $Q_{\tau}$ that counts the number of times index $b$ occurs in worker $a$'s local minibatch at iteration $\tau.$ 
(Recall that we sample uniformly with replacement.) Now $Q_{\tau}$ is a sum of $K$ independent Bernoulli($\frac{1}{n_{a}})$ random variables, hence $\mathbb{E} Q_{\tau} = \frac{K}{n_{a}}.$ Then using the inductive hypothesis and taking expected value over the randomness of the Gaussian noise and the minibatch sampling proves the claim. Next, taking expectation with respect to the randomness of $\{M_r\}_{r \in [t]}$ implies \[
\mathbb{E}\|w_t - w'_t\| \leq \frac{2Lt}{n M},
\]
since the $M_r$ are i.i.d. with $\mathbb{E}(\frac{1}{M_1}) = \frac{1}{M}.$ Then Jensen's inequality and Lipschitz continuity of $f(\cdot, x)$ imply that for any $x \in \XX,$ \begin{align*}
    \mathbb{E}[f(\widebar{w}_R, x) - f(\widebar{w}'_R, x')] &\leq L \mathbb{E}\|\widebar{w}_R - \widebar{w}'_R\| \\
    &\leq \frac{L}{R}\sum_{t=0}^{R-1} \mathbb{E}\|w_t - w'_t\| \\
    &\leq \frac{2L^2 \eta}{R M n} \frac{R(R+1)}{2} = \frac{L^2 \eta (R+1)}{M n}, 
\end{align*}
completing the proof of the convex case.

Next suppose $f$ is $\mu$-strongly convex. The proof begins identically to the convex case. We condition on $M_r$, $u_i$, and $S_r$ as before and (keeping the same notation used there) get for any $r \geq 0$ \begin{align*}
    &\|w_{r+1} - w'_{r+1}\| \\
    \leq&\Biggl|\!\Biggl|w_{r} - \eta_{r}\Biggl(\Biggl(\frac{1}{M_{r}K}\sum_{(i, j) \neq (a,b)} \nabla f(w_{r}, x_{i,j})\Biggr) + \widebar{u} \Biggr)  \\
    &- \Biggl(w'_{r} - \eta_{r}\Biggl(\Biggl(\frac{1}{M_{r}K}\sum_{(i, j) \neq (a,b)} \nabla f(w'_{r}, x_{i,j})\Biggr) + \widebar{u} \Biggr) \Biggr)
    \Biggr|\!\Biggr| \\
    &
    + \frac{q_{r}\eta_{r}}{M_{r}K}\left\|\nabla f(w_{r}, x_{a,b}) - \nabla f(w'_{r}, x'_{a,b})\right\|.
\end{align*} 

\noindent We will need the following tighter estimate of the non-expansiveness of the gradient updates to bound the first term on the right-hand side of the inequality above: 

\begin{lemma}\citep{hardt16}
\label{lem: hardt sc nonexpansiveness}
Let $G: \WW \to \mathbb{R}^d$ be $\mu$-strongly convex and $\beta$-smooth. Assume $\eta \leq \frac{2}{\beta + \mu}$ Then for any $w, v \in \WW,$ we have \[
\|(w - \eta \nabla G(w)) - (v - \eta \nabla G(v))\| \leq \left(1 - \frac{\eta \beta \mu}{\beta + \mu}\right)\|v - w\| \leq \left(1- \frac{\eta \mu}{2}\right)\|v - w\|.
\]
\end{lemma}
\noindent Note that $G_{r}(w_r) := \frac{1}{M_{r} K} \sum_{(i,j) \neq (a,b), (i,j) \in [M_r] \times [K]} f(w_r, x_{i,j}^{r})$ is $(1- \frac{q_r}{M_{r} K})\beta$-smooth and $(1- \frac{q_r}{M_{r} K})\mu$-strongly convex and hence so is $G_{r}(w_r) + \widebar{u}$. Therefore, invoking \cref{lem: hardt sc nonexpansiveness} and the assumption $\eta_r = \eta \leq \frac{1}{\beta},$ as well as Lipschitzness of $f(\cdot, x) \forall x \in \XX,$ yields \[
\|w_{r+1} - w'_{r+1}\| \leq\left(1 - \frac{\eta \mu(1 - \frac{q_r}{M_{r} K})}{2}\right)\|w_r - w'_r\| + \frac{2 q_r \eta L}{M_{r} K}.
\]
Next, taking expectations over the $M_r$ (with mean $\mathbb{E}(\frac{1}{M_r}) = \frac{1}{M}$), the minibatch sampling (recall $\mathbb{E}q_r = \frac{K}{n_a}$), and the Gaussian noise implies \begin{equation*}
    \mathbb{E}\|w_{r+1} - w'_{r+1}\| \leq \left(1 - \frac{\eta \mu(1 - \frac{1}{n_a M})}{2}\right)\mathbb{E}\|w_r - w'_r\| + \frac{2 \eta L}{n_a M}.
\end{equation*}
One can then prove the following claim by an inductive argument very similar to the one used in the proof of the convex part of \cref{lem: convex unif stability}: for all $t \geq 0,$ \[
\mathbb{E}\|w_t - w'_t\| \leq \frac{2 \eta L}{n_a M}\sum_{r=0}^t (1 - b)^r,
\]
where $b := \frac{\mu \eta}{2}\left(\frac{n_a M - 1}{n_a M} \right) < 1.$
The above claim implies that \begin{align*}
    \mathbb{E}\|w_t - w'_t\| &\leq \frac{2 \eta L}{n_a M}\left(\frac{1 - (1-b)^{t+1}}{b}\right) \\
    &\leq \frac{4 L}{\mu(n_a M - 1)} \\
    & \leq \frac{4 L}{\mu(n M - 1)}.
\end{align*}
Finally, using the above bound together with Lipschitz continuity of $f$ and Jensen's inequality, we obtain that for any $x \in \XX,$ 
\begin{align*}
\mathbb{E}[f(\widehat{w}_R, x) - f(\widehat{w}'_R, x)] &\leq 
L \mathbb{E}\|\widehat{w}_R - \widehat{w}'_R\| \\
&= L \mathbb{E}\left\|\frac{1}{\Gamma_R} \sum_{r=0}^{R-1} \gamma_r (w_r - w'_r)\right\| \\
&\leq L \mathbb{E}\left[ \frac{1}{\Gamma_R} \sum_{r=0}^{R-1} \gamma_r \| w_r - w'_r \| \right] \\
&\leq L\left[\frac{1}{\Gamma_R}  \sum_{r=0}^{R-1} \gamma_r\left( \frac{4 L}{\mu(n M - 1)}\right)\right] \\
&= \frac{4L^2}{\mu(n M - 1)}.
\end{align*}
This completes the proof of \cref{lem: convex unif stability}. 
\end{proof}

Next, we will bound the empirical loss of Noisy MB-SGD (\cref{alg: dp MB-SGD}). We will need the following two lemmas for the proof of \cref{lem: informal empirical loss MB-SGD} (and hence \cref{thm: informal iid SCO upper bound}):
\begin{lemma} (Projection lemma)
\label{lem: projection}
Let $\WW \subset \mathbb{R}^d$ be a closed convex set. Then $\| \Pi_{\WW}(a) - b\|^2 \leq \| a-b\|^2$ for any $a \in \mathbb{R}^d, b \in \WW.$
\end{lemma}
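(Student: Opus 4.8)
The plan is to use the standard variational (obtuse-angle) characterization of Euclidean projection onto a closed convex set, and then expand a square. Write $p := \Pi_{\WW}(a) = \argmin_{w \in \WW} \|w - a\|^2$, which is well-defined since $\WW$ is closed and convex and $\|\cdot\|^2$ is strongly convex and coercive. The first step is to establish the optimality inequality
\[
\langle a - p, \, w - p \rangle \leq 0 \qquad \text{for all } w \in \WW.
\]
I would get this either by citing the first-order optimality condition for minimizing the convex function $w \mapsto \tfrac12\|w-a\|^2$ over the convex set $\WW$ (the gradient $p - a$ must lie in the normal cone of $\WW$ at $p$), or by a short direct argument: for $w \in \WW$ and $t \in [0,1]$, convexity gives $p + t(w-p) \in \WW$, so $\|p + t(w-p) - a\|^2 \geq \|p - a\|^2$; expanding and dividing by $t$, then letting $t \to 0^+$, yields the displayed inequality.

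The second step is purely algebraic: since $b \in \WW$, apply the inequality with $w = b$ to obtain $\langle a - p, \, b - p\rangle \leq 0$, and then expand
\[
\|a - b\|^2 = \|(a - p) + (p - b)\|^2 = \|a - p\|^2 - 2\langle a - p, \, b - p\rangle + \|p - b\|^2 \geq \|p - b\|^2,
\]
where the last inequality drops the nonnegative term $\|a-p\|^2$ and uses $-2\langle a-p,\,b-p\rangle \geq 0$. Since $p = \Pi_{\WW}(a)$, this is exactly $\|a - b\|^2 \geq \|\Pi_{\WW}(a) - b\|^2$, as claimed.

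There is no real obstacle here; the only point requiring a little care is justifying the optimality inequality rigorously (existence and uniqueness of the projection, and passing to the limit in the difference quotient), but this is entirely routine. I would keep the proof to a few lines, most likely just citing the characterization of projections onto convex sets (it is used repeatedly in the sequel for the non-expansiveness arguments), and present the two-line expansion above.
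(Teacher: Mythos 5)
Your proof is correct and is the standard textbook argument: the variational characterization $\langle a - \Pi_\WW(a),\, w - \Pi_\WW(a)\rangle \le 0$ for all $w \in \WW$, followed by the one-line expansion of $\|a-b\|^2$. The paper states this lemma without proof (it is a classical fact about Euclidean projections onto closed convex sets), so there is nothing to compare against; your argument is exactly the one that would be supplied.
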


\begin{lemma}\citep{stich19unified}
\label{lem: stich constant stepsize}
Let $b > 0$, let $a, c \geq 0,$ and $\{\eta_t\}_{t \geq 0}$ be non-negative step-sizes such that $\eta_t \leq \frac{1}{g}$ for all $t \geq 0$ for some parameter $g \geq a$. Let $\{r_t\}_{t \geq 0}$ and $\{s_t\}_{t \geq 0}$ be two non-negative sequences of real numbers which satisfy \[
r_{t+1} \leq (1 - a \eta_t)r_t - b \eta_t s_t + c \eta_t^2
\]
for all $t \geq 0.$
Then there exist particular choices of step-sizes $\eta_t \leq \frac{1}{g}$ and averaging weights $\gamma_t \geq 0$ such that \[
\frac{b}{\Gamma_T}\sum_{t=0}^T s_t \gamma_t + a r_{T+1} = \widetilde{\mathcal{O}}\left(g r_0 \exp\left(\frac{-aT}{g} \right) + \frac{c}{aT}\right),
\]
where $\Gamma_T := \sum_{t=0}^{T} \gamma_t.$
In fact, we can choose $\eta_t$ and $\gamma_t$ as follows:\[
\eta_t = \eta = \min\left\{\frac{1}{g}, \frac{\ln\left(\max\left\{2, a^2 r_0 T^2/c\right\} \right)}{aT} \right\}, ~\gamma_t = (1 - a \eta)^{-(t+1)}.
\]
\end{lemma}

We give the excess empirical risk guarantee of ISRL-DP MB-SGD below:
\begin{lemma}
\label{lem: informal empirical loss MB-SGD}
Let $f: \WW \times \XX \to \mathbb{R}$ be $\mu$-strongly convex (with $\mu = 0$ for convex case), $L$-Lipschitz, and $\beta$-smooth in $w$ for all $x \in \XX,$ where $\WW$ is a closed convex set in $\mathbb{R}^d$ s.t. $\|w - w'\| \leq D$ for all $w, w' \in \WW.$ 
Let $\bx \in \mathbb{X}$. 
Then Noisy MB-SGD (\cref{alg: dp MB-SGD}) with $\sigma^2 = \frac{256 L^2 R \ln(\frac{2.5 R}{\delo}) \ln(2/\delo)}{n^2 \epso^2}$ attains the following empirical loss bounds as a function of step size and the number of rounds:
\\
1. (Convex) For any $\eta \leq 1/\beta$ and $R \in \mathbb{N}$, $\gamma_r := 1/R$, we have 
\[
\ER \leq \frac{D^2}{2 \eta R} + \frac{\eta L^2}{2}\left(\frac{256 d R \ln(\frac{2.5 R}{\delo}) \ln(2/\delo)}{M n^2 \epso^2} + 1\right).
\]

2. (Strongly Convex) There exists a constant stepsize $\eta_r = \eta \leq 1/\beta$ such that if $R \geq 
2\kappa\ln\left(\frac{\mu M \epso^2 n^2 \beta D}{L^2 d}\right)$,
then \begin{equation}  
\ER = \widetilde{\mathcal{O}}\left(\frac{L^2}{\mu}\left(\frac{d \ln(1/\delo)}{M \epso^2 n^2}\right) + \frac{1}{R}\right). 
\end{equation}
\end{lemma}
\begin{proof}
First, condition on the random $M_r$ and consider $M_r$ as fixed. Let $\ws \in \argmin_{w \in \WW} \hf(w)$ be any minimizer of $\hf$, and denote the average of the i.i.d. Gaussian noises across all silos in one round by $\widebar{u}_r:= \frac{1}{M_r}\sum_{i \in S_r} u_i.$ Note that $\widebar{u}_r \sim N\left(0, \frac{\sigma^2}{M_r}\mathbf{I}_d \right)$ by independence of the $\{u_i \}_{i \in [N]}$ and hence $\mathbb{E}\|\widebar{u}_r\|^2 = \frac{d \sigma^2}{M_r}.$ Then for any $r \geq 0$, conditional on $M_r,$ we have that \begin{align}
\label{eq: ting}
    &\mathbb{E}\left[\left\|w_{r+1} - \ws \right\|^2 \bigg | M_r \right] \nonumber \\
    =& \mathbb{E}\left[\left\|\Pi_{\WW}\left[w_r - \eta_r \left(\frac{1}{M_r} \sum_{i \in S_r} \frac{1}{K }\sum_{j=1}^{K} \nabla f(w_r, x_{i,j}^r) - u_i\right)\right] - \ws\right\|^2 \bigg | M_r \right] \nonumber \\
    \leq & \mathbb{E}\left[\left\|w_r - \eta_r \left(\frac{1}{M_r} \sum_{i \in S_r} \frac{1}{K }\sum_{j=1}^{K} \nabla f(w_r, x_{i,j}^r) - u_i\right) - \ws \right\|^2 \bigg | M_r \right] \nonumber \\
    = & \mathbb{E}\left[\|w_r - \ws\|^2 \bigg | M_r \right]- 2\eta_r \mathbb{E}\left[\langle \nabla \hf(w_r) + \widebar{u}_r, w_r - \ws\rangle \bigg | M_r \right] \nonumber \\ 
     & \;\;\;+  \eta_r^2\mathbb{E}\left[\left\|\widebar{u}_r + \frac{1}{M_r} \sum_{i \in S_r} \frac{1}{K }\sum_{j=1}^{K} \nabla f(w_r, x_{i,j}^r)\right \|^2 \bigg | M_r\right] \nonumber \\
    \leq & (1 - \mu\eta_r) \mathbb{E}\left[\left\|w_r - \ws \right\|^2 \bigg | M_r \right]- 2\eta_r \mathbb{E}[\hf(w_r) - \hf^* | M_r] \nonumber \\
    &\;\;\;+\eta_r^2 \mathbb{E}\left[\left\|\widebar{u}_r + \frac{1}{M_r} \sum_{i \in S_r} \frac{1}{K }\sum_{j=1}^{K} \nabla f(w_r, x_{i,j}^r)\right\|^2 \bigg | M_r\right] \nonumber \\
    &\leq (1 - \mu\eta_r) \mathbb{E}\left[\left\|w_r - \ws \right\|^2 \bigg | M_r \right]- 2\eta_r \mathbb{E}[\hf(w_r) - \hf^* | M_r]  + \eta_r^2\left(\frac{d \sigma^2}{M_r} + L^2 \right),
\end{align}
where we used \cref{lem: projection} in the first inequality, $\mu$-strong convexity of $\widehat{F}$ (for $\mu \geq 0$) and the fact that $\widebar{u}_r$ is independent of the gradient estimate and mean zero in the next inequality, and the fact that $f(\cdot, x)$ is $L$-Lipschitz for all $x$ in the last inequality (together with independence of the noise and the data again). 
Now we consider the convex ($\mu = 0$) and strongly convex ($\mu > 0$) cases separately. \\

\noindent \textbf{Convex ($\mu = 0$) case:} 
Re-arranging~\cref{eq: ting}, we get \[
\expec[\hf(w_r) - \hf^* | M_r] \leq \frac{1}{2 \eta_r}\left(\expec[\|w_r - \ws\|^2 - \|w_{r+1} - \ws\|^2]\right) + \frac{\eta_r}{2}\left(\frac{d \sigma^2}{M_r} + L^2\right)  
\]
and hence \[
\expec[\hf(w_r) - \hf^* ] \leq \frac{1}{2 \eta_r}\left(\expec[\|w_r - \ws\|^2 - \|w_{r+1} - \ws\|^2]\right) + \frac{\eta_r}{2}\left(\frac{d \sigma^2}{M} + L^2\right)
\]
by taking total expectation and using $\expec[1/M_r] = 1/M$. 
Then for $\eta_r = \eta$, the average iterate $\widebar{w}_R$ satisfies: 
\begin{align*}
    \expec[\hf\color{black}(\widebar{w}_R) - \hf^*] &\leq \frac{1}{R}\sum_{r=0}^{R-1}\expec[\hf(w_r) - \hf^*] \nonumber \\
    &\leq \frac{1}{R}\sum_{r=0}^{R-1} \frac{1}{2 \eta} (\expec[\|w_r - \ws\|^2  - \|w_{r+1} - \ws\|]) \nonumber \\
    &+ \frac{\eta}{2}\left(\frac{d \sigma^2}{M} + L^2\right)  \nonumber \\
    &\leq \frac{\|w_0 - \ws\|^2}{\eta R} + \frac{\eta_r}{2}\left(\frac{d \sigma^2}{M} + L^2\right) \nonumber.
    \end{align*}
Plugging in $\sigma^2$ finishes the proof of the convex case. 

\noindent \textbf{Strongly convex ($\mu > 0$) case:}~Recall from \cref{eq: ting} that \begin{align}
\label{eq: applying stichy}
    \mathbb{E}[\|w_{t+1} - \ws\|^2] &\leq (1 - \mu \eta_t) \mathbb{E}[\|w_t - \ws\|^2]- 2\eta_t\expec[\hf(w_t) - \hf^*] \nonumber \\
    &\;\;\;+ \eta_t^2\left(\frac{d \sigma^2}{M} + L^2\right)
\end{align}
for all $t \geq 0$ (upon taking expectation over $M_t$).  Now, \cref{eq: applying stichy} satisfies the conditions for \cref{lem: stich constant stepsize}, with sequences \[
r_t = \expec\|w_t - \ws\|^2, s_t = \expec[\hf(w_t) - \hf^*]
\]
and parameters \[
a = \mu, ~b = 2, ~c = 
\frac{d \sigma^2}{M} + L^2, ~g = 2\beta, ~T = R. \]
Then \cref{lem: stich constant stepsize} and Jensen's inequality imply \[
\ER = \widetilde{\mathcal{O}}\left(\beta D^2 \exp\left(\frac{-R}{2\kappa}\right) + \frac{L^2}{\mu}\left(\frac{1}{R} + \frac{d}{M\epso^2 n^2}\right)\right).
\]
Finally, plugging in $R$ and $\sigma^2$ completes the proof. 
\end{proof}

We are now prepared to prove \cref{thm: informal iid SCO upper bound} in the $\beta$-smooth case:
\begin{theorem}
\label{thm: informal smooth MBSGD}
Let $f(w,x)$ be 
$\beta$-smooth
in $w$ for all $x \in \XX$.
Assume $\epso \leq 2  \ln(2/\delo)$, choose $\sigma^2 = \frac{256 L^2 R \ln(\frac{2.5 R}{\delo}) \ln(2/\delo)}{n^2 \epso^2}$ and $K \geq \frac{\epso n}{4 \sqrt{2R \ln(2/\delo)}}.$ Then \cref{alg: dp MB-SGD} is $(\epso, \delo)$-ISRL-DP. Moreover, there are choices of $\{\eta_r\}_{r=1}^R$ such that
\cref{alg: dp MB-SGD} achieves the following excess loss bounds: \newline
1. If $f(\cdot, x)$ is convex, then setting 
$R = \frac{\beta D \sqrt{M}}{L}\min\left\{\sqrt{n}, \frac{\epso n}{\sqrt{d}}\right\} + \min\left\{nM, \frac{\epso^2 n^2 M}{d}\right\}$
yields 
\begin{equation}
\label{eq: smooth convex iid sco upper}
\EPL = \widetilde{\mathcal{O}}
\left(\frac{LD}{\sqrt{M}}\left(\frac{1}{\sqrt{n}} + \frac{\sqrt{d\ln(1/\delo)
}}{\epso n}\right)\right).
\end{equation}
2. If $f(\cdot, x)$ is $\mu$-strongly convex, then setting $R = \max\left(\frac{2 \beta}{\mu}\ln\left(\frac{\beta D^2 \mu M \epso^2 n^2}{dL^2}\right), \min\left\{Mn, \frac{M\epso^2 n^2}{d} \right\}\right)
$ yields 
\begin{equation}
\label{eq: smooth sc iid sco upper}
\EPL = \widetilde{\mathcal{O}}\left(\frac{L^2}{\mu M}\left(\frac{1}{n} + \frac{d 
\ln(1/\delo)
}{\epso^2 n^2} \right) \right).
\end{equation}
\end{theorem}

\begin{proof}
\textbf{Privacy:}
By independence of the Gaussian noise across silos, it suffices to show that transcript of silo $i$'s interactions with the server is DP for all $i \in [N]$ (conditional on the transcripts of all other silos). WLOG consider $i = 1$. 
By the advanced composition theorem (Theorem 3.20 in \citep{dwork2014}), it suffices to show that each of the $R$ rounds of the algorithm is $(\widetilde{\epsilon}, \widetilde{\delta})$-ISRL-DP, where $\widetilde{\epsilon} = \frac{\epso}{2\sqrt{2R\ln(2/\delo)}}$ (we used the assumption $\epso \leq 2\ln(2/\delo)$ here) and $\widetilde{\delta} = \frac{\delo}{2R}.$ First, condition on the randomness due to local sampling of the local data point $x^r_{1,1}$ (line 4 of \cref{alg: dp MB-SGD}). 
Now, the $L_2$ sensitivity of each local step of SGD is bounded by $\Delta := \sup_{|X_1 \Delta X'_1| \leq 2, w \in \WW} 
\|\frac{1}{K} \sum_{j=1}^K \nabla f(w, x_{1,j}) - \nabla f(w, x'_{1,j})\| \leq 2L/K,$ by $L$-Lipschitzness of $f.$ 
Thus, 
the standard privacy guarantee of the Gaussian mechanism (see \cite[Theorem A.1]{dwork2014}) 
implies that (conditional on the randomness due to sampling) taking $\sigma^2_1 \geq \frac{8L^2 \ln(1.25/\widetilde{\delta})}{\widetilde{\epsilon}^2 K^2}$ suffices to ensure that round $r$ (in isolation) is $(\widetilde{\epsilon}, \widetilde{\delta})$-ISRL-DP. Now we invoke the randomness due to sampling:
\citep{ullman2017}
implies that round $r$ (in isolation) is $(\frac{2\widetilde{\epsilon} K}{n}, \widetilde{\delta})$-ISRL-DP.
The assumption on $K$ ensures that $\epsilon' := \frac{n}{2K}\frac{\epso}{2 \sqrt{2 R \ln(2/\delo})} \leq 1$, so that the privacy guarantees of the Gaussian mechanism and amplification by subsampling stated above indeed hold.  Therefore, with sampling, it suffices to take $\sigma^2 \geq \frac{32 L^2 \ln(1.25/\widetilde{\delta})}{n ^2 \widetilde{\epsilon}^2} = \frac{256 L^2 R \ln(2.5R/\delo)\ln(2/\delo)}{n^2 \epso^2}$ to ensure that round $r$ (in isolation) is $(\widetilde{\epsilon}, \widetilde{\delta})$-ISRL-DP for all $r$ and hence that the full algorithm ($R$ rounds) is $(\epso, \delo)$-ISRL-DP. 

\paragraph{Excess risk:} 
1. First suppose $f$ is merely convex ($\mu = 0$). By \cref{lem: convex unif stability}, \cref{lem: generalization gap for unif stable alg}, and \cref{lem: informal empirical loss MB-SGD}, we have: 
 \begin{align*}
 \EPL &\leq \alpha + \ER \\
 & \leq \frac{2L^2 R \eta}{n M} + \frac{D^2}{2 \eta R} + \frac{\eta L^2}{2M}\left(\frac{256 d R \ln(\frac{2.5 R}{\delo}) \ln(2/\delo)}{n^2 \epso^2} + 1\right)
 \end{align*}
for any $\eta \leq 1/\beta$. 
Choosing $\eta = \min\left(\frac{1}{\beta}, \frac{D}{L \sqrt{R}}\min\left\{1, \frac{\epso n \sqrt{M}}{\sqrt{Rd}}, \sqrt{\frac{nM}{R}}\right\}\right)$ implies \begin{align*}
    \EPL \lesssim \frac{\beta D^2}{R} + \frac{LD}{\sqrt{R}}\left(1 + \frac{\sqrt{d R \ln(R/\delo)}}{\epso n \sqrt{M}} + \sqrt{\frac{R}{nM}}\right). 
\end{align*}
Finally, one verifies that the prescribed choice of $R$ yields \cref{eq: smooth convex iid sco upper}. 
\\
2. Now suppose $f$ is $\mu$-strongly convex. Then for the $\eta \leq 1/\beta$ used in the proof of~\cref{lem: informal empirical loss MB-SGD} and $R \geq \frac{2 \beta}{\mu}\ln\left(\frac{\beta D^2 \mu M \epso^2 n^2}{dL^2}\right)$, we have 
\[
\ER = \widetilde{\mathcal{O}}\left(\frac{L^2}{\mu}\left(\frac{d \ln(1/\delo)}{M \epso^2 n^2} + \frac{1}{R} + \frac{1}{Mn}\right)\right),
\]
by \cref{lem: informal empirical loss MB-SGD}, \cref{lem: convex unif stability}, and \cref{lem: generalization gap for unif stable alg}. Hence \cref{eq: smooth sc iid sco upper} follows by our choice of $R \geq \min\left(Mn, \frac{M \epso^2 n^2}{d}\right)$. 
\end{proof}

We use \cref{thm: informal smooth MBSGD} to prove \cref{thm: informal iid SCO upper bound} via Nesterov smoothing~\citep{nesterov2005smooth}, similar to how~\citep{bft19} proceeded for CDP SCO with non-strongly convex loss and $N=1$. That is, for non-smooth $f$, we run ISRL-DP Noisy MB-SGD on the smoothed objective (a.k.a. $\beta$-Moreau envelope) $f_{\beta}(w):= \min_{v \in \WW} \left(f(v) + \frac{\beta}{2}\|w - v\|^2 \right)$, where $\beta > 0$ is a design parameter that we will optimize for. The following key lemma allows us to easily extend~\cref{thm: informal smooth MBSGD} to non-smooth $f$: 

\begin{lemma}[\cite{nesterov2005smooth}]
\label{lem: properties of moreau}
Let $f: \WW \to \mathbb{R}^d$ be convex and $L$-Lipschitz and let $\beta > 0$. Then the $\beta$-Moreau envelope $f_{\beta}(w):= \min_{v \in \WW} \left(f(v) + \frac{\beta}{2}\|w - v\|^2 \right)$ satisfies: \\
1. $f_{\beta}$ is convex, $2L$-Lipschitz, and $\beta$-smooth.\\
2. $\forall w$, $f_{\beta}(w) \leq f(w) \leq f_{\beta}(w) + \frac{L^2}{2 \beta}.$ 
\end{lemma}

Now let us re-state the precise version of \cref{thm: informal iid SCO upper bound} before providing its proof: 
\begin{theorem}[Precise version of~\cref{thm: informal iid SCO upper bound}]
Let $\epso \leq 2  \ln(2/\delo)$ and choose $\sigma^2 = \frac{256 L^2 R \ln(\frac{2.5 R}{\delo}) \ln(2/\delo)}{n^2 \epso^2}$, $K \geq \frac{\epso n}{4 \sqrt{2R \ln(2/\delo)}}.$ Then $\cref{alg: dp MB-SGD}$ is $(\epso, \delo)$-ISRL-DP. Further, there exist choices of $\beta > 0$ such that running~\cref{alg: dp MB-SGD} on $f_{\beta}(w, x):= \min_{v \in \WW} \left(f(v,x) + \frac{\beta}{2}\|w - v\|^2 \right)$  %
yields:
\newline
1. If $f(\cdot, x)$ is convex, then setting 
$R = \frac{\beta D \sqrt{M}}{L}\min\left\{\sqrt{n}, \frac{\epso n}{\sqrt{d}}\right\} + \min\left\{nM, \frac{\epso^2 n^2 M}{d}\right\}$
yields 
\begin{equation}
\EPL = \widetilde{\mathcal{O}}
\left(\frac{LD}{\sqrt{M}}\left(\frac{1}{\sqrt{n}} + \frac{\sqrt{d\ln(1/\delo)
}}{\epso n}\right)\right).
\end{equation}
2. If $f(\cdot, x)$ is $\mu$-strongly convex, then setting $R = \max\left(\frac{2 \beta}{\mu}\ln\left(\frac{\beta D^2 \mu M \epso^2 n^2}{dL^2}\right), \min\left\{Mn, \frac{M\epso^2 n^2}{d} \right\}\right)
$ yields 
\begin{equation}
\EPL = \widetilde{\mathcal{O}}\left(\frac{L^2}{\mu M}\left(\frac{1}{n} + \frac{d 
\ln(1/\delo)
}{\epso^2 n^2} \right) \right).
\end{equation}
\end{theorem}

\begin{proof} 
\textbf{Privacy:} ISRL-DP is immediate from post-processing~\cite[Proposition 2.1]{dwork2014}, since we already showed that~\cref{alg: dp MB-SGD} (applied to $f_\beta$) is ISRL-DP. 
\paragraph{Excess risk:} We have $\EPL \leq \mathbb{E}F_{\beta}(\widehat{w}_R) - F^{*}_{\beta} + \frac{L^2}{2 \beta},
$
by part 2 of \cref{lem: properties of moreau}. Moreover, by part 1 of \cref{lem: properties of moreau} and \cref{thm: informal smooth MBSGD}, we have: \\
\[
1. ~~\mathbb{E}F_{\beta}(\widehat{w}_R) - F^{*}_{\beta} =  \widetilde{\mathcal{O}}
\left(\frac{LD}{\sqrt{M}}\left(\frac{1}{\sqrt{n}} + \frac{\sqrt{d\ln(1/\delo)
}}{\epso n}\right)\right)
\] for convex $f$, and 
\[
2. ~~\mathbb{E}F_{\beta}(\widehat{w}_R) - F^{*}_{\beta} = \widetilde{\mathcal{O}}\left(\frac{L^2}{\mu M}\left(\frac{1}{n} + \frac{d 
\ln(1/\delo)
}{\epso^2 n^2} \right) \right),
\]
for $\mu$-strongly convex $f$. Thus, choosing $\beta_1$ such that $L^2/\beta_1 \leq \frac{LD}{\sqrt{M}}\left(\frac{1}{\sqrt{n}} + \frac{\sqrt{d\ln(1/\delo)
}}{\epso n}\right)$ and 
$\beta_2$ such that $L^2/\beta_2 \leq \frac{L^2}{\mu M}\left(\frac{1}{n} + \frac{d 
\ln(1/\delo)
}{\epso^2 n^2} \right)$ completes the proof. 
\end{proof}

\subsection{Lower Bounds for ISRL-DP FL: Supplemental Material and Proofs}
\label{app: lower bounds}
This section requires familiarity with the notation introduced in the rigorous definition of ISRL-DP in~\cref{app: ISRL-DP}. 
\subsubsection{$C$-Compositionality}
The $(\epso, \delo)$-ISRL-DP algorithm class $\mathbb{A}_{(\epso, \delo), C}$  contains all sequentially interactive and fully interactive, \textit{$C$-compositional} algorithms. 

\begin{definition}[Compositionality]
\label{def: compositional}
Let $\mathcal{A}$ be an $R$-round $(\epsilon_0, \delta_0)$-ISRL-DP FL algorithm with data domain $\XX$. Let $\{(\epsilon_0^r, \delta_0^r)\}_{r =1}^R$ denote the minimal (non-negative) parameters of the local randomizers $\mathcal{R}^{(i)}_r$ selected at round $r$  such that $\mathcal{R}^{(i)}_r(\mathbf{Z}_{(1:r-1)}, \cdot)$ is $(\epsilon_0^r, \delta_0^r)$-DP for all $i \in [N]$ and all $\mathbf{Z}_{(1:r-1)}.$ For 
$C > 0$, we say that $\mathcal{A}$ is \textit{$C$-compositional} if $\sqrt{\sum_{r \in [R]} (\epsilon_0^r)^2} \leq C \epsilon_0.$ If such $C$ is an absolute constant, we simply say $\Al$ is \textit{compositional}. 
\end{definition}

\cref{def: compositional} is an extension of the definition in~\cite{joseph2019} to $\delo > 0$.

\subsubsection{Algorithms whose Privacy follows from Advanced Composition Theorem are $1$-Compositional}
\label{app: AC compositional}
Suppose $\Al$ is $(\epso, \delo)$-ISRL-DP by the advanced composition theorem  \cite[Theorem 3.20]{dwork2014}. Then \[
\epso = \sqrt{2\sum_{r=1}^R (\epsor)^2 \ln(1/\delta')} + \sum_{r=1}^
R \epsor(e^{\epsor} - 1),
\]
and $\delo = \sum_{r=1}^R \delor + \delta'$ for any $\delta' \in (0, 1)$. Assume $\delta' \in (0, 1/3)$ without loss of generality: otherwise the privacy guarantee of $\Al$ is essentially meaningless (see~\cref{rem: restrictions on delta in lower bound}). Then $\epso \geq \sqrt{2\sum_{r=1}^R (\epsor)^2 \ln(1/\delta')} \geq \sqrt{2} \sqrt{\sum_{r=1}^R (\epsor)^2} \geq \sqrt{\sum_{r=1}^R (\epsor)^2}$, so that $\Al$ is $1$-compositional. Note that even if $\delta' > 1/3$, $\Al$ would still be compositional, but the constant $C$ may be larger than $1$. 

\subsubsection{Example of LDP Algorithm that is Not Compositional}
\label{compositional example}
This example is a simple modification of 
Example 2.2 in \citep{joseph2019} 
(adapted to our definition of compositionality for $\delo > 0$). Given any $C > 0$, set $d := 2C^2$ and let $\XX = \{e_1, \cdots e_d \} \subset \{0,1\}^d$ be the standard basis for $\mathbb{R}^d.$ Let $n=1$ and $\bx = (x_1, \cdots, x_N) \in \XX^N.$ For all $i \in [N]$ let $\mathcal{Q}^{(i)}: \XX \to \XX$ be the randomized response mechanism that outputs $\mathcal{Q}^{(i)}(x_i)= x_i$ with probability $\frac{e^{\epso}}{e^{\epso} + d - 1}$ and otherwise outputs a uniformly random element of $\XX \setminus \{x_i\}.$ Note that $\mathcal{Q}^{(i)}$ is $\epso$-DP, hence $(\epso, \delo)$-DP for any $\delo > 0$.
Consider the $d$-round algorithm $\Al:\XX^N \to \ZZ^{d \times N}$ in \cref{not compositional alg}, where $\ZZ = \mathbb{R}^d$

\begin{algorithm}[ht]
\caption{LDP Algorithm that is not $C$-compositional}
\label{not compositional alg}
\begin{algorithmic}[1]
\FOR{$r \in [d]$}
\FOR{$i \in N$} 
\IF {$x_i = e_r$}
\STATE $\rand_r(x_i):= \mathcal{Q}^{(i)}(x_i)$.
\ELSE 
\STATE $\rand_r(x_i):= 0 \in \mathbb{R}^d$. 
\ENDIF
\ENDFOR
\ENDFOR\\
\STATE {\bfseries Output:} $\{\rand_r(x_i)\}_{i \in [N], r \in [d]}$.
\end{algorithmic}
\end{algorithm}

Since each silo's data is only referenced once and $\mathcal{Q}^{(i)}$ is $\epso$-DP, we have $\epsor = \epso$ and $\Al$ is $(\epso, \delo)$-DP. However, $\sqrt{\sum_{r=1}^d (\epsor)^2} = \sqrt{d \epso^2} = \sqrt{2} C \epso > C \epso$, so that $\Al$ is not $C$-compositional.

Also, note that our One-Pass Accelerated Noisy MB-SGD is only $C$-compositional for $C \geq \sqrt{R}$, since $\epso = \epsor$ for this algorithm. Thus, substituting the $R$ that is used to prove the upper bounds in~\cref{thm: Convex hetero SCO MB-SGD upper} (see~\cref{app: non-iid proof}) and plugging $C = \sqrt{R}$ into~\cref{thm: SCO lower bound} explains where the non-i.i.d. lower bounds in~\cref{table: summary} come from. 

\subsubsection{Proof of \cref{thm: SCO lower bound}
}
\label{app: proof of lower bounds}
First, let us state the complete, formal version of~\cref{thm: SCO lower bound}, which uses notation from~\cref{app: ISRL-DP}: 
\begin{theorem}[Complete Version of~\cref{thm: SCO lower bound}]
Let 
~$\epso \in (0, \sqrt{N}], ~\delo = o(1/nN)$, and $\Al \in \mathbb{A}_{(\epso, \delo), C}$. Suppose that in each round~$r$, the local randomizers  are all $(\epsor, \delor)$-DP, for $\epsor \lesssim \frac{1}{n},~\delor = o(1/nNR)$, $N \geq 16\ln(2/\delor n)$.
Then: \\
1. There exists a $f \in \FF$ and a distribution $\mathcal{D}$ such that for $\bx \sim \mathcal{D}^{nN}$, we have:
\small
\begin{equation*}
\small
\mathbb{E}F(\Al(\bx)) - F^* = \widetilde{\Omega}\left(\frac{\phi D}{\sqrt{Nn}} + LD\min\left\{1, \frac{\sqrt{d}}{\epso n \sqrt{N} 
C^2
} \right\}\right).
\end{equation*}
\normalsize
\vspace{-.03in}
2. There exists a $\mu$-smooth $f \in \GG$ and distribution $\mathcal{D}$ such that for $\bx \sim \mathcal{D}^{nN}$, we have
\small
\vspace{-.03in}
\begin{equation*}
\small
\mathbb{E}F(\Al(\bx)) - F^* = \widetilde{\Omega}\left(
\frac{\phi^2}{\mu nN} + LD\min\left\{1, \frac{d}{\epso^2 n^2 N 
C^4
}\right\}
\right).
\end{equation*}
Further, if $\Al \in \mathbb{A}$, then the above lower bounds hold with $C = 1$.
\end{theorem}
\noindent Before we proceed to the proof of \cref{thm: SCO lower bound}, we recall the simpler characterization of ISRL-DP for sequentially interactive algorithms. A sequentially interactive algorithm $\Al$ with randomizers $\{\RR^{(i)}\}_{i=1}^N$ is $(\epso, \delo)$-ISRL-DP if and only if for all $i \in [N]$, $\RR^{(i)}(\cdot, Z^{(1: i-1)}): \XX_i^n \times \ZZ$ is $(\epso, \delo)$-DP for all $Z^{(1: i-1)} \in \ZZ^{i-1}$. In what follows, we will fix $\XX_i = \XX$ for all $i$. We now turn to the proof. \\

\noindent \textbf{Step 1: Privacy amplification by shuffling.}
We begin by stating and proving the amplification by shuffling result that we will leverage to obtain \cref{thm: SCO lower bound}: 
\begin{theorem}
\label{thm: R round shuffling amp}
Let $\Al \in \mathbb{A}_{(\epso, \delo), C}$
such that $\epso \in (0, \sqrt{N}]$ and $\delo \in (0,1).$ 
Assume that in each round, the local randomizers $\rand_r(\bz_{(1: r-1)}, \cdot): \XX^n \to \ZZ$ are $(\epsor, \delor)$-DP for all $i \in [N], ~r \in [R], ~\bz_{(1:r-1)} \in \ZZ^{r-1 \times N}$ with
$\epsor \leq \frac{1}{n}$. Assume $N \geq 16\ln(2/\delor n)$. If $\Al$ is $C$-compositional, then assume
~$\delor \leq \frac{1}{14nNR}$ and denote $\delta := 14Nn \sum_{r=1}^R \delor$; if instead $\Al$ is sequentially interactive, then assume $\delo = \delor \leq \frac{1}{7Nn}$ and denote $\delta := 7Nn\delo.$ Let $\Al_s: \mathbb{X} \to \WW$ be the same algorithm as $\Al$ except that in each round $r$, $\Al_s$ draws a random permutation $\pi_r$ of $[N]$ and applies $\rand_r$ to $X_{\pi_r(i)}$ instead of $X_i$. 
Then, $\Al_s$ is $(\epsilon, \delta)$-CDP, where 
$\epsilon = 
\mathcal{O}\left(
\frac{\epso \ln\left(1/nN \delo^{\min}\right) C^2}{\sqrt{N}}
\right),$ 
and $\delo^{\min} := \min_{r \in [R]} \delor$. In particular, if $\Al \in \mathbb{A}$, then $\epsilon = 
\mathcal{O}\left(
\frac{\epso \ln\left(1/nN \delo^{\min}\right)}{\sqrt{N}}
\right).$ Note that for sequentially interactive $\Al$, ~$\delo^{\min} = \delo.$ 
\end{theorem}
To the best of our knowledge, the restriction on $\epsor$ is needed to obtain $\epsilon = \widetilde{O}(\epso /\sqrt{N})$ in all works that have analyzed privacy amplification by shuffling \citep{efm20, fmt20, balle2019privacy, cheu2019distributed, balle2020privacy}, but these works focus on the sequentially interactive case with $n=1$, so the restriction amounts to $\epso \lesssim 1$ (or $\epso = \widetilde{O}(1)).$ The non-sequential $C$-compositional part of \cref{thm: R round shuffling amp} will follow as a corollary (\cref{cor: cor of Thm 3.8}
) of the following result which analyzes the privacy amplification in each round:

\begin{theorem}[Single round privacy amplification by shuffling]
\label{thm: Thm 3.8}
Let $\epsilon^r_0 \leq \ln\left(\frac{N}{16 \ln(2/\delta^r)}\right)/n, ~r \in \mathbb{N}$ and let $\rand_r(\bz, \cdot): \mathcal{X}^n \to \mathcal{Z}$ be an $(\epsor, \delor)$-DP local randomizer for all $\bz = Z_{(1:r-1)}^{(1:N)} \in \mathcal{Z}^{(r-1) \times N}$ and $i \in [N],$ where $\XX$ is an arbitrary set. Given a distributed data set $\bx = (X_1, \cdots, X_N) \in \XX^{N \times n}$ and $\bz = Z_{(1:r-1)}^{(1:N)}$, consider the shuffled algorithm $\mathcal{A}^r_s: \XX^{n \times N} \times \ZZ^{(r-1) \times N} \to \mathcal{Z}^N$ that first samples a random permutation $\pi$ of $[N]$ and then computes $Z_r = (Z^{(1)}_r, \cdots, Z^{(N)}_r),$ where 
$Z^{(i)}_r := \rand_{r}(\bz, X_{\pi(i)}).$ Then, $\mathcal{A}^r_s$ is $(\epsilon^r, \widetilde{\delta}^r)$-CDP, where \begin{equation} \label{eq: epsilon r amplification}
    \epsilon^r := \ln\left[1 + \left(\frac{e^{\epsor} - 1}{e^{\epsor} + 1}\right)\left(\frac{8 \sqrt{e^{n\epsor}\ln(4/\delta^r)}}{\sqrt{N}} + \frac{8 e^{n\epsor}}{N} \right) 
    \right],
\end{equation}
and $\widetilde{\delta^r}:= \delta^r + 2Nne^{(n-1)\epsor}\delor.$
In particular, if $\epsor =  \mathcal{O}\left(\frac{1}{n}\right),$ then \begin{equation}
\epsilon^r = \mathcal{O}\left(\frac{\epsor \sqrt{\ln(1/\delta^r)}}{\sqrt{N}}\right).\end{equation}
Further, if $\epsor \leq 1/n$, then setting $\delr := 
Nn
\delor
$ 
implies that \begin{equation}
\label{eq: choose epsr}
\epsr = \mathcal{O}\left(\frac{\epsor \sqrt{\ln(1/nN\delor)}}{\sqrt{N}} \right)\end{equation}
and 
$\widetilde{\delr} 
\leq
7Nn
\delor$, which is in $(0,1)$ if we assume $\delor \in (0, \frac{1}{7Nn}].$
\end{theorem}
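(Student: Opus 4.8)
The plan is to prove \cref{thm: Thm 3.8} by adapting the single-round shuffle-amplification argument of \cite[Theorem 3.8]{fmt20}, which is stated for one sample per user, to the federated setting in which each client holds a length-$n$ record. Fix the prefix transcript $\bz = Z_{(1:r-1)}^{(1:N)}$; since every DP hypothesis on $\rand_r$ is assumed to hold uniformly in $\bz$, it suffices to argue for this fixed $\bz$. Let $\bx, \bx'$ be CDP-adjacent databases; without loss of generality they differ in exactly one sample, belonging to client $1$, so $X_1, X_1' \in \XX^n$ differ in one coordinate and $X_i = X_i'$ for $i \ge 2$. The reduction is to view each client's whole dataset as a single ``record'' over the universe $\mathcal{U} := \XX^n$, so that $\mathcal{A}^r_s$ becomes an $N$-record shuffle protocol with local randomizer $\rand_r(\bz, \cdot) : \mathcal{U} \to \ZZ$. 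The key point — and the source of each term in \eqref{eq: epsilon r amplification} — is that the amplification argument needs \emph{two} different DP guarantees for $\rand_r$ over $\mathcal{U}$: a \emph{pairwise} guarantee for the two specific options $X_1, X_1'$ of the differing client, which differ in only one sample, hence $\rand_r(\bz, X_1) \edorsim \rand_r(\bz, X_1')$; and a \emph{uniform} guarantee between \emph{arbitrary} records in $\mathcal{U}$, which may differ in up to $n$ samples, hence (by group privacy, \cite[Theorem 10]{kamath}) $\rand_r(\bz, \cdot)$ is $(n\epsor,\, n e^{(n-1)\epsor}\delor)$-DP over $\mathcal{U}$. These give, respectively, the factor $\tfrac{e^{\epsor}-1}{e^{\epsor}+1}$ and the $e^{n\epsor}$-dependent terms in $\epsilon^r$, and the additive $2Nn e^{(n-1)\epsor}\delor$ in $\widetilde{\delta}^r$.

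Concretely, I would run the \cite{fmt20} argument — which already handles $(\epsilon,\delta)$-DP (not merely pure-DP) local randomizers — with this two-parameter bookkeeping. First, the ``blanket'' (data-independent overlap) decomposition: for each client $i \ge 2$ one writes $\rand_r(\bz, X_i) = \gamma\, \omega + (1-\gamma)\, \nu_i$ for a fixed reference distribution $\omega$ not depending on the data and a leftover distribution $\nu_i$, where the \emph{uniform} $n\epsor$-guarantee is exactly what licenses the mixing weight $\gamma \ge e^{-n\epsor}$ for an \emph{arbitrary} $X_i$ (the $\delor$-slack enters here and is carried as an additive $\delta$ term, inflated by $n e^{(n-1)\epsor}$ through group privacy and by $N$ through a union bound over clients, giving $2Nn e^{(n-1)\epsor}\delor$). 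Let $C \sim \mathrm{Bin}(N-1, \gamma)$ count the clients whose contribution is ``resampled from $\omega$''; conditioned on $C = c$, the shuffled multiset consists of $c$ i.i.d.\ draws from $\omega$, client $1$'s output (drawn from $\rand_r(\bz, X_1)$ or $\rand_r(\bz, X_1')$), and $N - 1 - c$ leftover outputs, with client $1$'s slot uniformly hidden among the $c+1$ ``$\omega$-or-client-$1$'' positions. A hockey-stick computation then bounds the privacy loss between $\bx$ and $\bx'$: the per-slot distinguishing advantage of client $1$'s two options is controlled by the \emph{pairwise} bound $\tfrac{e^{\epsor}-1}{e^{\epsor}+1}$ (plus $\delor$), it is diluted by being hidden among $\Theta(c)$ indistinguishable uniform slots, and — since the shuffled mechanism satisfies a concentrated-DP-type bound in this regime — converting back to $(\epsilon,\delta)$-DP costs the $\sqrt{\ln(4/\delta^r)}$ factor. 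The hypothesis $\epsor \le \ln\big(N/(16\ln(2/\delta^r))\big)/n$ is precisely what forces $(N-1)e^{-n\epsor} \gtrsim \ln(2/\delta^r)$, so a Chernoff bound gives $C \ge (N-1)e^{-n\epsor}/2$ except on an event of probability $\le \delta^r$ — this is the $\delta^r$ appearing in $\widetilde{\delta}^r$; plugging $c \gtrsim N e^{-n\epsor}$ into the dilution yields exactly \eqref{eq: epsilon r amplification}. The asymptotic consequences are then routine: when $\epsor = \mathcal{O}(1/n)$ one has $e^{n\epsor} = \Theta(1)$ and $\tfrac{e^{\epsor}-1}{e^{\epsor}+1} = \Theta(\epsor)$, so $\epsilon^r = \ln\big(1 + \mathcal{O}(\epsor\sqrt{\ln(1/\delta^r)/N})\big) = \mathcal{O}\big(\epsor\sqrt{\ln(1/\delta^r)/N}\big)$; substituting $\delta^r := Nn\delor$ and $\epsor \le 1/n$ gives \eqref{eq: choose epsr}, and $\widetilde{\delta}^r = \delta^r + 2Nn e^{(n-1)\epsor}\delor \le Nn\delor + 2Nn\, e\, \delor \le 7Nn\delor$, which lies in $(0,1)$ whenever $\delor \le 1/(7Nn)$.

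The main obstacle is the two-parameter bookkeeping: the original \cite{fmt20} proof uses a single DP parameter throughout, so I must re-audit which steps genuinely need only the tight pairwise $(\epsor,\delor)$-bound (every comparison between $\rand_r(\bz, X_1)$ and $\rand_r(\bz, X_1')$) versus which need the coarser group-privacy $(n\epsor,\, n e^{(n-1)\epsor}\delor)$-bound (the blanket weight $\gamma$ and the leftover decomposition, which must hold for records at distance up to $n$), and then verify that the constants in \eqref{eq: epsilon r amplification} survive the split — in particular that only the mixing weight degrades by $e^{n\epsor}$ while the per-slot advantage keeps the tight $\tfrac{e^{\epsor}-1}{e^{\epsor}+1}$, and that the binomial-concentration step still succeeds under the stated (now $n$-scaled) constraint on $\epsor$. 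A secondary, purely bookkeeping, matter is propagating the $\delta$-slack correctly through group privacy and the union over the $N$ clients to land on $2Nn e^{(n-1)\epsor}\delor$ rather than a weaker bound. Finally, note that the adaptivity in $r$ is a non-issue here: the statement fixes $\bz$ and all hypotheses are uniform over $\bz$, so this single-round guarantee will compose cleanly (via the strong composition used for \cref{thm: R round shuffling amp}) in the next step.
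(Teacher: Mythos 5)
Your proposal is correct and follows essentially the same route as the paper's proof: you identify the key two-parameter bookkeeping (the tight pairwise $(\epsor,\delor)$ guarantee for the two adjacent options of the differing client, giving the $\frac{e^{\epsor}-1}{e^{\epsor}+1}$ factor via the Kairouz--Oh--Viswanath decomposition, versus the group-privacy $(n\epsor,\, n e^{(n-1)\epsor}\delor)$ guarantee licensing the $\gamma \ge e^{-n\epsor}$ blanket weight), you correctly trace the $\delta$-slack through group privacy and the union over $N$ clients, and you correctly place the role of the hypothesis on $\epsor$ in the binomial-concentration step---which in the paper is packaged inside Lemma 3.3 of \cite{fmt20}, invoked as a black box after constructing the modified randomizer $\LL$. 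The only cosmetic difference is that you unroll the inside of that lemma and describe the reference distribution as ``data-independent,'' whereas the paper's reference $\rho = \tfrac{1}{2}(\RR(\Xoz)+\RR(\Xoo))$ depends on client~$1$'s two candidate records (but not on clients $i\ge 2$), which is exactly what the blanket argument needs.
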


We sometimes refer to the algorithm $\Al_s^r$ as the ``shuffled algorithm derived from the randomizers $\{\rand_r\}.$'' From \cref{thm: Thm 3.8}, we obtain: 
\begin{corollary}[$R$-round privacy amplification for $C$-compositional algorithms]
\label{cor: cor of Thm 3.8}
Let $\mathcal{A}:\XX^{n \times N} \to \ZZ^{R \times N}$ be an $R$-round $(\epso, \delo)$-ISRL-DP and $C$-compositional algorithm such that $\epso \in (0, \sqrt{N}]$ and $\delo \in (0,1),$ where $\XX$ is an arbitrary set. Assume that in each round, the local randomizers $\rand_r(\bz_{(1: r-1)}, \cdot):\XX^n \to \ZZ$ are $(\epsor, \delor)$-DP for $i \in [N], r \in [R]$, where $N \geq 16\ln(2/\delor n)$, 
$\epsor \leq \frac{1}{n}$, 
and
~$\delor \leq \frac{1}{14nNR}.$ 
Then, the shuffled algorithm $\mathcal{A}_s: \XX^{n \times N} \to \ZZ^{R \times N}$ derived from $\{\rand_r(\bz_{(1: r-1), \cdot})\}_{i \in [N], r \in [R]}$ (i.e. $\mathcal{A}_s$ is the composition of the $R$ shuffled algorithms $\mathcal{A}^r_s$ defined in \cref{thm: Thm 3.8}) is $(\epsilon, \delta)$-CDP, where 
$\delta \leq 14Nn \sum_{r=1}^R \delor$
and 
$\epsilon = 
\mathcal{O}\left(
\frac{\epso \ln\left(1/nN \delo^{\min}\right) C^2}{\sqrt{N}}
\right),$ 
where $\delo^{\min} := \min_{r \in [R]} \delor.$ In particular, if $\Al \in \mathbb{A}$ is compositional, then $\epsilon = 
\mathcal{O}\left(
\frac{\epso \ln\left(1/nN \delo^{\min}\right)}{\sqrt{N}}
\right)$.
\end{corollary}
\begin{proof}
Let $\delta':= \sum_r Nn\delor$ and $\delr := Nn\delor.$ 
Then the (central) privacy loss of the full $R$-round shuffled algorithm is bounded as \begin{align*}
    \epsilon &\leq 2\sum_{r} (\epsr)^2 + \sqrt{2 \sum_r(\epsr)^2 \ln(1/\delta')} \\
    &= \mathcal{O}\left(\sum_r \left(\frac{(\epsor)^2 \ln(1/\delr)}{N}\right) + \sqrt{\sum_r \frac{(\epsor)^2 \ln(1/\delr) \ln(1/\delta')}{N}}  \right)
    \\
     &= \mathcal{O}\left(\frac{C^2 \epso \ln(1/nN\delo^{\min})}{\sqrt{N}}\right),
\end{align*}
where the three (in)equalities follow in order from the Advanced Composition Theorem \citep{dwork2014}, \cref{eq: choose epsr} in \cref{thm: Thm 3.8}, and $C$-compositionality of $\mathcal{A}$ combined with the assumption $\epso \lesssim \sqrt{N}$. Also, $\delta = \delta' + \sum_r \widetilde{\delta}^r$ by the Advanced Composition Theorem, where $\widetilde{\delta}_r \leq 7Nn\delor$ by \cref{thm: Thm 3.8}. Hence $\delta \leq 14Nn\sum_{r} \delor.$ In particular, if $\Al$ is compositional, then $C = \mathcal{O}(1)$, so $\epsilon = 
\mathcal{O}\left(
\frac{\epso \ln\left(1/nN \delo^{\min}\right)}{\sqrt{N}}
\right)$. 
\end{proof}

\begin{remark}
\label{rem: restrictions on delta in lower bound}
The upper bounds assumed on $\delor$ and $\delr$ in \cref{thm: SCO lower bound} ensure that $\delta \in (0,1)$
and that the lower bounds of \citet{bst14} apply (see \cref{thm: cdp sco lower bounds}). 
These assumptions are not very restrictive in practice, since $\delor, \delo \ll 1/n$ is needed for meaningful privacy guarantees (see e.g. Chapter 2 in \citep{dwork2014}) and $R$ must be polynomial for the algorithm to run. To quote \citep{dwork2014}, ``typically we are interested in values of $\delta$ that are less than the inverse of any polynomial in the size of the database'' (page 18). For larger $\delta$ (e.g.  $\delta = \Omega(1/n)$), there are examples of algorithms that satisfy the definition of DP but clearly violate any reasonable notion of privacy. For instance, an algorithm that outputs $\delo n$ random samples from each silo's data set is $(0, \delo)$-ISRL-DP, but completely violates the privacy of at least one person in each silo if $\delo \geq 1/n$.
Also, since $N \gg 1$ is the regime of interest (otherwise if $N = \widetilde{O}(1)$, the CDP lower bounds of \citet{bft19} already match our upper bounds up to logarithms), the requirement that $N$ be larger than $16\ln(2/\delo^{\min} n)$ is unimportant.\footnote{Technically, this assumption on $N$ is needed to ensure that the condition on $\epsor$ in \cref{thm: Thm 3.8} is satisfied. A similar restriction appears in Theorem 3.8 of \citet{fmt20}.}
\end{remark}

The sequentially interactive part of \cref{thm: R round shuffling amp} will be clear directly from the proof of \cref{thm: Thm 3.8}.
We now turn to the proof of \cref{thm: Thm 3.8}, which uses the techniques from \citep{fmt20}. First, we'll need some more notation. The privacy relation in \cref{eq: DP} between
random variables $P$ and $Q$ can be characterized by the \textit{hockey-stick divergence}:
$D_{e^{\epsilon}}(P \| Q):= \int \max\{0, p(x) - e^{\epsilon} q(x)\}dx,$ where $p$ and $q$ denote the probability density or mass functions of $P$ and $Q$ respectively. Then $P \edsim Q$ iff $\max\{ D_{e^{\epsilon}}(P \| Q), D_{e^{\epsilon}}(Q \| P)\} \leq \delta.$ Second, recall the \textit{total variation distance} between $P$ and $Q$ is given by $TV(P, Q) = \frac{1}{2} \int_{\mathbb{R}} |p(x) - q(x)|dx.$  Third, we recall the notion of group privacy: 
\begin{definition}[Group DP]
A randomized algorithm $\mathcal{A}: \XX^{\mathcal{N}} \to \mathcal{Z}$ 
is $(\epsilon, \delta)$ group DP for groups of size $\mathcal{N}$ if $\Al(\bx) \edsim \Al(\bx')$ for all $\bx, \bx' \in \XX^{\mathcal{N}}.$
\end{definition}

We'll also need the following stronger version of a decomposition from \citep{kairouz15} and Lemma 3.2 of \citet{mv16}.

\begin{lemma}[\cite{kairouz15}] 
\label{lem 3.4}
Let $\RR_0, \RR_1: \XX^n \to \ZZ$ be local randomizers such that $\RR_0(X_0)$ and $\RR_1(X_1)$ are $(\epsilon, 0)$ indistinguishable. Then, there exists a randomized algorithm $U: \{X_0, X_1\} \to \ZZ$ such that $R_0(X_0) = \frac{e^{\epsilon}}{e^{\epsilon} + 1}U(X_0) + \frac{1}{e^{\epsilon} + 1}U(X_1)$ and $R_1(X_1) = \frac{1}{e^{\epsilon} + 1}U(X_0) + \frac{e^{\epsilon}}{e^{\epsilon} + 1}U(X_1).$
\end{lemma}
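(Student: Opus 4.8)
The plan is to solve explicitly for the output law of the canonical mechanism $U$ and then check that the resulting objects are genuine probability distributions. Let $P_0$ and $P_1$ denote the laws of $\RR_0(X_0)$ and $\RR_1(X_1)$, fix a common dominating $\sigma$-finite measure $\mu$ on $\ZZ$ (for instance $\mu = \tfrac12(P_0+P_1)$), and write $p_0 = dP_0/d\mu$, $p_1 = dP_1/d\mu$. Using the hockey-stick characterization recalled just before the lemma, the hypothesis that $\RR_0(X_0)$ and $\RR_1(X_1)$ are $(\epsilon,0)$-indistinguishable is equivalent to $e^{-\epsilon}p_1 \le p_0 \le e^{\epsilon}p_1$ holding $\mu$-a.e. (in particular $p_0 = 0$ $\mu$-a.e. on $\{p_1 = 0\}$ and conversely).

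Setting $a := \tfrac{e^\epsilon}{e^\epsilon+1}$ and $b := \tfrac{1}{e^\epsilon+1}$, I would look for $U$ with $U(X_0) \sim Q_0$ and $U(X_1) \sim Q_1$ whose $\mu$-densities $q_0, q_1$ satisfy the $2\times2$ linear system $p_0 = a q_0 + b q_1$, $p_1 = b q_0 + a q_1$. Since $a+b = 1$ and $a - b = \tfrac{e^\epsilon-1}{e^\epsilon+1} \neq 0$, this has the unique solution
\[
q_0 = \tfrac12\Big[(p_0+p_1) + \tfrac{e^\epsilon+1}{e^\epsilon-1}\,(p_0-p_1)\Big], \qquad q_1 = \tfrac12\Big[(p_0+p_1) - \tfrac{e^\epsilon+1}{e^\epsilon-1}\,(p_0-p_1)\Big].
\]
Normalization is immediate, since $\int q_0\,d\mu = \tfrac12\big[2 + \tfrac{e^\epsilon+1}{e^\epsilon-1}\cdot 0\big] = 1$ and likewise $\int q_1\,d\mu = 1$; and once $Q_0,Q_1$ are legitimate probability measures, defining $U$ on the two-point domain $\{X_0,X_1\}$ by $U(X_0)\sim Q_0$, $U(X_1)\sim Q_1$ makes the two claimed mixture identities hold by construction.

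The step I expect to be the main obstacle — and the only place the pure ($\delta=0$) hypothesis genuinely enters — is verifying $q_0 \ge 0$ and $q_1 \ge 0$ $\mu$-a.e. On $\{p_0 = p_1 = 0\}$ both vanish. Where $p_1 > p_0$, multiplying the formula for $q_0$ by the positive quantity $2(e^\epsilon-1)$ reduces $q_0 \ge 0$ to $(p_0+p_1)(e^\epsilon-1) + (p_0-p_1)(e^\epsilon+1) \ge 0$, which simplifies to $2(e^\epsilon p_0 - p_1) \ge 0$, i.e.\ exactly the privacy bound $p_1 \le e^\epsilon p_0$; where $p_1 \le p_0$, nonnegativity of $q_0$ is trivial since we are adding a nonnegative term. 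Symmetrically, $q_1 \ge 0$ reduces to $p_0 \le e^\epsilon p_1$. Hence $q_0,q_1$ are bona fide densities, completing the construction. (If one prefers to avoid dominating measures, the identical computation goes through verbatim for discrete $\ZZ$ with $p_0,p_1$ probability mass functions, or for $\ZZ = \mathbb{R}^d$ with Lebesgue densities, which is all that is needed when this lemma is invoked in the proof of \cref{thm: Thm 3.8}.)
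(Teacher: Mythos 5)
Your proposal is correct, and the construction — solving the $2\times 2$ linear system for the candidate densities $q_0,q_1$, checking normalization by integrating, and observing that nonnegativity of $q_0$ and $q_1$ reduces exactly to the two density-ratio inequalities $p_1\le e^{\epsilon}p_0$ and $p_0\le e^{\epsilon}p_1$ furnished by the $(\epsilon,0)$-indistinguishability hypothesis — is precisely the argument underlying the cited result. The paper itself does not reproduce a proof here; it attributes the lemma to Kairouz et al.\ and remarks only that the proof of Lemma~3.2 in Murtagh--Vadhan goes through under the weaker hypothesis assumed in the statement. Your write-up fills in that deferred argument in a self-contained way and matches the cited source's approach, so it should be regarded as the same proof rather than a genuinely different route.
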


\cref{lem 3.4} follows from the proof of Lemma 3.2 in \citep{mv16}, noting that the weaker hypothesis assumed in \cref{lem 3.4} sufficient for all steps to go through. 

\begin{definition}[Deletion Group DP]
Algorithm $\RR: \XX^n \to \ZZ$ is $(\epsilon, \delta)$ deletion group DP for groups of size $n$ if there exists a reference distribution $\rho$ such that $\RR(X) \edsim \rho$ for all $X \in \XX^n.$
\end{definition}

\noindent It's easy to show that if $\RR$ is $(\epsilon, \delta)$-\textit{deletion} group DP for groups of size $n$, then $\RR$ is $(2\epsilon, (1 + e^{\epsilon})\delta)$ group DP for groups of size $n$. In addition, we have the following result: 

\begin{lemma}
\label{lem: dp implies deletion group dp}
Let $X_0 \in \XX^n.$ If $\RR: \XX^n \to \ZZ$ is an $(\epsilon, \delta)$-DP local randomizer, then $\RR$ is $(n\epsilon, n e^{(n-1)\epsilon}\delta)$ deletion group DP for groups of size $n$ with reference distribution $\RR(X_0)$ (i.e. $\RR(X) \tedsim \RR(X_0)$ for all $X \in \XX^n$, where $\widetilde{\epsilon} = n\epsilon$ and $\widetilde{\delta} = n e^{(n-1)\epsilon}\delta$).
\end{lemma}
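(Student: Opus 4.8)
The plan is to run the standard group-privacy telescoping argument, exploiting that two databases in $\XX^n$ differ in at most $n$ coordinates. First I would fix an arbitrary $X = (x_1,\dots,x_n) \in \XX^n$ and write $X_0 = (x_1^0,\dots,x_n^0)$, and interpolate between them via the chain $Z^{(0)} := X$ and $Z^{(k)} := (x_1^0,\dots,x_k^0,x_{k+1},\dots,x_n)$ for $k \in [n]$, so that $Z^{(n)} = X_0$. Each consecutive pair $Z^{(k-1)}, Z^{(k)}$ differs only in the $k$-th coordinate, hence is $\rho_i$-adjacent in the sense of \cref{def: LDP}; since $\RR$ is an $(\epsilon,\delta)$-DP local randomizer, this gives $\RR(Z^{(k-1)}) \edsim \RR(Z^{(k)})$ for every $k \in [n]$.

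Next I would chain these $n$ links using the ``weak triangle inequality'' for indistinguishability: if $P \underset{(\epsilon_1,\delta_1)}{\simeq} Q$ and $Q \underset{(\epsilon_2,\delta_2)}{\simeq} S$, then $P \underset{(\epsilon_1+\epsilon_2,\ \delta_1+e^{\epsilon_1}\delta_2)}{\simeq} S$, which follows directly from the definition by composing the one-sided bounds $\mathbb{P}(P \in T) \le e^{\epsilon_1}\mathbb{P}(Q \in T)+\delta_1 \le e^{\epsilon_1}\big(e^{\epsilon_2}\mathbb{P}(S\in T)+\delta_2\big)+\delta_1$ and symmetrically in the other direction. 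Applying this inductively along $\RR(Z^{(0)}) \edsim \RR(Z^{(1)}) \edsim \cdots \edsim \RR(Z^{(n)})$ yields $\RR(X) \underset{(n\epsilon,\ \delta\sum_{k=0}^{n-1}e^{k\epsilon})}{\simeq} \RR(X_0)$.

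Finally I would bound the accumulated failure probability: $\sum_{k=0}^{n-1} e^{k\epsilon} = \frac{e^{n\epsilon}-1}{e^{\epsilon}-1} \le n\,e^{(n-1)\epsilon}$, since each of the $n$ summands is at most $e^{(n-1)\epsilon}$. Hence $\RR(X) \tedsim \RR(X_0)$ with $\widetilde{\epsilon} = n\epsilon$ and $\widetilde{\delta} = n e^{(n-1)\epsilon}\delta$; as $X \in \XX^n$ was arbitrary and the reference distribution $\RR(X_0)$ does not depend on $X$, this is precisely $(n\epsilon, n e^{(n-1)\epsilon}\delta)$ deletion group DP for groups of size $n$.

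There is essentially no obstacle here — the argument is routine group privacy, and indeed can be attributed to standard references such as \cite{dwork2014, kamath}. The only point that requires a little care is that $\edsim$ (and hence $\tedsim$) is a two-sided relation, so I must verify the bound propagates in both directions along the chain; this is automatic because Hamming adjacency is symmetric, so each link $\RR(Z^{(k-1)}) \edsim \RR(Z^{(k)})$ already holds in both directions and the weak triangle inequality preserves both.
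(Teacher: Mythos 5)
Your proof is correct and follows the same approach as the paper: the paper simply cites the standard group-privacy theorem (which gives exactly the $(n\epsilon, n e^{(n-1)\epsilon}\delta)$ bound for databases differing in at most $n$ entries) and then specializes to $X' := X_0$, whereas you re-derive that theorem from scratch via the telescoping/weak-triangle-inequality argument. The derivation is sound, including the observation that symmetry of Hamming adjacency makes both directions of the chain yield the same $\delta$-accumulation, so there is no gap.
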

\begin{proof}
By group privacy (see e.g. Theorem 10 in \citep{kamath}), and the assumption that $\RR$ is $(\epsilon, \delta)$-DP, it follows that $\RR(X)$ and $\RR(X')$ are $(n\epsilon, ne^{(n-1)\epsilon}\delta)$ indistinguishable for all $X, X' \in \XX^n.$ In particular, taking $X':= X_0$ completes the proof. 
\end{proof}

\begin{lemma}
\label{lem 3.3} 
Let $\RR^{(i)}: \XX^n \to \ZZ$ be randomized algorithms ($i \in [N]$) and let $\Al_s: \XX^{n \times N} \to \ZZ^{N}$ be the shuffled algorithm $\Al_s(\bx) := (\RR^{(1)}(X_{\pi(1)}), \cdots \RR^{(N)}(X_{\pi(N)}))$ derived from $\{\RR^{(i)}\}_{i \in [N]}$ for $\bx = (X_1, \cdots, X_N)$, where $\pi$ is a uniformly random permutation of $[N]$. Let $\bx_0 = (\Xoz, X_2, \cdots, X_N)$ and $\bx_1 = (\Xoo, X_2, \cdots, X_N)$, $\delta \in (0,1)$ and $p \in [\frac{16\ln(2/\delta)}{N}, 1]$. Suppose that for all $i \in [N], X \in \XX^n \setminus \{\Xoo, \Xoz\}$, 
there exists a distribution $LO^{(i)}(X)$ such that \[
\rand(X) = \frac{p}{2}\rand(\Xoz) + \frac{p}{2}\rand(\Xoo) + (1-p)LO^{(i)}(X).
\]
Then $\Al_s(\bx_0) \edsim \Al_s(\bx_1),$ where \[
\epsilon \leq \ln\left(1 + \frac{8\sqrt{\ln(4/\delta)}}{\sqrt{pN}} + \frac{8}{pN}\right).
\]
\end{lemma}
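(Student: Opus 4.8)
\textbf{Proof proposal for Lemma~\ref{lem 3.3}.} The plan is to follow the ``blanket decomposition + amplification lemma'' strategy of \cite{fmt20}, reducing the shuffled algorithm on the two adjacent databases $\bx_0,\bx_1$ to a clean random process governed only by how many of the $N$ slots land in the ``common'' part $p$. First I would fix the permutation-free viewpoint: since $\pi$ is uniform, the output $\Al_s(\bx_b)$ for $b\in\{0,1\}$ is distributionally equivalent to the following experiment. Each of the $N$ clients $i$ receives one of the $N$ data sets; $N-1$ of these data sets are ``others'' $X_2,\dots,X_N$ and one is $\Xoz$ (if $b=0$) or $\Xoo$ (if $b=1$). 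For each ``other'' data set $X\notin\{\Xoz,\Xoo\}$ assigned to client $i$, use the hypothesized mixture decomposition $\rand(X)=\tfrac{p}{2}\rand(\Xoz)+\tfrac{p}{2}\rand(\Xoo)+(1-p)LO^{(i)}(X)$ to resolve the sample as: with probability $p$, replace $X$ by a fresh draw that is uniformly $\Xoz$ or $\Xoo$ (fed into that client's randomizer), and with probability $1-p$ run the ``leftover'' $LO^{(i)}$. Let $A$ be the (random) set of client indices whose ``other'' data set got replaced by a $\{\Xoz,\Xoo\}$-type draw, together with the one index holding the genuine first client's data set. Conditioned on $A$ and on all the $LO^{(i)}$ outputs for $i\notin A$, the two databases $\bx_0,\bx_1$ differ only in the multiset of labels $\{\Xoz,\Xoo\}$ fed into the randomizers $\{\rand\}_{i\in A}$: under $\bx_0$ there is one more ``forced'' $\Xoz$ among $|A|$ slots, under $\bx_1$ one more forced $\Xoo$, with the remaining $|A|-1$ being i.i.d.\ uniform on $\{\Xoz,\Xoo\}$; moreover the \emph{position} of the forced slot within $A$ is uniform by symmetry of $\pi$.

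The second step is to apply \cref{lem 3.4} to each randomizer $\rand$ restricted to inputs $\{\Xoz,\Xoo\}$ — this is exactly the hypothesis that $\rand(\Xoz),\rand(\Xoo)$ are $(\epsilon_0^r,0)$-indistinguishable (or, for the $\delta>0$ case, first absorb the $\delta$ term into $\widetilde\delta^r$ via \cref{lem: dp implies deletion group dp} and then work with pure-DP randomizers on the blanket). Lemma~\ref{lem 3.4} gives a common ``base'' randomizer $U^{(i)}$ with $\rand(\Xoz)=\tfrac{e^{\epsilon_0^r}}{e^{\epsilon_0^r}+1}U^{(i)}(\Xoz)+\tfrac{1}{e^{\epsilon_0^r}+1}U^{(i)}(\Xoo)$ and symmetrically for $\Xoo$. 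Plugging this in, the conditional-on-$A$ distribution of the outputs of the slots in $A$ becomes: draw a $\mathrm{Binomial}(|A|,\tfrac12)$-type count of ``$U(\Xoz)$ vs $U(\Xoo)$'' base-labels, then run the $U^{(i)}$'s — and the only difference between $\bx_0$ and $\bx_1$ is a shift by one in the parameter governing one extra Bernoulli$\big(\tfrac{e^{\epsilon_0^r}}{e^{\epsilon_0^r}+1}\big)$ draw placed in a uniformly random position. This is precisely the setting where the divergence between the two shuffled outputs is controlled by a one-dimensional problem: comparing $\mathrm{Binomial}(|A|-1,\tfrac12)+\mathrm{Bernoulli}(\tfrac{e^{\epsilon_0^r}}{e^{\epsilon_0^r}+1})$ against $\mathrm{Binomial}(|A|-1,\tfrac12)+\mathrm{Bernoulli}(\tfrac{1}{e^{\epsilon_0^r}+1})$, whose hockey-stick divergence at level $\ln(1+(\tfrac{e^{\epsilon_0^r}-1}{e^{\epsilon_0^r}+1})t)$ is bounded by $\Pr[\text{a } \mathrm{Binomial}(|A|-1,\tfrac12) \text{ deviate exceeds } t]$-type tail quantities; this is the content of the ``amplification by swapping'' sublemma in \cite{fmt20}.

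The third step is to take expectation over the randomness of $A$. We have $|A|\ge 1$ always, and conditionally on being an ``other'' slot each index joins $A$ independently with probability $p$, so $|A|-1 \sim \mathrm{Binomial}(N-1,p)$, with mean $(N-1)p \ge \tfrac{15}{16}\,\ge \tfrac12\cdot 16\ln(2/\delta)$ by the hypothesis $p\ge \tfrac{16\ln(2/\delta)}{N}$. A Chernoff bound then gives $\Pr[|A|-1 < \tfrac{pN}{2}] \le e^{-pN/8}\le \delta/2$ (say), so outside a $\delta/2$-probability event the effective ``sample size'' driving the amplification is $\ge pN/2$. On that good event, the one-dimensional divergence bound yields a privacy loss of order $\epsilon \le \ln\!\big(1+\tfrac{8\sqrt{\ln(4/\delta)}}{\sqrt{pN}}+\tfrac{8}{pN}\big)$ — I would track the constants through a Bernstein/sub-Gaussian tail for the centered Binomial exactly as in \cite[proof of Thm.\ 3.8]{fmt20} — and folding in the $\delta/2$ failure probability from the bad event keeps the overall $\delta$. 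I expect the main obstacle to be the careful bookkeeping in this last step: making the conditioning on $A$ and on the $LO^{(i)}$ outputs rigorous (so that ``the rest is common randomness, post-processing doesn't hurt DP'' is actually valid), and handling the position-uniformity of the forced slot so that the reduction to the symmetric binomial comparison is exact rather than merely heuristic. The $\delta>0$ / group-privacy bookkeeping (tracking the extra $ne^{(n-1)\epsilon_0^r}\delta_0^r$ blow-up through the $N$ slots, which is where the $2Nn e^{(n-1)\epsilon_0^r}\delta_0^r$ in $\widetilde\delta^r$ comes from) is routine but must be done before invoking the pure-DP decomposition \cref{lem 3.4}, so I would state that reduction first.
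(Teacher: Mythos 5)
There is a genuine gap at your Step~2. You invoke \cref{lem 3.4}, writing ``this is exactly the hypothesis that $\rand(\Xoz),\rand(\Xoo)$ are $(\epsilon_0^r,0)$-indistinguishable'' — but \cref{lem 3.3} has no such hypothesis. Its only assumption is the mixture decomposition $\rand(X) = \tfrac{p}{2}\rand(\Xoz) + \tfrac{p}{2}\rand(\Xoo) + (1-p)LO^{(i)}(X)$, and the claimed bound $\epsilon \le \ln\big(1 + \tfrac{8\sqrt{\ln(4/\delta)}}{\sqrt{pN}} + \tfrac{8}{pN}\big)$ makes no reference to any $\epsilon_0^r$. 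This is exactly the point the paper flags: its proof note reads ``Observe that the DP assumption in Lemma~3.3 of [FMT20] is not actually needed in the proof,'' i.e.\ \cref{lem 3.3} is deliberately stated so that \cref{lem 3.4} is \emph{not} part of its proof. Your route fails precisely in the case the lemma cares most about — when $\rand(\Xoz)$ and $\rand(\Xoo)$ are not $(\epsilon_0,0)$-indistinguishable for any finite $\epsilon_0$ (e.g.\ disjoint supports), \cref{lem 3.4} simply does not apply, so you cannot even set up your ``base-label'' decomposition, let alone take $\epsilon_0^r \to \infty$. You also misplace the $\delta_0^r$ / group-privacy bookkeeping: there is no $\delta_0^r$ in \cref{lem 3.3} at all; the $\delta$ in its conclusion comes solely from the Chernoff tail on $|A|$. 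The $ne^{(n-1)\epsilon_0^r}\delta_0^r$ accounting, \cref{lem star stronger 3.7}, \cref{lem 3.7}, and \cref{lem 3.4} all live in the proof of \cref{thm: Thm 3.8}, which is what \emph{builds} the mixture hypothesis of \cref{lem 3.3} starting from a DP randomizer.

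Your Steps~1 and~3 are on the right track and essentially mirror the correct proof (the one FMT20 gives for their Lemma~3.3): resolve each ``other'' slot into a clone with probability $p$, form the set $A$ of clone slots plus the forced slot, condition on $A$ and on the $LO^{(i)}$ outputs (which are identical under $\bx_0,\bx_1$), and use a Chernoff bound to guarantee $|A| \gtrsim pN$ except with probability $\le \delta/2$. The fix for Step~2 is to drop \cref{lem 3.4} entirely and argue directly: within $A$, one uniformly random slot carries the forced label ($\Xoz$ under $\bx_0$, $\Xoo$ under $\bx_1$) and the remaining $|A|-1$ carry i.i.d.\ uniform labels in $\{\Xoz,\Xoo\}$. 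By post-processing it suffices to bound the divergence between the two resulting label-histogram distributions, which (conditionally on $|A|=m$) is the divergence between $1+\mathrm{Binomial}(m-1,\tfrac12)$ and $\mathrm{Binomial}(m-1,\tfrac12)$; the Bernstein/sub-Gaussian tail for the centered binomial then gives the stated $\epsilon$ with no $\epsilon_0^r$ anywhere, and this argument goes through even when $\rand(\Xoz)$ and $\rand(\Xoo)$ are perfectly distinguishable.
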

\begin{proof}
The proof mirrors the proof of Lemma 3.3 in \citep{fmt20} closely, replacing their notation with ours. Observe that the DP assumption in Lemma 3.3 of \citet{fmt20} is not actually needed in the proof. 
\end{proof}

\begin{lemma}
\label{lem 3.7}
Let $\RR: \XX^n \to \ZZ$ be $(\epsilon, \delta)$ deletion group DP for groups of size $n$ with reference distribution $\rho$. Then there exists a randomizer $\RR': \XX^n \to \ZZ$ such that: \\
(i) $\RR'$ is $(\epsilon, 0)$ deletion group DP for groups of size $n$ with reference distribution $\rho$; and\\
(ii) $TV(\RR(X), \RR'(X)) \leq \delta.$ \\
In particular, $\RR'$ is $(2 \epsilon, 0)$ group DP for groups of size $n$ (by (i)). 
\end{lemma}
\begin{proof}
The proof is nearly identical to the proof of Lemma 3.7 in \citep{fmt20}. 
\end{proof}

We also need the following stronger version of Lemma 3.7 from \citep{fmt20}: 
\begin{lemma}
\label{lem star stronger 3.7}
If $\RR(\Xoz) \edosim \RR(\Xoo),$ then there exists a randomizer $\RR': \XX^n \to \ZZ$ such that $\RR'(\Xoo) \edzsim \RR(\Xoz)$ and $TV(\RR'(\Xoo), \RR(\Xoo)) \leq \delo.$
\end{lemma}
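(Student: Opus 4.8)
\textbf{Proof plan for \cref{lem star stronger 3.7}.}
The plan is to mimic the proof of \cref{lem 3.7} (which itself follows \cite[Lemma 3.7]{fmt20}) but exploit that we only need to control the single pair $(\Xoz, \Xoo)$ rather than a whole group-privacy relation against a fixed reference distribution. First I would recall what the $(\epso, \delo)$-indistinguishability $\RR(\Xoz) \edosim \RR(\Xoo)$ gives us in terms of hockey-stick divergence: writing $P := \RR(\Xoz)$ and $Q := \RR(\Xoo)$ with densities $p, q$, we have both $D_{e^{\epso}}(P \| Q) \le \delo$ and $D_{e^{\epso}}(Q \| P) \le \delo$. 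The goal is to produce $Q' = \RR'(\Xoo)$ with $Q' \edzsim P$ (i.e.\ the clean $(\epso,0)$ relation in \emph{one direction}, $P$ versus $Q'$) and $TV(Q', Q) \le \delo$.

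The key step is an explicit construction of $Q'$ by ``clipping'' the density $q$ against the envelope determined by $p$. Concretely, I would set $q'(z) := \min\{q(z), e^{\epso} p(z)\} + c(z)$, where the correction term $c(z)$ is supported on the region $\{q < e^{\epso}p\}$ (equivalently, redistribute the mass removed by the clipping, of total size $\int (q(z) - e^{\epso}p(z))_+\,dz = D_{e^{\epso}}(Q\|P) \le \delo$, back onto the set where $q' $ still has slack below $e^{\epso}p$; such slack exists because $\int e^{\epso} p = e^{\epso} > 1 \ge \int \min\{q, e^{\epso}p\}$). By construction $q'(z) \le e^{\epso} p(z)$ everywhere, which is exactly $D_{e^{\epso}}(Q' \| P) = 0$; one must also check the reverse bound $D_{e^{\epso}}(P\|Q') = 0$, i.e.\ $p(z) \le e^{\epso} q'(z)$, which should follow from the reverse indistinguishability $p \le e^{\epso} q + (\text{small})$ together with a careful choice of where to place the correction mass $c$ — this is the delicate bookkeeping step. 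Then $TV(Q', Q) = \tfrac12 \int |q' - q| \le \int (q - e^{\epso}p)_+ \le \delo$ by construction, and since $Q'$ is built from $Q$ via a (data-independent, given $\Xoz, \Xoo$) post-processing/measure transformation, defining $\RR'$ to agree with $\RR$ on all inputs other than $\Xoo$ and to output $Q'$ on $\Xoo$ gives the claimed randomizer.

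The main obstacle I expect is making the two-sided clean relation $Q' \edzsim P$ hold \emph{simultaneously} while keeping $TV(Q',Q)\le\delo$: clipping $q$ from above to enforce $q' \le e^{\epso}p$ is easy, but one then has to verify that after re-adding the deleted mass we have not destroyed the lower envelope $p \le e^{\epso}q'$, and that the total moved mass is still at most $\delo$ (not $2\delo$ from fixing both sides). The resolution is that the upper-clipping only removes mass from the set $\{q > e^{\epso}p\}$, on which $p < e^{-\epso} q \le q \le e^{\epso} q'$ already (since $q' \ge \min\{q,e^{\epso}p\} = e^{\epso}p > p$ there after clipping — wait, there $q' = e^{\epso}p$, so $p = e^{-\epso} q' \le e^{\epso} q'$), so the lower bound is automatic on the clipped region, and on the complement the correction only \emph{increases} $q'$, which can only help the lower bound $p \le e^{\epso} q'$. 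Hence only $\delo$ worth of mass ever moves. I would present the construction, verify $q' \le e^{\epso} p$ and $p \le e^{\epso} q'$ pointwise via this case split, bound the total variation, and conclude; the argument is a direct strengthening of \cite[Lemma 3.7]{fmt20}, so the write-up can largely cite that proof for the routine parts and only spell out the one-directional-versus-two-directional distinction.
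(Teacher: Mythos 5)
The paper's own proof of \cref{lem star stronger 3.7} is simply a pointer to \cite[Lemma~3.7]{fmt20}, stating that the same construction goes through under the weaker hypothesis and for $n>1$; so the substantive question is whether your one-sided clipping construction would actually work. It would not, and the flaw is precisely in the step you flagged as the ``main obstacle'' and then declared resolved. You need $Q'\edzsim P$ to hold in \emph{both} hockey-stick directions, i.e.\ both $q'\le e^{\epso}p$ and $p\le e^{\epso}q'$ pointwise. Your clipping $q\mapsto\min\{q,e^{\epso}p\}$ and the subsequent redistribution handle the first inequality, and the first inequality only. For the second, you argue that the correction $c\ge 0$ ``can only help'' --- true, but helping is not establishing. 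On the unclipped set $\{q\le e^{\epso}p\}$ the uncorrected inequality $p\le e^{\epso}q$ is \emph{not} guaranteed pointwise: the hypothesis controls only the integral $D_{e^{\epso}}(P\|Q)\le\delo$, so there can be a positive-measure set where $p>e^{\epso}q$, while the total correction mass $\int(q-e^{\epso}p)_+=D_{e^{\epso}}(Q\|P)$ can be zero even when $D_{e^{\epso}}(P\|Q)>0$ (e.g.\ $P=\mathrm{Unif}[0,e^{\epso}]$, $Q=\mathrm{Unif}[0,1]$ with $\epso$ small enough that $1-e^{-\epso}\le\delo$). In that case your $q'=q$ and $Q'\edzsim P$ fails.

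The repair is a two-sided clip $\hat q:=\max\{e^{-\epso}p,\min\{q,e^{\epso}p\}\}$, followed by a rebalance of the total mass inside the band $[e^{-\epso}p,e^{\epso}p]$ (add or remove the surplus in the interior where neither clip is active; there is always slack since the band has total mass between $e^{-\epso}$ and $e^{\epso}$). Writing $a:=D_{e^{\epso}}(Q\|P)\le\delo$ for the mass removed by the upper clip and $b:=e^{-\epso}D_{e^{\epso}}(P\|Q)\le\delo$ for the mass added by the lower clip, the rebalance moves $|a-b|$, so $\int|q'-q|=a+b+|a-b|=2\max\{a,b\}\le 2\delo$, i.e.\ $TV(Q',Q)\le\delo$. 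Your instinct that the final bound is $\delo$ rather than $2\delo$ is correct, but it comes from this cancellation between the two clips and the rebalance, not from the lower envelope being automatic.
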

\begin{proof}
The proof follows the same techniques as Lemma 3.7 in \citep{fmt20}, noting that the weaker hypothesis in \cref{lem star stronger 3.7} is sufficient for all the steps to go through and that the assumption of $n=1$ in \citep{fmt20} is not needed in the proof. 
\end{proof}

\begin{lemma}[\cite{dwork2014}, Lemma 3.17]
\label{dwork lem 3.17}
Given random variables $P, Q, P'$ and $Q'$, if $D_{e^\epsilon}(P', Q') \leq \delta$, $TV(P, P') \leq \delta'$, and $TV(Q, Q') \leq \delta'$, then $D_{e^\epsilon}(P, Q) \leq \delta + (e^\epsilon + 1)\delta'$. 
\end{lemma}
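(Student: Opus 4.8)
The plan is to prove Lemma~\ref{dwork lem 3.17} using the set-based (variational) characterizations of the hockey-stick divergence and of the total variation distance, since these let the three hypotheses be combined additively with exactly the stated constant. Recall that for any $\epsilon \ge 0$ one has $D_{e^\epsilon}(P \| Q) = \sup_S \bigl( \mathbb{P}(P \in S) - e^\epsilon \mathbb{P}(Q \in S) \bigr)$, the supremum running over measurable sets $S$; the maximizing set is $S^* := \{\, x : p(x) > e^\epsilon q(x) \,\}$, which recovers the integral form $\int \max\{0, p - e^\epsilon q\}\,dx$ quoted earlier. Likewise, $TV(P, P') = \sup_S \lvert \mathbb{P}(P \in S) - \mathbb{P}(P' \in S) \rvert$, so the hypothesis $TV(P, P') \le \delta'$ yields $\lvert \mathbb{P}(P \in S) - \mathbb{P}(P' \in S) \rvert \le \delta'$ for \emph{every} measurable $S$, and similarly for $Q$ and $Q'$.

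First I would fix $S^*$ as above, so that $D_{e^\epsilon}(P \| Q) = \mathbb{P}(P \in S^*) - e^\epsilon \mathbb{P}(Q \in S^*)$, and write the telescoping identity
\[
\mathbb{P}(P \in S^*) - e^\epsilon \mathbb{P}(Q \in S^*) = \bigl( \mathbb{P}(P' \in S^*) - e^\epsilon \mathbb{P}(Q' \in S^*) \bigr) + \bigl( \mathbb{P}(P \in S^*) - \mathbb{P}(P' \in S^*) \bigr) + e^\epsilon \bigl( \mathbb{P}(Q' \in S^*) - \mathbb{P}(Q \in S^*) \bigr).
\]
Then I would bound the first parenthesis by $\sup_S \bigl( \mathbb{P}(P' \in S) - e^\epsilon \mathbb{P}(Q' \in S) \bigr) = D_{e^\epsilon}(P' \| Q') \le \delta$, the second by $TV(P, P') \le \delta'$, and the third by $e^\epsilon\, TV(Q, Q') \le e^\epsilon \delta'$. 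Summing these gives $D_{e^\epsilon}(P \| Q) \le \delta + (1 + e^\epsilon)\delta'$, which is precisely the claimed bound.

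Since this is a classical fact (it is Lemma~3.17 of~\cite{dwork2014}), I do not expect a genuine obstacle. The only subtlety worth flagging is that the tight constant $1 + e^\epsilon$ relies on using the set-supremum form $TV(P,P') = \sup_S \lvert \mathbb{P}(P\in S) - \mathbb{P}(P'\in S) \rvert$ — equivalently, on restricting attention to the single set $S^*$ from the outset. A more naive route that passes through the $L^1$ form $TV = \tfrac12 \lVert p - p' \rVert_1$ together with the pointwise inequality $\max\{0, a+b+c\} \le \max\{0,a\} + \lvert b \rvert + \lvert c \rvert$ would lose a factor of two and only deliver $\delta + 2(1+e^\epsilon)\delta'$, so the proof should be organized around $S^*$ rather than around densities.
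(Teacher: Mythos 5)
Your proof is correct. The paper does not prove this statement itself — it cites it directly as Lemma~3.17 of \cite{dwork2014} — so there is no in-paper argument to compare against. Your argument via the set-supremum characterization of both the hockey-stick divergence and total variation, together with the three-term telescoping decomposition over the maximizing set $S^* = \{x : p(x) > e^\epsilon q(x)\}$, is exactly the standard derivation, and your remark that working through the $L^1$/density form would lose a factor of two in the $\delta'$ term is a correct and worthwhile observation about why the set-based form is the right vehicle here.
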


\begin{lemma}[\cite{fmt20}, Lemma 2.3] 
\label{fmt lem 2.3}
Let $P$ and $Q$ be distributions satisfying $P = (1-q)P_0 + qP_1$ and $Q = (1-q)P_0 + q Q_1$ for some $q \in [0,1]$. Then for any $\epsilon > 0$, if $\epsilon' = \log(1 + q(e^\epsilon - 1)),$ then \[
D_{e^{\epsilon'}}(P||Q) \leq q \max\{D_{e^{\epsilon}}(P_1||P_0), D_{e^{\epsilon}}(P_1||Q_1)\}. 
\]
\end{lemma}
We are now ready to prove \cref{thm: Thm 3.8}: 
\begin{proof}[Proof of \cref{thm: Thm 3.8}]
Let $\bx_0, \bx_1 \in \XX^{n \times N}$ be adjacent (in the CDP sense) distributed data sets (i.e. $|\bx_0 \Delta \bx_1| \leq 1$). Assume WLOG that $\bx_0 = (X_1^0, X_2, \cdots, X_N)$ and $\bx_1 = (X_1^1, X_2, \cdots, X_N),$ where $X_1^0 = (x_{1,0}, x_{1,2}, \cdots, x_{1,n}) \neq (x_{1,1}, x_{1,2}, \cdots, x_{1,n}).$ We can also assume WLOG that $X_j \notin \{X_1^0, X_1^1\}$ for all $j \in \{2, \cdots, N\}$ by re-defining $\XX$ and $\rand_r$ if necessary. 

Fix $i \in [N], r \in [R], \bz = \bz_{1:r-1} = Z_{(1:r-1)}^{(1:N)} \in \ZZ^{(r-1) \times N}$, denote $\mathcal{R}(X):= \rand_r(\bz, X)$ for $X \in \XX^n,$ and $\Al_s(\bx):= \Al_s^r( \bz_{1:r-1}, \bx).$  Draw $\pi$ uniformly from the set of permutations of $[N]$. Now, since $\RR$ is $(\epso, \delo)$-DP,
$\RR(\Xoo) \edorsim  \RR(\Xoz)$, so by \cref{lem star stronger 3.7}, there  exists a local randomizer $\RR'$ such that $\RR'(\Xoo) \edorzsim \RR(\Xoz)$ and $TV(\RR'(\Xoo), \RR(\Xoo)) \leq \delor.$

Hence, by \cref{lem 3.4}, there exist distributions $U(X_1^0)$ and  $U(X_1^1)$ such that \begin{equation}
\label{eq: 16}
    \RR(\Xoz) = \frac{e^{\epsor}}{e^{\epsor} + 1}U(\Xoz) + \frac{1}{e^{\epsor} + 1}U(\Xoo) 
\end{equation}
and \begin{equation}
\label{eq: 17}
     \RR'(\Xoo) = \frac{1}{e^{\epsor} + 1}U(\Xoz) + \frac{e^{\epsor}}{e^{\epsor} + 1}U(\Xoo).
\end{equation}

Denote $\tepso := n\epsor$ and $\tdelo := ne^{(n-1) \epsor} \delor.$ By convexity of hockey-stick divergence and the hypothesis that $\RR$ is $(\epsor, \delor)$-DP (hence $\RR(X) \tedosim \RR(\Xoz), \RR(\Xoo)$ for all $X$ by \cref{lem: dp implies deletion group dp}), we have $\RR(X) \tedosim \frac{1}{2}(\RR(\Xoz) + \RR(\Xoo)) := \rho$ for all $X \in \XX^n.$ That is, $\RR$ is $(\tepso, \tdelo)$ deletion group DP for groups of size $n$ with reference distribution $\rho.$ Thus, \cref{lem 3.7} implies that there exists a local randomizer $\RR''$ such that $\RR''(X)$ and $\rho$ are $(\tepso, 0)$ indistinguishable and $TV(\RR''(X), \RR(X)) \leq \tdelo$ for all $X.$ Then by the definition of $(\tepso, 0)$ indistinguishability, for all $X$ there exists a ``left-over'' distribution $LO(X)$ such that $\RR''(X) = \frac{1}{e^{\tepso}} \rho + (1 - 1/e^{\tepso})LO(X) = \frac{1}{2e^{\tepso}}(\RR(\Xoz) + \RR(\Xoo)) + (1 - 1/e^{\tepso})LO(X).$ 

Now, define a randomizer $\LL$ by $\LL(\Xoz) := \RR(\Xoz), ~\LL(\Xoo) := \RR'(\Xoo),$ and \begin{align}
\label{eq: 18}
\LL(X) &:= \frac{1}{2e^{\tepso}}\RR(\Xoz) + \frac{1}{2e^{\tepso}}\RR'(\Xoo) + (1 - 1/e^{\tepso})LO(X) \nonumber \\
&= \frac{1}{2e^{\tepso}}U(\Xoz) + \frac{1}{2e^{\tepso}}U(\Xoo) + (1 - 1/e^{\tepso})LO(X)
\end{align}
for all $X \in \XX^n \setminus \{\Xoz, \Xoo\}.$ (The equality follows from \cref{eq: 16} and \cref{eq: 17}.)
Note that $TV(\RR(\Xoz), \LL(\Xoz)) = 0, ~TV(\RR(\Xoo), \LL(\Xoo)) \leq \delor,$ and for all $X \in \XX^n \setminus \{\Xoz, \Xoo\}$, $TV(\RR(X), \LL(X)) \leq TV(\RR(X), \RR''(X)) + TV(\RR''(X), \LL(X)) \leq \tdelo + \frac{1}{2e^{\tepso}}TV(\RR'(\Xoo), \RR(\Xoo)) = (ne^{(n-1) \epsor} + \frac{1}{2e^{n\epsor}})\delor \leq (2ne^{(n-1)\epsor}) \delor = 2 \tdelo.$ 

Keeping $r$ fixed (omitting $r$ scripts everywhere), for any $i \in [N]$ and $\bz := \bz_{1:r-1} \in \ZZ^{(r-1) \times N},$ let $\LL^{(i)}(\bz, \cdot)$, ~$U^{(i)}(\bz, \cdot)$, and $LO^{(i)}(\bz, \cdot)$ denote the randomizers resulting from the process described above. Let $\Al_{\LL}: \XX^{n \times N} \to \ZZ^{N}$ be defined exactly the same way as $\Al_s^r := \Al_s$ (same $\pi$) but with the randomizers $\rand$ replaced by $\LL^{(i)}$. Since $\Al_s$ applies each randomizer $\RR^{(i)}$ exactly once and $\RR^{(1)}(\bz, X_{\pi(1)}, \cdots \RR^{(N)}(\bz, X_{\pi(N)})$ are independent (conditional on $\bz = \bz_{1:r-1}$) \footnote{This follows from the assumption given in the lead up to \cref{def: LDP} that $\RR^{(i)}(\bz_{1:r-1}, X)$ is conditionally independent of $X'$ given $\bz_{1:r-1}$ for all $\bz_{1:r-1}$ and $X \neq X'$.}, we have $TV(\Al_s(\bx_0), \Al_{\LL}(\bx_0) \leq N(2ne^{(n-1)\epsor}) \delor$ and $TV(\Al_s(\bx_1), \Al_{\LL}(\bx_1) \leq N(2ne^{(n-1)\epsor}) \delor$ (see \citep{stackunion}). Now we claim that $\Al_{\LL}(\bx_0)$ and $\Al_{\LL}(\bx_1)$ are $(\epsr, \delr)$ indistinguishable for any $\delr \geq 2e^{-Ne^{-n\epsor}/16}.$ Observe that this claim implies that $\Al_s(\bx_0)$ and $\Al_s(\bx_1)$ are $(\epsr, \tdelr)$ indistinguishable by 
\cref{dwork lem 3.17} (with $P':= \Al_{\LL}(\bx_0), Q':= \Al_{\LL}(\bx_1), P:= \Al_{s}(\bx_0), Q:= \Al_{s}(\bx_1)$.) Therefore, it remains to prove the claim, i.e. to show that $D_{e^{\epsr}}(\mathcal{A}_{\mathcal{L}}(\bx_0),\mathcal{A}_{\mathcal{L}}(\bx_1) \leq \delr$ for any $\delr \geq  2e^{-Ne^{-n\epsor}/16}.$

Now, 
define $\LL_U^{(i)}(\bz, X):= \begin{cases}
U^{(i)}(\bz, \Xoz) &\mbox{if} ~X = \Xoz \\
U^{(i)}(\bz, \Xoo) &\mbox{if} ~X = \Xoo \\
\LL^{(i)}(\bz, X) &\mbox{otherwise}.
\end{cases}.$
For any inputs $\bz, \bx$, let $\Al_{U}(\bz, \bx)$ be defined exactly the same as $\Al_s(\bz, \bx)$ (same $\pi$) but with the randomizers $\rand$ replaced by $\LL_U^{(i)}$. Then by \cref{eq: 16} and \cref{eq: 17}, \begin{equation}
    \label{eq: 19}
    \Al_{\LL}(\bx_0) = \frac{e^{\epsor}}{e^{\epsor} + 1}\Al_{U}(\bx_0) + \frac{1}{e^{\epsor} + 1}\Al_U(\bx_1) ~\text{and} ~\Al_{\LL}(\bx_1) = \frac{1}{e^{\epsor} + 1}\Al_{U}(\bx_0) + \frac{e^{\epsor}}{e^{\epsor} + 1}\Al_U(\bx_1).
\end{equation}

Then by \cref{eq: 18}, for any $X \in \XX^n \setminus \{\Xoz, \Xoo\}$ and any $\bz = \bz_{1:r-1} \in \ZZ^{(r-1) \times N},$ we have $\LL_U^{(i)}(\bz, X) = \frac{1}{2e^{\tepso}}\LL^{(i)}_U(\bz, \Xoz) + \frac{1}{2e^{\tepso}}\LL^{(i)}_U(\bz, \Xoo) + (1 - e^{-\tepso})LO^{(i)}(\bz, X).$ Hence, \cref{lem 3.3} (with $p:= e^{-\tepso} = e^{-n\epsor}$) implies that $\Al_U(\bx_0)$ and $\Al_U(\bx_1)$) are \[
\left(\log\left(1 + \frac{8 \sqrt{e^{\widetilde{\epso}} \ln(4/\delr)}}{\sqrt{N}} + \frac{8 e^{\widetilde{\epso}}}{N}\right), \delr\right)
\] 
indistinguishable for any $\delr \geq 2e^{-Ne^{-n\epsor}/16}.$ Applying \cref{fmt lem 2.3} with $P:= \Al_{\LL}(\bx_0)$, $Q = \Al_{\LL}(\bx_1)$, $q = \frac{e^{\epsor}  - 1}{e^{\epsor} + 1}$, $P_1 = \Al_U(\bx_0)$, $Q_1 = \Al_U(\bx_1)$, and $P_0 = \frac{1}{2}(P_1 + Q_1)$ and convexity of the hockey-stick divergence yields that $\Al_{\LL}(\bx_0)$ and $\Al_{\LL}(\bx_1)$ are $(\epsr, \delr)$ indistinguishable, as desired. This proves the claim and hence (by \cref{dwork lem 3.17}, as described earlier) the theorem. 
\end{proof}

\begin{remark}
Notice that if $\Al$ is sequentially interactive, then the proof of \cref{thm: Thm 3.8} above almost immediately implies the sequentially interactive part of \cref{thm: R round shuffling amp}. Essentially, just change notation: replace $\bz_{1:r-1}$ by $Z^{(1: i-1)}$, the collection of (single) reports sent by the first $i-1$ silos; note that $\epsor = \epso$, $\delor = \delo$; and view the $N$ reports as being sent in order instead of simultaneously. Alternatively, plug our techniques for $n > 1$ into the proof of Theorem 3.8 in \citep{fmt20}, which is for sequentially interactive algorithms. 
\end{remark}

\noindent \textbf{Step 2:} Combine \cref{thm: R round shuffling amp} with the following CDP SCO lower bounds which follow from \citep{bst14, bft19} and the non-private SCO lower bounds \citep{ny, agarwal}: 
\begin{theorem}{\citep{bft19, bst14}}
\label{thm: cdp sco lower bounds} Let $\mu, D, \epsilon > 0$, $L \geq \mu D$, and $\delta = o(1/nN).$ Consider $\XX := \{\frac{-D}{\sqrt{d}}, \frac{D}{\sqrt{d}}\}^d \subset \mathbb{R}^d$ and $\WW := B_2(0, D) \subset \mathbb{R}^d$. Let $\Al: \XX^{nN} \to \WW$ be any $(\epsilon, \delta)$-CDP algorithm. Then:\\
1. There exists a ($\mu = 0$) convex, linear ($\beta$-smooth for any $\beta$), $L$-Lipschitz loss $f: \WW \times \XX \to \mathbb{R}$ and a distribution $\DD$ on $\XX$ such that if $\bx \sim \DD^{nN}$, then the expected excess loss of $\Al$ is lower bounded as
\[
\mathbb{E}F(\mathcal{A}(\bx)) - F^* = \widetilde{\Omega}\left(\frac{\phi D}{\sqrt{Nn}} + LD\min\left\{1, \frac{\sqrt{d}}{\epsilon n N} \right\}\right).
\]
2. For $L \approx \mu D$, there exists a $\mu$-strongly convex and smooth, $L$-Lipschitz loss $f: \WW \times \XX \to \mathbb{R}$ and  a distribution $\DD$ on $\XX$ such that if $\bx \sim \DD^{nN}$, then the expected excess loss of $\Al$ is lower bounded as \[
\mathbb{E}F(\mathcal{A}(\bx)) - F^* = \widetilde{\Omega}\left(
\frac{\phi^2}{\mu Nn}+ \frac{L^2}{\mu}\min\left\{1, \frac{d}{\epsilon^2 n^2 N^2}\right\}
\right).
\] 
For general $L, \mu, D$, the above strongly convex lower bound holds with the factor $\frac{L^2}{\mu}$ replaced by $LD$. 
\end{theorem}
Namely, if $\Al$ is $(\epso, \delo)$-ISRL-DP, then (under the hypotheses of \cref{thm: SCO lower bound}) $\Al_s$ is $(\epsilon, \delta)$-CDP for $\epsilon = \widetilde{O}(\epso/\sqrt{N})$, so \cref{thm: cdp sco lower bounds} implies that the excess loss of $\Al_s$ is lower bounded as in \cref{thm: cdp sco lower bounds} with $\epsilon$ replaced by $\epso/\sqrt{N}$. \\

\noindent\textbf{Step 3:} We simply observe that when the expectation is taken over the randomness in sampling $\bx \sim \DD^{n \times N}$, the expected excess population loss of $\Al_s$ is identical to that of $\Al$ since $X_i$ and $X_{\pi(i)}$ have the same distribution for all $i, \pi$ by the i.i.d. assumption. This completes the proof of \cref{thm: SCO lower bound}.

\section{Proofs and Supplemental Material for~\cref{sec: hetero upper}}
\label{app: hetero}
\subsection{Mutli-Stage Implementation of Accelerated Noisy MB-SGD}
\label{app: multistage}
Here we describe the multi-stage implementation of Accelerated Noisy MB-SGD that we will use to further expedite convergence for strongly convex loss, which builds on~\citep{ghadimilan2}. As before, for SCO, silos sample locally without replacement in each round and set $R = \lfloor n/K \rfloor$. Whereas for ERM, silos sample locally \textit{with replacement}. \\ \textbf{Multi-stage implementation of \cref{alg: accel dp MB-SGD}:} \ul{Inputs:} $U \in [R]$ such that $\sum_{k=1}^U R_k \leq R$ for $R_k$ defined below; $w_0 \in \WW, \Delta \geq F(w_0) - F^*$, $V > 0$, and $q_0 = 0.$\\
For $k \in [U]$, do the following:
\begin{enumerate}
    \item Run $R_k$ rounds of \cref{alg: accel dp MB-SGD} using $w_0 = q_{k-1}$, $\{\alpha_{r}\}_{r \geq 1}$ and $\{\eta_{r}\}_{r \geq 1},$ where
    \[R_k = \left\lceil \max\left\{4 \sqrt{\frac{2\beta}{\mu}}, \frac{128 V^2}{3 \mu \Delta 2^{-(k+1)}} \right\} \right\rceil,
    \]
    \[
    \alpha_r = \frac{2}{r + 1}, ~\eta_r = \frac{4 \upsilon_k}{r(r+1)},
    \]
    \[
    \upsilon_k = \max \left\{2 \beta,  \left[\frac{\mu V^2}{3 \Delta 2^{-(k-1)} R_k (R_k + 1)(R_k + 2)}\right]^{1/2}\right\}
    \]
    \item Set $q_k = w_{R_k}^{ag},$ where  $w_{R_k}^{ag}$ is the output of Step 1 above. Then update $k \gets k+1$ and return to Step 1. 
\end{enumerate}

\subsection{Complete version and proof of \cref{thm: Convex hetero SCO MB-SGD upper}
}
\label{app: non-iid proof}
We will state and prove the theorem for general $M \in [N]$ under Assumption \textbf{3}. We first require some additional notation: Define the heterogeneity parameter 
\begin{equation}
\label{eq: upsilon}
\upsilon^2 := \sup_{w \in \WW} \frac{1}{N}\sum_{i=1}^N \|\nabla F_i(w) - \nabla F(w) \|^2,
\end{equation}
which has appeared in \citep{khaled2019better, koloskova, scaffold, woodworth2020}. 
$\upsilon^2 = 0$ iff $F_i = F + a_i$ for constants $a_i \in \mathbb{R}, ~i \in [N]$ (``homogeneous up to transaltion''). 

\begin{theorem}[Complete version of~\cref{thm: Convex hetero SCO MB-SGD upper}]
\label{thm: formal hetero}
Let $f(\cdot, x)$ be $\beta$-smooth for all $x \in \XX.$ 
Assume $\epso \leq 8\ln(1/\delo), ~\delo \in (0,1)$ Then, with $\sigma^2 = \frac{32L^2 \ln(1.25/\delo)}{\epso^2 K^2}$, One-Pass Accelerated Noisy MB-SGD is $(\epso, \delo)$-ISRL-DP. Moreover, there are choices of stepsize, batch size, and $\lambda > 0$ such that :\\
1. Running One-Pass Accelerated Noisy MB-SGD on $\widetilde{f}(w, x): = f(w,x) + \frac{\lambda}{2}\|w - w_0\|^2$ (where $w_0 \in \WW$)
yields
\small
\begin{equation}
\label{eq: formal convex noniid}
\small
\EPX = \widetilde{\mathcal{O}}\left(
\frac{\phi D}{\sqrt{nM}} + 
 \left(\frac{\beta^{1/4} L D^{3/2}  \sqrt{d \ln(1/\delo)}}{\epso n \sqrt{M}}\right)^{4/5}
+ \sqrt{\frac{N-M}{N-1}}\mathbbm{1}_{\{N >1\}} \frac{\upsilon L^{1/5} D^{4/5}}{\beta^{1/5}} \left(\frac{\sqrt{d \ln(1/\delo)}}{\epso n
M^3
}\right)^{1/5}
\right).
\end{equation}
\normalsize
\noindent 2. If $f(\cdot, x)$ is $\mu$-strongly convex $\forall x \in \XX$ and $\kappa = \frac{\beta}{\mu}$, then running the Multi-Stage Implementation of One-Pass Accelerated Noisy MB-SGD directly on $f$\color{black} yields 
with batch size $K$ yields 
\begin{equation}
\label{eq: formal sc noniid}
\EPX = \widetilde{\mathcal{O}}\left(\frac{\phi^2}{nM} + \frac{L^2}{\mu} \frac{\sqrt{\kappa} d \ln(1/\delo)}{\epso^2 n^2 M} + \frac{\upsilon^2}{\mu \sqrt{\kappa} M}\left(1 - \frac{M-1}{N-1}\right) \mathbbm{1}_{\{N > 1\}}\right).
\end{equation}
\end{theorem}

\begin{remark}
1. For convex $f$, if $M=N$ or 
\[\upsilon \lesssim 
\sqrt{\frac{N-1}{N-M}}\left[(L^3 D^2 \beta^2)^{1/5} \frac{M^{1/5} (d \ln(1/\delo))^{3/10}}{n^{3/5} \epso^{3/5}} + \phi \left(\frac{\beta D}{L}\right)^{1/5} \left(\frac{\epso^2 M}{n^3 d \ln(1/\delo)}\right)^{1/10}
\right],
\]
then~\cref{eq: formal convex noniid} recovers the bound \cref{eq: convex noniid bound} in~\cref{thm: Convex hetero SCO MB-SGD upper} ($M=N$ version), with $N$ replaced by $M$. \\
2. For $\mu$-strongly convex $f$, if $M=N$ or \[\upsilon^2 \lesssim  \left(\frac{N-1}{N-M}\right)\sqrt{\kappa}\left(\frac{\phi^2}{n} + \frac{\sqrt{\kappa}L^2 d \ln(1/\delo)}{\epso^2 n^2}\right),\] then~\cref{eq: formal sc noniid} recovers the bound~\cref{eq: sc noniid bound} in~\cref{thm: Convex hetero SCO MB-SGD upper}, with $N$ replaced by $M$.
\end{remark}
To prove~\cref{thm: formal hetero}, we will need some preliminaries: 
\begin{lemma}{\cite[Lemma 4]{woodworth2020}}
\label{lem: woodworth lem 4}
Let $F: \WW \to \mathbb{R}^d$ be convex and $\beta$-smooth, and suppose that the unbiased stochastic gradients $\widetilde{g}(w_t)$ at each iteration have bounded variance $\mathbb{E}\|\widetilde{g}(w) - \nabla F(w) \|^2 \leq V^2.$ If $\widehat{w}_R^{ag}$ is computed by $R$ steps of Accelerated MB-SGD on the regularized objective $\widetilde{F}(w) = F(w) + \frac{V}{2\|w_0 - \ws\| \sqrt{R}}\|w - w_0\|^2,$ then \[
\mathbb{E}F(\widehat{w}_R^{ag}) - F^{*}\lesssim \frac{\beta \|w_0 - \ws\|^2}{R^2} + \frac{V \|w_0 - \ws\|}{\sqrt{R}}.
\]
\end{lemma}

We then have the following bound for the multi-stage protocol: 
\begin{lemma}{\cite[Proposition 7]{ghadimilan2}} 
\label{lem: wood lem 5}
Let $f: \WW \to \mathbb{R}^d$ be $\mu$-strongly convex and $\beta$-smooth, and suppose that the unbiased stochastic gradients $\widetilde{g}(w_r)$ at each iteration $r$ have bounded variance $\mathbb{E}\|\widetilde{g}(w_r) - \nabla F(w_t) \|^2 \leq V^2.$ If $\widehat{w}_R^{ag}$ is computed by $R$ steps of the Multi-Stage Accelerated MB-SGD, then \[
\mathbb{E}F(\widehat{w}_R^{ag}) - F^{*}\lesssim \Delta \exp\left(-\sqrt{\frac{\mu}{\beta}} R\right) + \frac{V^2}{\mu R},
\]
where $\Delta = F(w_0) - F^{*}.$
\end{lemma}
Of course, \cite[Lemma 4]{woodworth2020} and \cite[Proposition 7]{ghadimilan2} are stated for the \textit{non-private} Accelerated MB-SGD (AC-SA). However, we observe that the bounds in \cite[Lemma 4]{woodworth2020} and \cite[Proposition 7]{ghadimilan2} depend only on the stochastic gradient oracle via its bias and variance. Hence these results also apply to our (multi-stage implementation of) Accelerated \textit{Noisy} MB-SGD. Next, we bound the variance of our noisy stochastic gradient estimators: 
\begin{lemma}
\label{lem: gradient variance}
Let $X_i \sim \DD_i^n$, $\widetilde{g_r} := \frac{1}{M_r} \sum_{i \in S_r} \frac{1}{K} \sum_{j \in [K]} (\nabla f(w_r, x_{i,j}^r) + u_i)$, where $(x^{r}_{i,j})_{j \in [K]}$ are sampled from $X_i$ 
and $u_i \sim \mathcal{N}(0, \sigma^2 \mathbf{I}_d)$ is independent of $\nabla f(w_r, x_{i,j}^r)$  for all $i \in [N], j \in [K].$ 
Then
\[
\mathbb{E}\|\widetilde{g_r} - \nabla F(w_r) \|^2 \leq \frac{\phi^2}{MK} + \left(1 - \frac{M-1}{N-1}\right)\frac{\upsilon^2}{M}\mathbbm{1}_{\{N > 1\}}  + \frac{d \sigma^2}{M}.
\]
\end{lemma}
\noindent The three terms on the right-hand side 
correspond (from left to right) to the variances of: local minibatch sampling within each silo, the draw of the silo set $S_r$ of size $M_r$ under Assumption \textbf{3}, and the Gaussian noise.
We now turn to the proof of \cref{lem: gradient variance}.
\begin{proof}[Proof of \cref{lem: gradient variance}]
First, fix the randomness due to the size of the silo set $M_r$. Now $\widetilde{g}_r = g_r + \widebar{u}_r,$  where $\widebar{u}_r = \frac{1}{M_r} \sum_{i=1}^{M_r} u_i \sim N(0, \frac{\sigma^2}{M_r}\mathbf{I}_d)$ and $\widebar{u}_r$ is independent of $g_r := \frac{1}{M_r} \sum_{i \in S_r} \frac{1}{K_i} \sum_{j \in [K]} \nabla f(w_r, x_{i,j}^r).$ Hence, \begin{align*}
 \mathbb{E}[\|\widetilde{g_r} - \nabla F(w_r) \|^2 | M_r] &=   \mathbb{E}[\|g_r - \nabla F(w_r) \|^2 | M_r] + \mathbb{E}[\|\widebar{u}\|^2 | M_r] \\
 &= \mathbb{E}[\|g_r - \nabla F(w_r) \|^2 | M_r] + d \frac{\sigma^2}{M_r}.
\end{align*}
 Let us drop the $r$ subscripts for brevity (denoting $g = g_r,$ $w = w_r,$ $S = S_r$, and $M_r = M_1$ since they have the same distribution) and denote $h_i := \frac{1}{K_i} \sum_{j=1}^{K_i} \nabla f(w, x_{i,j}).$ Now, we have (conditionally on $M_1$) 
\begin{align*}
    \mathbb{E}[\|g - \nabla F(w) \|^2 | M_1] &= \mathbb{E}\left[\left\|\frac{1}{M_1} \sum_{i \in S} \frac{1}{K_i} \sum_{j=1}^{K_i} \nabla f(w, x_{i,j}) - \nabla F(w) \right\|^2 \bigg| ~M_1\right] \\
    &= \mathbb{E}\left[\left\|\frac{1}{M_1} \sum_{i \in S} (\nabla h_i  - \nabla F_i(w)) + \frac{1}{M_1} \sum_{i \in S} \nabla F_i(w) - \nabla F(w) \right\|^2 \bigg| ~M_1\right] \\
    &= \frac{1}{M_1^2} \underbrace{\mathbb{E}\left[\|\sum_{i \in S} h_i(w) - \nabla F_i(w)\|^2 \bigg | M_1 \right]}_{\textcircled{a}} \\
    & \;\;\; + \frac{1}{M_1^2} \underbrace{\mathbb{E}\left[\| \sum_{i \in S} \nabla F_i(w) - \nabla F(w)    \|^2 \bigg | M_1 \right]}_{\textcircled{b}},
\end{align*}
since, conditional on $S$, the cross-terms vanish by (conditional) independence of $h_i$ and the non-random $\sum_{i' \in S} \nabla F_{i'}(w) - \nabla F(w)$ for all $i \in S.$ Now we bound \textcircled{a}: 
\begin{align*}
\textcircled{a}&= \mathbb{E}_{S}\left[ \mathbb{E}_{h_i}\|\sum_{i \in S} h_i(w) - \nabla F_i(w)\|^2 \bigg | S, M_1 \right] \\
&= \mathbb{E}_{S}\left[\sum_{i \in S}\mathbb{E}_{h_i}\| h_i(w) - \nabla F_i(w)\|^2 \bigg | S, M_1 \right] \\
&\leq \mathbb{E}_{S}\left[\sum_{i \in S} \frac{\phi^2}{K}\right]\\
&\leq \mathbb{E}_{S}\left[ \frac{M_1 \phi^2}{K} \right],
\end{align*}
by conditional independence of $h_i - \nabla F_i$ and $h_{i'} - \nabla F_{i'}$ given $S$. 
Hence \[
\frac{1}{M_1^2} \mathbb{E}\left[\|\sum_{i \in S} h_i(w) - \nabla F_i(w)\|^2 \bigg | M_1 \right] \leq \frac{\phi^2}{M_1 K}
\]
Next we bound \textcircled{b}. Denote $y_i := \nabla F_i(w)$ and $\widebar{y} := \frac{1}{N} \sum_{i=1}^N y_i = \nabla F(w).$ We claim \textcircled{b} = $\mathbb{E}\left[\| \sum_{i \in S} y_i - \widebar{y} \|^2 \bigg | M_1 \right] \leq M_1\left(\frac{N - M_1}{N-1}\right) \upsilon^2$. Assume WLOG that $\widebar{y} = 0$ (otherwise, consider $y'_i = y_i - \widebar{y}$, which has mean $0$). In what follows, we shall omit the ``conditional on $M_1$'' notation (but continue to condition on $M_1$) and denote by $\Omega$ the collection of all $N \choose M_1$ subsets of $[N]$ of size $M_1.$  Now, 
\begin{align*}
    \textcircled{b} &= \frac{1}{\binom{N}{M_1}} \sum_{S \in \Omega} \left\| \sum_{i \in S} y_i \right\|^2 \\
    &= \frac{1}{\binom{N}{M_1}} \sum_{S \in \Omega} \left( \sum_{i \in S} \| y_i \|^2 + 2 \sum_{i,i' \in S, i < i'} \langle y_i, y_{i'} \rangle \right) \\
    &= \frac{1}{\binom{N}{M_1}} \left( \binom{N-1}{M_1-1} \sum_{i=1}^N \|y_i\|^2 + 2 \binom{N-2}{M_1-2} \sum_{1 \leq i < i' \leq N} \langle y_i, y_{i'} \rangle \right) \\
    &= \frac{M_1}{N}\sum_{i=1}^N \|y_i\|^2 + 2 \frac{M_1(M_1-1)}{N(N-1)} \sum_{1 \leq i < i' \leq N} \langle y_i, y_{i'} \rangle  \\
    &= \frac{M_1}{N} \left(\frac{M_1-1}{N-1} + \frac{N-M_1}{N-1}\right) \sum_{i=1}^N \|y_i\|^2 + \frac{2M_1(M_1-1)}{N(N-1)} \sum_{1 \leq i < i' \leq N} \langle y_i, y_{i'} \rangle \\
    &= \frac{M_1}{N} \frac{M_1-1}{N-1} \left\|\sum_{i=1}^N y_i \right\| ^2 + \frac{M_1}{N} \frac{N-M_1}{N-1} \sum_{i=1}^N \|y_i\|^2 \\
    &= \frac{M_1}{N} \frac{N-M_1}{N-1} \sum_{i=1}^N \|y_i\|^2 \\
    &\leq M_1\left(\frac{N - M_1}{N-1}\right) \upsilon^2.
\end{align*}
Hence $\frac{1}{M_1^2}\mathbb{E}\left[\|\sum_{i \in S} \nabla F_i(w) - \nabla F(w)\|^2 \bigg | M_1 \right] \leq \frac{N-M_1}{N-1} \frac{\upsilon^2}{M_1}.$ 
Finally, we take expectation over the randomness in $M_1$ and use $\expec[1/M_1] = 1/M$ to arrive at the lemma.
Also, the result clearly holds when $N = 1$ since the \textcircled{b} term is zero when there is no variance in silo sampling (which is the case when $N=1$). 
\end{proof}

\begin{proof}[Proof of \cref{thm: formal hetero}]
\textbf{Privacy:} By post-processing~\cite[Proposition 2.1]{dwork2014}, it suffices to show that the $R = n/K$ noisy stochastic gradients computed in line 7 of \cref{alg: accel dp MB-SGD} are $(\epso, \delo)$-ISRL-DP. Further, since the batches sampled locally are disjoint (because we sample locally \textit{without replacement}), parallel composition \citep{mcsherry2009privacy} implies that if each update in line 7 is $(\epso, \delo)$-ISRL-DP, then the full algorithm is $(\epso, \delo)$-ISRL-DP. Now recall that the Gaussian mechanism provides $(\epso, \delo)$-DP if $\sigma^2 \geq \frac{8 \Delta_2^2 \ln(1.25/\delo)}{\epso^2}$,  where $\Delta_2 = \sup_{w, X_i \sim X'_i} \| \frac{1}{K} \sum_{j=1}^K \nabla f(w,x_{i,j}) - \nabla f(w,x_{i,j}') \| = \sup_{w, x, x'} \frac{1}{K}\| \nabla f(w,x) - \nabla f(w,x') \|
\leq \frac{2L}{K}$ is the $L_2$ sensitivity of the non-private gradient update in line 7 of \cref{alg: accel dp MB-SGD}: this follows from \cite[Propositions 1.3 and 1.6]{bun16} and our assumption $\epso \leq 8 \ln(1/\delo)$. Therefore, conditional on the private transcript of all other silos, our choice of $\sigma^2$ implies that silo $i$'s transcript is $(\epso, \delo)$-DP for all $i \in [N]$, which means that One-Pass Noisy Accelerated Distributed MB-SGD is $(\epso, \delo)$-ISRL-DP. \\
\textbf{Excess loss:} 1. For the convex case, we choose $\lambda = \frac{V}{2D \sqrt{R}}$, where $V^2 = \frac{\phi^2}{MK} + \frac{\upsilon^2}{M}\mathbbm{1}_{\{N>1\}} \left(\frac{N-M}{N-1}\right) + \frac{d \sigma^2}{M}$ is the variance of the noisy stochastic minibatch gradients, by~\cref{lem: gradient variance} for our noise with variance $\sigma^2 = \frac{32L^2 \ln(1.25/\delo)}{K^2 \epso^2}$. Now plugging $V^2$ into \cref{lem: woodworth lem 4}, setting $R = n/K$, and $\lambda := \frac{V}{2D \sqrt{R}}$ yields
\begin{equation*}
\EPX \lesssim \frac{\beta D^2 K^2}{n^2} + \frac{\phi D}{\sqrt{nM}} + \frac{LD \sqrt{d \ln(1/\delo)}}{\epso \sqrt{KnM}} + \frac{\sqrt{K} \upsilon D}{\sqrt{nM}}\sqrt{\frac{N-M}{N-1}} \mathbbm{1}_{\{N >1\}}. 
\end{equation*}
Choosing $K = \left(\frac{L}{\beta D}\right)^{2/5} \frac{n^{3/5} (d \ln(1/\delo)^{1/5}}{\epso^{2/5} M^{1/5}}$ implies \cref{eq: formal convex noniid}.\\
2. For strongly convex loss, we plug the same estimate for $V^2$ used above into \cref{lem: wood lem 5} with $R = n/K$ to obtain 
\begin{equation*}
\EPX \lesssim \Delta \exp\left(\frac{-n}{K \sqrt{\kappa}} \right) + \frac{\phi^2}{\mu n M} + \frac{\upsilon^2 K}{\mu n M}\left(1 - \frac{M-1}{N-1}\right) \mathbbm{1}_{\{N > 1\}} + \frac{L^2}{\mu} \frac{d \ln(1/\delo)}{K n \epso^2 M}.
\end{equation*}
Choosing $K = \frac{n}{\sqrt{\kappa} \ln\left(\mu \Delta \min\left\{\frac{\epso^2 n^2 M}{L^2 d \ln(1/\delo)} , \frac{nM}{\phi^2}\right\}\right)}$ yields \cref{eq: formal sc noniid}.
\end{proof}

\section{Proofs and Supplementary Material for~\cref{sec: ERM}}
\label{app: ERM}

\subsection{Proof of~\cref{thm: informal accel non smooth ERM upper bound}}
\label{app: ERM upper}
We begin by considering $\beta$-smooth $f$.
\begin{theorem}[Smooth ERM Upper Bound]
\label{thm: smooth ERM upper bound}
Assume $f(\cdot, x)$ is $\beta$-smooth for all $x$. Let $\epso \leq 2  \ln(2/\delo), \delo \in (0,1)$, choose $K \geq \frac{\epso n}{4 \sqrt{2R \ln(2/\delo)}}$, and $\sigma^2 = \frac{256 L^2 R \ln(\frac{2.5 R}{\delo}) \ln(2/\delo)}{n^2 \epso^2}$. Then~\cref{alg: accel dp MB-SGD} is $(\epso, \delo)$-ISRL-DP. Further: \\
1. If $f_\lambda(\cdot, x)$ is convex, then running~\cref{alg: accel dp MB-SGD} on the regularized objective $\widetilde{f}(w,x) = f(w,x) + \frac{\lambda}{2}\|w - w_0\|^2$ with
$R =\max\left(\left(\sqrt{\frac{\beta D}{L}}\frac{\sqrt{M}\epso n}{\sqrt{d \ln(1/\delo)}}\right)^{1/2}, \mathbbm{1}_{\{M K < Nn\}} \frac{\epso^2 n^2}{K d \ln(1/\delo)}
\right)
$ (and $\lambda$ specified in the proof)
yields 
\begin{equation} 
\label{eq: convex acccel smooth ERM upper}
\ERX =  \widetilde{\mathcal{O}}\left(LD\frac{\sqrt{d 
\ln(1/\delo)
}}{\epso n \sqrt{M}}\right). 
\end{equation} \normalsize 
\noindent 2. If $f(\cdot, x)$ is $\mu$-strongly convex, then running the multi-stage implementation (\cref{app: multistage}) of~\cref{alg: accel dp MB-SGD} with 
$R =\max\left\{\sqrt{\frac{\beta}{\mu}} \ln\left(\frac{\Delta_{\bx} \mu M \epso^2 n^2}{L^2 d}\right), \mathbbm{1}_{\{M K < Nn\}} \frac{\epso^2 n^2}{K d \ln(1/\delo)}\right\}$ and $\Delta_{\bx} \geq \hf(w_0) - \hf^*$
yields 
\begin{equation}
\label{eq: sc accel smooth ERM upper}
\ERX = \widetilde{\mathcal{O}}
\left(\frac{L^2}{\mu}\frac{d 
\ln(1/\delo)
}{\epso^2 n^2 M}\right). 
\end{equation} \normalsize
\end{theorem}
To prove~\cref{thm: smooth ERM upper bound}, we will require the following lemma: 
\begin{lemma}[\cite{lei17}]
\label{lem: lei}
Let $\{a_l\}_{l \in [\widetilde{N}]}$ be an arbitrary collection of vectors such that $\sum_{l=1}^{\widetilde{N}} a_l = 0$. Further, let $\mathcal{S}$ be a uniformly random subset of $[\widetilde{N}]$ of size $\widetilde{M}$. Then,\[
\mathbb{E}\left\|\frac{1}{\widetilde{M}} \sum_{l \in \mathcal{S}} a_l \right\|^2 = \frac{\widetilde{N} - \widetilde{M}}{(\widetilde{N} - 1) \widetilde{M}} \frac{1}{\widetilde{N}}\sum_{l=1}^{\widetilde{N}} \|a_l\|^2 \leq \frac{\mathbbm{1}_{\{\widetilde{M} < ~\widetilde{N}\}}}{\widetilde{M}~\widetilde{N}}\sum_{l=1}^{\widetilde{N}}\|a_l\|^2.
\]
\end{lemma}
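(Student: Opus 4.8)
\textbf{Proof plan for Lemma~\ref{lem: lei}.}
The plan is a direct second-moment computation for sampling without replacement, exploiting the mean-zero hypothesis $\sum_{l=1}^{\widetilde{N}} a_l = 0$. First I would expand the squared norm of the (unnormalized) sum over the random subset:
\[
\left\|\sum_{l \in \mathcal{S}} a_l\right\|^2 = \sum_{l=1}^{\widetilde{N}} \mathbbm{1}_{\{l \in \mathcal{S}\}} \|a_l\|^2 + \sum_{l \neq l'} \mathbbm{1}_{\{l \in \mathcal{S}\}}\mathbbm{1}_{\{l' \in \mathcal{S}\}} \langle a_l, a_{l'}\rangle.
\]
Since $\mathcal{S}$ is a uniformly random size-$\widetilde{M}$ subset of $[\widetilde{N}]$, we have $\mathbb{P}(l \in \mathcal{S}) = \widetilde{M}/\widetilde{N}$ for each $l$, and $\mathbb{P}(l \in \mathcal{S}, l' \in \mathcal{S}) = \frac{\widetilde{M}(\widetilde{M}-1)}{\widetilde{N}(\widetilde{N}-1)}$ for each ordered pair $l \neq l'$ (assuming $\widetilde{N} \geq 2$; the case $\widetilde{N}=1$ forces $\widetilde{M}=1$ and $a_1 = 0$, so both sides vanish and can be dispatched immediately). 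Taking expectations gives
\[
\mathbb{E}\left\|\sum_{l \in \mathcal{S}} a_l\right\|^2 = \frac{\widetilde{M}}{\widetilde{N}}\sum_{l=1}^{\widetilde{N}}\|a_l\|^2 + \frac{\widetilde{M}(\widetilde{M}-1)}{\widetilde{N}(\widetilde{N}-1)}\sum_{l \neq l'}\langle a_l, a_{l'}\rangle.
\]

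Next I would use the mean-zero identity: $\sum_{l \neq l'}\langle a_l, a_{l'}\rangle = \big\|\sum_{l}a_l\big\|^2 - \sum_l \|a_l\|^2 = -\sum_l \|a_l\|^2$. Substituting and combining the two terms yields
\[
\mathbb{E}\left\|\sum_{l \in \mathcal{S}} a_l\right\|^2 = \left(\frac{\widetilde{M}}{\widetilde{N}} - \frac{\widetilde{M}(\widetilde{M}-1)}{\widetilde{N}(\widetilde{N}-1)}\right)\sum_{l=1}^{\widetilde{N}}\|a_l\|^2 = \frac{\widetilde{M}(\widetilde{N}-\widetilde{M})}{\widetilde{N}(\widetilde{N}-1)}\sum_{l=1}^{\widetilde{N}}\|a_l\|^2,
\]
where the last step is the algebraic simplification $\frac{1}{\widetilde{N}}-\frac{\widetilde{M}-1}{\widetilde{N}(\widetilde{N}-1)} = \frac{(\widetilde{N}-1)-(\widetilde{M}-1)}{\widetilde{N}(\widetilde{N}-1)} = \frac{\widetilde{N}-\widetilde{M}}{\widetilde{N}(\widetilde{N}-1)}$. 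Dividing through by $\widetilde{M}^2$ gives exactly the claimed equality $\mathbb{E}\big\|\frac{1}{\widetilde{M}}\sum_{l \in \mathcal{S}} a_l\big\|^2 = \frac{\widetilde{N}-\widetilde{M}}{(\widetilde{N}-1)\widetilde{M}}\cdot\frac{1}{\widetilde{N}}\sum_{l}\|a_l\|^2$.

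Finally, for the inequality it suffices to observe $\frac{\widetilde{N}-\widetilde{M}}{\widetilde{N}-1} \leq \mathbbm{1}_{\{\widetilde{M} < \widetilde{N}\}}$: when $\widetilde{M} = \widetilde{N}$ the left side is $0$; when $\widetilde{M} < \widetilde{N}$ we have $1 \leq \widetilde{N}-\widetilde{M} \leq \widetilde{N}-1$, so the ratio lies in $(0,1]$ and is bounded by the indicator, which equals $1$. Multiplying by $\frac{1}{\widetilde{M}\widetilde{N}}\sum_l\|a_l\|^2 \geq 0$ finishes the proof. There is no real obstacle here; the only point requiring a moment's care is the degenerate boundary case $\widetilde{N}=1$ (and, more cosmetically, keeping track of the ``ordered vs.\ unordered pairs'' bookkeeping in the cross term), which is why I would dispose of it at the outset before invoking the pair-inclusion probability formula.
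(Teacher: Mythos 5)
Your proof is correct. The paper itself does not prove this lemma—it is cited verbatim from \cite{lei17}—so there is no in-paper proof to compare against directly. However, the paper \emph{does} carry out essentially the same computation when it bounds the term $\textcircled{b}$ in the proof of \cref{lem: gradient variance}: there it averages $\bigl\|\sum_{i\in S} y_i\bigr\|^2$ over all $\binom{N}{M_1}$ size-$M_1$ subsets, counts how many subsets contain a fixed index ($\binom{N-1}{M_1-1}$) and how many contain a fixed pair ($\binom{N-2}{M_1-2}$), and then invokes the same mean-zero cancellation. Your version reaches the identical intermediate expression $\tfrac{\widetilde M}{\widetilde N}\sum_l\|a_l\|^2 + \tfrac{\widetilde M(\widetilde M-1)}{\widetilde N(\widetilde N-1)}\sum_{l\neq l'}\langle a_l,a_{l'}\rangle$ by taking expectations of inclusion indicators rather than by explicit subset counting—a purely notational difference, since $\mathbb{P}(l\in\mathcal{S}) = \binom{N-1}{M-1}/\binom{N}{M}$ and $\mathbb{P}(l,l'\in\mathcal{S}) = \binom{N-2}{M-2}/\binom{N}{M}$. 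Your handling of the inequality (reducing to $\tfrac{\widetilde N-\widetilde M}{\widetilde N-1}\le\mathbbm{1}_{\{\widetilde M<\widetilde N\}}$) and your flag on the degenerate $\widetilde N=1$ case (where the stated equality is a $0/0$ form and both sides vanish) are both correct and appropriate.
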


\begin{proof}[Proof of~\cref{thm: smooth ERM upper bound}]
\textbf{Privacy:} ISRL-DP of \cref{alg: dp MB-SGD} follows from ISRL-DP of the stochastic gradients 
$\{\widetilde{g}_r^i\}_{r=1}^{R}$ in line 7 of the algorithm (which was established in~\cref{thm: informal smooth MBSGD}) and the post-processing property of DP~\cite[Proposition 2.1]{dwork2014}. Namely, since the choices of $\sigma^2, K$ given in \cref{thm: smooth ERM upper bound} ensure that silos' local stochastic minibatch gradients are ISRL-DP and the iterates in \cref{alg: accel dp MB-SGD} are functions of these private noisy gradients (which do not involve additional data queries), it follows that the iterates themselves are ISRL-DP. 

\textbf{Excess risk:} 1. For $\lambda = \frac{V_\bx}{\|w_0 - \ws\|\sqrt{R}}$, ~\cref{lem: woodworth lem 4} implies \begin{equation}
\label{xing}
\ER \lesssim \frac{\beta D^2}{R^2} + \frac{VD}{\sqrt{R}},
\end{equation}
where $V_\bx^2 := \sup_{r \in [R]} \expec\|\frac{1}{M_r} \sum_{i \in S_r} \widetilde{g}_r^i - \nabla \hf(w_r^{md})\|^2$ for $\widetilde{g}_r^i = \frac{1}{K} \sum_{j=1}^K \nabla f(w_r^{md}, x_{i,j}^r) + u_i$ defined in line 7 of~\cref{alg: accel dp MB-SGD}. Now~\cref{lem: lei} and $L$-Lipschitzness of $f(\cdot, x)$ implies that \[
V_\bx^2 \leq \frac{\mathbbm{1}_{\{M_r K < n N\}}\color{black}}{M_r K n N} \sum_{i=1}^N \sum_{j=1}^n \sup_{w \in \WW} \|\nabla f(w, x_{i, j}) - \nabla \hf(w)\|^2 \leq \mathbbm{1}_{ \{M_r K < n N\}\color{black}} \frac{4L^2}{M_r K}, 
\]
conditional on $M_r$. Hence, taking total expectation with respect to $M_r$ and plugging this bound into~\cref{xing} yields: \[
\ER \lesssim \frac{\beta D^2}{R^2} + LD\left(\frac{\mathbbm{1}_{\{M K < n N\}}}{\sqrt{MKR}} + \frac{\sqrt{d \ln^2(R/\delo)}}{\sqrt{M} \epso n} \right),
\]
by our choice of $\sigma^2$ and independence of noises $\{u_i\}_{i \in S_r}$ across silos. Then one verifies that the prescribed choice of $R$ yields~\cref{eq: convex acccel smooth ERM upper}. \\
2. Invoking~\cref{lem: wood lem 5} with the same estimate for $V_\bx^2$ obtained above gives \[
\ER \lesssim \Delta_{\bx} \exp\left(-\sqrt{\frac{\mu}{\beta}}R\right) + \frac{L^2}{\mu}\left(\frac{\mathbbm{1}_{\{M K < n N\}}}{MKR} + \frac{d \ln^2(R\delo)}{M \epso^2 n^2}\right),
\]
and plugging in the prescribed $R$ completes the proof.
\end{proof}

We now re-state a precise form of~\cref{thm: informal accel non smooth ERM upper bound} before providing its proof: 
\begin{theorem}[Precise Re-statement of~\cref{thm: informal accel non smooth ERM upper bound}]
Let $\epso \leq 2  \ln(2/\delo), \delo \in (0,1)$, choose $\sigma^2 = \frac{256 L^2 R \ln(\frac{2.5 R}{\delo}) \ln(2/\delo)}{n^2 \epso^2}$ and $K \geq \frac{\epso n}{4 \sqrt{2R \ln(2/\delo)}}.$ 
Then, \cref{alg: accel dp MB-SGD} is $(\epso, \delo)$-ISRL-DP. Further, there exist choices of $\beta > 0$ such that for~\cref{alg: dp MB-SGD} run on $f_{\beta}(w, x):= \min_{v \in \WW} \left(f(v,x) + \frac{\beta}{2}\|w - v\|^2 \right)$, we have: \\
1. If $f(\cdot, x)$ is convex, then setting 
$R =\max\left(\frac{\sqrt{M}\epso n}{\sqrt{d \ln(1/\delo)}}, \mathbbm{1}_{\{M K < Nn\}} \frac{\epso^2 n^2}{K d \ln(1/\delo)}
\right)$ yields 
\begin{equation} 
\ERX =  \widetilde{\mathcal{O}}\left(LD\frac{\sqrt{d 
\ln(1/\delo)
}}{\epso n \sqrt{M}}\right). 
\end{equation} \normalsize 
\noindent 2. If $f(\cdot, x)$ is $\mu$-strongly convex and $R =\widetilde{\mathcal{O}}\left(\max\left\{\frac{\sqrt{M}\epso n}{\sqrt{d \ln(1/\delo)}}
, \mathbbm{1}_{\{M K < Nn\}} \frac{\epso^2 n^2}{K d \ln(1/\delo)}\right\}\right)$ (in the multi-stage implementation in~\cref{app: multistage}), then 
\begin{equation}
\ERX = \widetilde{\mathcal{O}}
\left(\frac{L^2}{\mu}\frac{d 
\ln(1/\delo)
}{\epso^2 n^2 M}\right). 
\end{equation} \normalsize
\end{theorem}

\begin{proof}
We established ISRL-DP in~\cref{thm: smooth ERM upper bound}. \\
\textbf{Excess risk:} 1. By~\cref{lem: properties of moreau} and~\cref{thm: smooth ERM upper bound}, we have \[
\ER \lesssim LD\frac{\sqrt{d \ln^2(R/\delo)}}{\sqrt{M}\epso n} + \frac{L^2}{\beta},
\]
if $R =\max\left(\left(\sqrt{\frac{\beta D}{L}}\frac{\sqrt{M}\epso n}{\sqrt{d \ln(1/\delo)}}\right)^{1/2}, \mathbbm{1}_{\{M K < Nn\}} \frac{\epso^2 n^2}{K d \ln(1/\delo)}
\right)
$. Now choosing $\beta := \frac{\epso n \sqrt{M} L}{\sqrt{d} D}$ yields both the desired excess risk and communication complexity bound. \\
2. By~\cref{lem: properties of moreau} and~\cref{thm: smooth ERM upper bound}, we have \[
\ER \lesssim \frac{L^2}{\mu}\frac{d \ln^2(R/\delo)}{M\epso^2 n^2} + \frac{L^2}{\beta},
\]
if $R =\max\left\{\sqrt{\frac{\beta}{\mu}} \ln\left(\frac{\Delta_{\bx} \mu M \epso^2 n^2}{L^2 d}\right), \mathbbm{1}_{\{M K < Nn\}} \frac{\epso^2 n^2}{K d \ln(1/\delo)}\right\}$. Now choosing $\beta := \frac{\mu M \epso^2 n^2}{d}$ yields both the desired excess risk and communication complexity bound. 
\end{proof}

\begin{remark}
\label{rem: ERM comm complexity}
The algorithm of \citet{girgis21a} requires 
$R = \widetilde{\Omega}(\epso^2 n^2 M/d)$
communications,
making \cref{alg: accel dp MB-SGD} faster by a factor of $\min(\sqrt{M} \epso n/\sqrt{d}, 
~M K)$. 
If $M=N$ and full batches are used, the advantage of our algorithm over that of \citet{girgis21a} is even more significant. 
\end{remark}

\subsection{Lower bounds for ISRL-DP Federated ERM}
\label{app: ERM lower bounds}
Formally, define the algorithm class $\mathbb{B}_{(\epso, \delo), C}$ to consist of those (sequentially interactive or $C$-compositional, ISRL-DP) algorithms $\mathcal{A} \in \mathbb{A}_{(\epso, \delo), C} 
$ such that for any $\bx \in \mathbb{X}, ~f \in \mathcal{F}_{L,D}$, the expected empirical loss of the shuffled algorithm $\Al_s$ derived from $\Al$ is upper bounded by the expected loss of $\Al$: $\mathbb{E}_{\Al, \{\pi_r\}_{r}} \widehat{F}(\Al_s(\bx)) \lesssim \mathbb{E}_{\Al} \widehat{F}(\Al(\bx)).$ Here $\Al_s$ denotes the algorithm that applies the randomizer $\rand_r$ to $X_{\pi_r(i)}$ for all $i, r$, but otherwise behaves exactly like $\Al$. This is not a very constructive definition but we will describe examples of algorithms in $\mathbb{B}_{(\epso, \delo), C}$ . 
$\mathbb{B}_{(\epso, \delo), C}$ includes all $C$-compositional or sequentially interactive ISRL-DP algorithms that are symmetric with respect to each of the $N$ silos, meaning that the aggregation functions $g_r$ are symmetric (i.e. $g_r(Z_1, \cdots, Z_N) = g_r(Z_{\pi(1)}, \cdots Z_{\pi(N)})$ for all permutations $\pi$) and in each round $r$ the randomizers $\rand_r = \mathcal{R}_r$ are the same for all silos $i \in [N]$. ($\rand_r$ can still change with $r$ though.) For example, 
\cref{alg: dp MB-SGD} and \cref{alg: accel dp MB-SGD} are both in $\mathbb{B}_{(\epso, \delo)}:= \mathbb{B}_{(\epso, \delo), 1}$ . This is because the aggregation functions used in each round are simple averages of the $M=N$ noisy gradients received from all silos (and they are compositional) and the randomizers used by every silo in round $r$ are identical: each adds the same Gaussian noise to the stochastic gradients. $\mathbb{B}$ also includes sequentially interactive algorithms that 
choose the order in which silos are processed uniformly at random. This is because the distributions of the updates of $\Al$ and $\Al_s$ are both averages over all permutations of $[N]$ of the conditional (on $\pi$) distributions of the randomizers applied to the $\pi$-permuted database. If $\Al \in \mathbb{B}_{(\epso, \delo), C}$ is sequentially interactive or compositional, we write $\Al \in \mathbb{B}$. 
We now state and prove our ERM lower bounds: 
\begin{theorem}
\label{thm: ERM lower bound}
\label{thm: formal ERM lower bound}
Let $
~\epso \in (0, \sqrt{N}], ~\delo = o(1/nN)$ and $\Al \in \mathbb{B}_{(\epso, \delo), C}$ such that in every round $r \in [R]$, the local randomizers $\rand_r(\bz_{(1: r-1)}, \cdot): \XX^n \to \ZZ$ are $(\epsor, \delor)$-DP for all $i \in [N], ~\bz_{(1:r-1)} \in \ZZ^{r-1 \times N}$, with
$\epsor \leq \frac{1}{n}, ~\delor = o(1/nNR)$, 
and
~$N \geq 16\ln(2/\delor n)$. 
Then: \\
1. there exists a (linear, hence $\beta$-smooth $\forall \beta \geq 0$) loss function $f \in \FF$ and a database $\bx \in \XX^{nN}$ (for some $\XX$) such that: 
\begin{equation*}
\expec\hf(\Al(\bx)) - \hf^* = \widetilde{\Omega}\left(LD \min\left\{1, \frac{\sqrt{d}}{\epso n \sqrt{N} C^2} \right\}\right).
\end{equation*}
2. There exists a ($\mu$-smooth) $f \in \GG$ and database $\bx \in \XX^{nN}$ such that
\begin{equation*}
\expec\hf(\Al(\bx)) - \hf^* = \widetilde{\Omega}\left(LD\min\left\{1, \frac{d}{\epso^2 n^2 N C^4}\right\}
\right).
\end{equation*}
Further, if $\Al \in \mathbb{B}$, then the above lower bounds hold with $C = 1$. 
\end{theorem}

\begin{proof}
\textbf{Step 1} is identical to Step 1 of the proof of~\cref{thm: SCO lower bound}. \textbf{Step 2} is very similar to Step 2 in the proof of~\cref{thm: SCO lower bound}, but now we use~\cref{thm: cdp erm lower bounds} (below) instead of \cref{thm: cdp sco lower bounds} to lower bound the excess empirical loss of $\Al_s$. \textbf{Step 3:} Finally, the definition of $\mathbb{B}$ implies that the excess risk of $\Al$ is the same as that of $\Al_s$, hence the lower bound also applies to $\Al$. 
\end{proof}

\begin{theorem}{\citep{bst14}}
\label{thm: cdp erm lower bounds} Let $\mu, D, \epsilon > 0$, $L \geq \mu D$, and $\delta = o(1/nN).$ Consider $\XX := \{\frac{-D}{\sqrt{d}}, \frac{D}{\sqrt{d}}\}^d \subset \mathbb{R}^d$ and $\WW := B_2(0, D) \subset \mathbb{R}^d$. Let $\Al: \XX^{nN} \to \WW$ be any $(\epsilon, \delta)$-CDP algorithm. Then:\\
1. There exists a ($\mu = 0$) convex, linear ($\beta$-smooth for any $\beta$), $L$-Lipschitz loss $f: \WW \times \XX \to \mathbb{R}$ and a database $\bx \in \XX^{nN}$ such that the expected empirical loss of $\Al$ is lower bounded as
\[
\expec\hf(\Al(\bx)) - \hf^* = \widetilde{\Omega}\left(LD \min\left\{1, \frac{\sqrt{d}}{\epsilon n N} \right\}\right).
\]
2. There exists a $\mu$-strongly convex, $\mu$-smooth, $L$-Lipschitz loss $f: \WW \times \XX \to \mathbb{R}$ and a database $\bx \in \XX^{nN}$ such that the expected empirical loss of $\Al$ is lower bounded as \[
\expec\hf(\Al(\bx)) - \hf^* = \widetilde{\Omega}\left(
LD\min\left\{1, \frac{d}{\epsilon^2 n^2 N^2}\right\}
\right).
\] 
\end{theorem}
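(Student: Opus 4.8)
This is the $(\epsilon,\delta)$-CDP lower bound for empirical risk minimization due to \cite{bst14}, so the plan is to reconstruct that argument: reduce DP-ERM to the problem of releasing the empirical mean (one-way marginals) of a dataset drawn from a fingerprinting-code distribution, and then invoke the fingerprinting-code sample-complexity lower bound. Throughout, write $m := nN$ for the total number of samples, $\bx = (x_{i,j})_{i\in[N],j\in[n]} \in \XX^{nN}$, and $\bar x := \tfrac1m\sum_{i,j} x_{i,j}$ for the empirical mean; note that $\XX := \{-D/\sqrt d, D/\sqrt d\}^d$ forces $\|x_{i,j}\| = D$ and $\|\bar x\| \le D$, and that under the fingerprinting distribution $\|\bar x\| \asymp D$ with high probability. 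The two hard instances are the linear loss $f_1(w,x) := -\tfrac{L}{D}\langle w,x\rangle$ (for part 1) and the quadratic loss $f_2(w,x) := \tfrac{\mu}{2}\|w - x\|^2$ (for part 2); one checks directly that $f_1$ is convex, $L$-Lipschitz, and $\beta$-smooth for every $\beta \ge 0$, while $f_2$ is $\mu$-strongly convex, $\mu$-smooth, and (on $\WW = B_2(0,D)$) $L$-Lipschitz with $L = 2\mu D \asymp \mu D$, as the statement demands.

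For part 2 the reduction is transparent: $\hf(w) = \tfrac{\mu}{2}\|w\|^2 - \mu\langle w,\bar x\rangle + \mathrm{const}$ is minimized at $\ws = \bar x \in \WW$, so the excess empirical loss of any output $\hat w$ equals exactly $\tfrac{\mu}{2}\|\hat w - \bar x\|_2^2$; an ERM solver is thus literally a mean estimator. Rescaling the coordinates to the Boolean cube $\{\pm1\}^d$ and applying the $(\epsilon,\delta)$-DP mean-estimation lower bound of the fingerprinting framework in \cite{bst14} (to the effect that $\expec\|\widehat a - \bar z\|_2^2 = \widetilde\Omega(\min\{d, d^2/(\epsilon^2 m^2)\})$ when $\delta = o(1/m)$ and the $z_{i,j}$ are drawn from the fingerprinting distribution) gives $\expec\|\hat w - \bar x\|_2^2 = \widetilde\Omega(D^2\min\{1, d/(\epsilon^2 m^2)\})$, hence $\expec\hf(\Al(\bx)) - \hf^* = \widetilde\Omega(\mu D^2\min\{1, d/(\epsilon^2 m^2)\}) = \widetilde\Omega(LD\min\{1, d/(\epsilon^2 n^2 N^2)\})$, using $L \asymp \mu D$ and $m = nN$.

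For part 1, projecting the output into $\WW$ (which cannot increase the loss), $\hf(w) = -\tfrac{L}{D}\langle w,\bar x\rangle$ is minimized at $\ws = D\bar x/\|\bar x\|$ with $\hf^* = -L\|\bar x\|$, so the excess empirical loss equals $\tfrac{L}{D}\langle \ws - \hat w, \bar x\rangle = \tfrac{L}{D}\sum_{j=1}^d(\ws_j - \hat w_j)\bar x_j$. Since $\ws_j$ and $\bar x_j$ share a sign, any coordinate on which $\hat w$ disagrees in sign with $\bar x$ contributes at least $L\bar x_j^2/\|\bar x\| = \Omega(LD/d)$ to the excess loss for the constant fraction of coordinates whose marginal is bounded away from $0$; thus controlling the excess loss forces $\hat w$ to reproduce the signs of almost all of the (suitably biased) one-way marginals of $\bx$ — exactly the ``tracing'' target ruled out by fingerprinting codes, which show that no $(\epsilon,\delta)$-CDP algorithm with $\delta = o(1/m)$ can do this unless $m = \widetilde\Omega(\sqrt d/\epsilon)$. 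Quantifying this accurately (this is the step where the $\sqrt d$, rather than $d$, enters) yields $\expec\hf(\Al(\bx)) - \hf^* = \widetilde\Omega(LD\min\{1, \sqrt d/(\epsilon n N)\})$. In both parts, when the argument of $\min$ exceeds $1$ the bound degrades to $\Omega(LD)$, which is trivially tight since any fixed $w_0 \in \WW$ attains excess loss $O(LD)$, so the stated form is correct.

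The main obstacle is the fingerprinting-code lower bound itself — the ``tracing adversary'' construction at the core of \cite{bst14} (building on fingerprinting codes of Tardos and their differential-privacy deployment by Bun--Ullman--Vadhan): one must exhibit a distribution over datasets together with an adversary that, given any sufficiently accurate and $(\epsilon,\delta)$-DP statistic of the dataset, identifies one of its rows, contradicting privacy; this is precisely what produces the $\sqrt d/\epsilon$ (resp.\ $d/\epsilon^2$) sample thresholds and the requirement $\delta = o(1/m)$. Everything else — fixing the scaling of $\XX$, checking the Lipschitz/smoothness/strong-convexity constants and that $\ws \in \WW$ for the two losses, and splicing the two regimes of $\min\{1,\cdot\}$ — is routine bookkeeping.
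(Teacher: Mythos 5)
The paper does not prove this result itself — the theorem header is a citation, and the statement is \cite{bst14}'s DP-ERM lower bound with the total sample count written as $nN$ — so the relevant comparison is between your reconstruction and \cite{bst14}, and your reconstruction is faithful: the linear and quadratic hard instances, the reduction of part 2 to $\ell_2^2$ mean estimation via $\hf(w) - \hf^* = \tfrac{\mu}{2}\|w - \bar x\|^2$, the reduction of part 1 to sign/direction recovery of $\bar x$, and the appeal to the fingerprinting-code tracing thresholds $\widetilde\Omega(\sqrt d/\epsilon)$ and $\widetilde\Omega(d/\epsilon^2)$ are exactly the ingredients in that proof. The constant checks ($\bar x \in \WW$, $L \asymp \mu D$ for the quadratic, linear loss is $\beta$-smooth for all $\beta$) are also right.

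One step in part 1 is stated in a way that does not quite close as written. You decompose the excess loss as $\tfrac{L}{D}\sum_j(\ws_j - \hat w_j)\bar x_j$ and argue that each sign-disagreement coordinate contributes $\Omega(LD/d)$; but on sign-agreeing coordinates with $|\hat w_j| > |\ws_j|$ the summand is negative, so a lower bound on a subset of the summands does not by itself lower bound the total. The standard fix (and essentially what \cite{bst14} does) is to first write the excess loss as $L\|\bar x\|\bigl(1 - \langle \hat w/D,\ \bar x/\|\bar x\|\rangle\bigr) \ge \tfrac{L\|\bar x\|}{2}\bigl\|\hat w/D - \bar x/\|\bar x\|\bigr\|_2^2$, which is valid since $\|\hat w\| \le D$, and then observe that each sign-disagreement coordinate with $|\bar x_j| \gtrsim D/\sqrt d$ contributes $\Omega(1/d)$ to the squared norm; this recovers your claimed $\Omega(LD/d)$ per such coordinate without the sign-cancellation worry. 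With that patch the argument goes through and matches the cited source.
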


\section{
 Proofs and Supplementary Materials for~\cref{sec: shuffle}\color{black}
}
\label{app: shuffle}
\subsection{Proof of \cref{thm: sdp iid sco upper bound}}
\label{proof of sdp sco upper bound}
\begin{theorem}[Precise version of~\cref{thm: sdp iid sco upper bound}]
Let $\epsilon \leq \ln(2/\delta), ~\delta \in (0,1),$ 
and $M \geq 16\ln(18R M^2/N \delta)$
for (polynomial) $R$ specified in the proof. Then, there is a constant $C > 0$ such that setting $\sigma^2 := \frac{C L^2 R M \ln(RM^2/N\delta) \ln(R/\delta) \ln(1/\delta)}{n^2 N^2 \epsilon^2}$ ensures that the shuffled version of \cref{alg: dp MB-SGD} is $(\epsilon, \delta)$-CDP. Moreover, there exist $\eta_r = \eta$ and $\{\gamma_r\}_{r=0}^{R-1}$ such that the shuffled version of \cref{alg: dp MB-SGD} achieves the following upper bounds: \newline
1. If $f(\cdot, x)$ is convex, then
\begin{equation}
\EPL = \widetilde{\mathcal{O}}\left(LD \left(\frac{1}{\sqrt{nM}} + \frac{\sqrt{d
\ln(1/\delta)
}}
{\epsilon n N}\right)\right). \end{equation} \newline
2. If $f(\cdot, x)$ is $\mu$-strongly convex, then
\begin{equation}
\EPL = \widetilde{\mathcal{O}}\left(\frac{L^2}{\mu}\left(\frac{1}{nM} + \frac{d 
\ln(1/\delta)
}{\epsilon^2 n^2 N^2} \right) \right). 
\end{equation}
\end{theorem}
\begin{proof}
We fix $K = 1$ for simplicity, but note that $K > 1$ can also be used (see Lemma 3 in \citep{girgis21a}). We shall also assume WLOG that $f(\cdot, x)$ is $\beta$-smooth: the reduction to non-smooth $f$ follows by Nesterov smoothing, as in the proofs of~\cref{thm: informal iid SCO upper bound,thm: informal accel non smooth ERM upper bound}. Our choices of $R$ shall be: 
$R := \max\left( \frac{n^2 N^2 \epsilon^2}{M}, \frac{N}{M}, 
~\min\left\{n, \frac{\epsilon^2 n^2 N^2}{d M}\right\}, \frac{\beta D}{L}\min\left\{\sqrt{nM}, \frac{\epsilon n N}{\sqrt{d}}\right\}\right)$ for convex $f$; and $R := \max\left(\frac{n^2 N^2 \epsilon^2}{M}, \frac{N}{M}, \frac{8 \beta}{\mu}\ln\left(\frac{\beta D^2 \mu \epsilon^2 n^2 N^2}{dL^2}\right), 
\min\left\{n, \frac{\epsilon^2 n^2 N^2}{d M}\right\}
\right)$ for strongly convex. 

\vspace{.2cm}
\textbf{Privacy:} 
The privacy proof is similar to the proof of Theorem 1 in \citep{girgis21a}, except we replace their use of \citet{balle2019privacy} (for pure DP randomizers) with Theorem 3.8 of \citet{fmt20} for our approximate DP randomizer (Gaussian mechanism).\textit{ Privacy amplification by shuffling}~\citep{fmt20} says that if we shuffle the $Nn$ outputs of an $(\epso, \delo)$-LDP randomizer (e.g. Gaussian noise added to each local stochastic gradient), then the resulting list of $Nn$ shuffled outputs satisfies $(\eps, \delta)$-CDP, where $\eps = \widetilde{\mathcal{O}}(\epso/\sqrt{Nn}$?....\textcolor{violet}{\textbf{HERE}}

Observe that in each round $r$, the model updates of the shuffled algorithm $\Al_s^r$ can be viewed as post-processing of the composition $\mathcal{M}_r(\bx) = \mathcal{S}_M \circ samp_{M,N}(
Z^{(1)}_r, \cdots, Z^{(N)}_r
),$ where $\mathcal{S}_{M}$ uniformly randomly shuffles the $MK = M$ received reports, $samp_{M,N}$ is the mechanism that chooses $M$ reports uniformly at random from $N$, and
$Z^{(i)}_r = samp_{1,n}(\widehat{\RR}_r(x_{i,1}), \cdots \widehat{\RR}_r(x_{i,n}))$, where $\widehat{\RR}_r(x):= \nabla f(w_r, x) + u$ and $u \sim \mathcal{N}(0, \sigma^2 \mathbf{I}_d)$. The Gaussian noises added to each stochastic gradient are independent.
Recall that $\sigma^2 = \frac{8L^2 \ln(2/\widehat{\delo})}{\widehat{\epso}^2}$ suffices to ensure that $\widehat{\RR}_r$ is $(\widehat{\epso}, \widehat{\delo})$-DP if $\widehat{\epso} \leq 1$ \citep{dwork2014}. Now note that $\mathcal{M}_r(\bx) = \widetilde{\RR}^M(\mathcal{S}_M \circ samp_{M,N}(X_1, \cdots X_N)$, where $\widetilde{\RR}: \XX^n \to \ZZ$ is given by $X \mapsto samp_{1,n}(\widehat{\RR}(x_1), \cdots, \widehat{\RR}(x_n))$ and $\widetilde{\RR}^M: \XX^{nM} \to \ZZ^M$ is given by $\bx \mapsto (\widetilde{\RR}(X_1), \cdots \widetilde{\RR}(X_M))$ for any $\bx = (X_1, \cdots, X_M) \in \XX^{nM}$. This is because we are applying the same randomizer (same additive Gaussian noise) across silos and the operators $\mathcal{S}_M$ and $\widetilde{\RR}^M$ commute. (Also, applying a randomizer to all $N$ silos and then randomly choosing $M$ reports is equivalent to randomly choosing $M$ silos and then applying the same randomizer to all $M$ of these silos.) Therefore, conditional on the random subsampling of $M$ out of $N$ silos (denoted $(X_1, \cdots, X_M)$ for convenience), Theorem 3.8 in \citep{fmt20} implies that $(\widehat{\RR}(x_{\pi(1), 1}), \cdots, \widehat{\RR}(x_{\pi(1), n}), \cdots, \cdots, \widehat{\RR}(x_{\pi(M), 1}), \cdots, \widehat{\RR}(x_{\pi(M), n}))$ is $(\widehat{\epsilon}, \widehat{\delta})$-CDP, where $\widehat{\epsilon} = \mathcal{O}\left( \frac{\widehat{\epso} \sqrt{\ln(1/M\widehat{\delo})}}{\sqrt{M}}\right)$ and $\widehat{\delta} = 9M\widehat{\delo}$, provided $\widehat{\epso} \leq 1$ and $M \geq 16 \ln(2/\widehat{\delo})$ (which we will see is satisfied by our assumption on $M$). Next, privacy amplification by subsampling (see \citep{ullman2017} and Lemma 3 in \citep{girgis21a}) silos and local samples implies that $\mathcal{M}_r$ is $(\epsr, \delr)$-CDP, where $\epsr = \frac{2 \widehat{\epsilon} M}{nN} = \mathcal{O}\left(\widehat{\epso}\frac{\sqrt{M \ln(1/M\widehat{\delo})}}{nN}\right)$ and $\delr = \frac{M}{nN} \widehat{\delta} = \frac{9M^2}{nN} \widehat{\delo}$. Finally, by the advanced composition theorem (Theorem 3.20 in \citep{dwork2014}), to ensure $\Al_s$ is $(\epsilon, \delta)$-CDP, it suffices to make each round $(\epsr := \frac{\epsilon}{2\sqrt{2R \ln(1/\delta)}}, \delr := \delta/2R)$-CDP. Using the two equations to solve for $\widehat{\epso} = \frac{CnN\epsilon}{\sqrt{R \ln(1/\delta) \ln(RM/nN\delta) M}}$ for some $C > 0$ and $\widehat{\delo} = \frac{nN\delta}{18 RM^2}$, we see that $\sigma^2 = \mathcal{O}\left(\frac{L^2 \ln(RM^2/N\delta) \ln(R/\delta) \ln(1/\delta)) RM}{n^2 N^2 \epsilon^2} \right)$ ensures that $\Al_s$ is $(\epsilon, \delta)$-CDP, i.e. that $\Al$ is $(\epsilon, \delta)$-SDP. Note that our choices of $R$ in the theorem (specifically $R \geq N/M$ and $R \geq \frac{n^2 N^2 \epsilon^2}{M}$) ensure that $\widehat{\delo}, \delta \leq 1$ and $\widehat{\epso} \lesssim 1$, so that Theorem 3.8 in \citep{fmt20} indeed gives us the amplification by shuffling result used above. 

\vspace{.2cm}
\textbf{
Excess risk:} Note that shuffling does not affect the uniform stability of the algorithm. So we proceed similarly to the proof of \cref{thm: informal iid SCO upper bound}, except $\sigma^2$ is now smaller. \\ 
1. \textit{Convex case:} Set $\gamma_r = \gamma = 1/R$ for all $r$. Now \cref{lem: convex unif stability}, \cref{lem: generalization gap for unif stable alg}, and \cref{lem: informal empirical loss MB-SGD} (with $\sigma^2$ in the lemma replaced by the $\sigma^2$ prescribed here) together imply for any $\eta \leq 1/\beta$ that \begin{align*}
\EPL &\lesssim \frac{L^2 R \eta}{nM} + \frac{D^2}{\eta R} + \eta\left(
L^2/M
+ \frac{d \sigma^2}{M} \right).
\end{align*}
Now plugging in $\eta := \min\left\{1/\beta, \frac{D \sqrt{M}}{L R} \min\left\{\sqrt{n}, \frac{\epsilon n N}{\sqrt{d M \ln(RM^2/N\delta) \ln(R/\delta) \ln(1/\delta)}}\right\}\right\}$ yields 
\begin{align*}
\EPL &\lesssim LD\left(\max\left\{ \frac{1}{\sqrt{nM}}, \frac{\sqrt{d\ln(RM^2/N\delta) \ln(R/\delta) \ln(1/\delta)}}{\epsilon n N}\right\}\right) \\
&+ \frac{LD}{R \sqrt{M}}\min\left\{\sqrt{n}, \frac{\epsilon n N}{\sqrt{d M \ln(RM^2/N\delta) \ln(R/\delta) \ln(1/\delta)}}\right\} + \frac{\beta D^2}{R}.
\end{align*}
Then one can verify that plugging in the prescribed $R$ yields the stated excess population loss bound.
\\
2. \textit{$\mu$-strongly convex case:} 

By
~\cref{lem: informal empirical loss MB-SGD} (and its proof), we know there exists $\eta \leq 1/\beta$ and $\{\gamma_r\}_{r=1}^R$ such that \[
\ER = \widetilde{\mathcal{O}}\left(\beta D^2 \exp\left(- \frac{R}{2\kappa} \right)+ \frac{L^2}{\mu R} + \frac{d\sigma^2}{\mu M R} \right).
\]
Hence \cref{lem: generalization gap for unif stable alg} and \cref{lem: convex unif stability} imply 
\[
\EPL = \widetilde{\mathcal{O}}\left(\beta D^2 \exp\left(- \frac{R}{2\kappa} \right)+ \frac{L^2}{\mu R} + \frac{L^2 d\ln(1/\delta)}{\mu \epsilon^2 n^2 N^2} + \frac{L^2}{\mu M n} \right). 
\]
Then one verifies that the prescribed $R$ is large enough to achieve the stated excess population loss bound.
\end{proof}

\subsection{SDP One-Pass Accelerated Noisy MB-SGD and the Proof of~\cref{thm: sdp hetero}}
\label{app: sdp hetero}
To develop an SDP variation of One-Pass Accelerated Noisy MB-SGD, we will use the binomial noise-based protocol of~\citep{cheu2021shuffle} (described in~\cref{alg: Pvec}) instead of using the Gaussian mechanism and amplification by shuffling. This is because for our one-pass algorithm, amplification by shuffling would result in an impractical restriction on $\epsilon$. \cref{alg: Pvec} invokes the SDP scalar summation subroutine~\cref{alg: P1D}. 
\begin{algorithm}
\caption{$\mathcal{P}_{\text{vec}}$, a shuffle protocol for vector summation \citep{cheu2021shuffle}}
\label{alg: Pvec}
\begin{algorithmic}[1]
\STATE {\bfseries Input:} database of $d$-dimensional vectors $\bx = (\mathbf{x}_1, \cdots, \mathbf{x}_N$); privacy parameters $\epsilon, \delta$; $L$.
\STATE {\bfseries procedure:} Local Randomizer $\mathcal{R}_{\text{vec}}(\mathbf{x}_i)$
\begin{ALC@g}
\FOR{$j \in [d]$} 
\STATE Shift component to enforce non-negativity: $\mathbf{w}_{i,j} \gets \mathbf{x}_{i,j} + L$
\STATE $\mathbf{m}_j \gets \mathcal{R}_{1D}(\mathbf{w}_{i,j})$
\ENDFOR
\STATE Output labeled messages $\{(j, \mathbf{m}_j)\}_{j \in [d]}$
\end{ALC@g}
\STATE {\bfseries end procedure}
\STATE {\bfseries procedure: Analyzer} $\mathcal{A}_{\text{vec}}(\mathbf{y})$ 
\begin{ALC@g}
\FOR{$j \in [d]$}
\STATE Run analyzer on coordinate $j$'s messages $z_j \gets \mathcal{A}_{\text{1D}}(\mathbf{y}_j)$ 
\STATE Re-center: $o_j \gets z_j - L$
\ENDFOR
\STATE Output the vector of estimates $\mathbf{o} = (o_1, \cdots o_d)$
\end{ALC@g}
\STATE {\bfseries end procedure}
\end{algorithmic}
\end{algorithm}
\begin{algorithm}
\caption{$\mathcal{P}_{\text{1D}}$, a shuffle protocol for summing scalars \citep{cheu2021shuffle}}
\label{alg: P1D}
\begin{algorithmic}[1]
\STATE {\bfseries Input:} 
Scalar database $X = (x_1, \cdots x_N) \in [0,L]^N$; $g, b \in \mathbb{N}; p \in (0, \frac{1}{2})$. 
\STATE {\bfseries procedure: Local Randomizer $\mathcal{R}_{1D}(x_i)$}
\begin{ALC@g}
\STATE $\widebar{x}_i \gets \lfloor x_i g/L \rfloor$.
\STATE Sample rounding value $\eta_1 \sim \textbf{Ber}(x_i g/L - \widebar{x}_i)$.
\STATE Set $\hat{x}_i \gets \widebar{x}_i + \eta_1$.
\STATE Sample privacy noise value $\eta_2 \sim \textbf{Bin}(b,p)$.
\STATE Report $\mathbf{y}_i \in \{0,1\}^{g + b}$ containing $\hat{x}_i + \eta_2$ copies of $1$ and $g + b - (\hat{x}_i + \eta_2)$ copies of $0$.
\end{ALC@g}
\STATE {\bfseries end procedure}
\STATE{\bfseries procedure: Analyzer} $\mathcal{A}_{\text{1D}}(\mathcal{S}(\mathbf{y}))$
\begin{ALC@g}
\STATE Output estimator $\frac{L}{g}((\sum_{i=1}^{N}\sum_{j=1}^{b+g} (\mathbf{y}_i)_j) - pbn)$.
\end{ALC@g}
\STATE {\bfseries end procedure}
\end{algorithmic}
\end{algorithm}

We recall the privacy and accuracy guarantees of~\cref{alg: Pvec} below: 
\begin{lemma}[\cite{cheu2021shuffle}]
\label{thm:cheu vecsum}
For any $0 < \epsilon \leq 15, 0 < \delta < 1/2, d, N \in \mathbb{N}$, and $L > 0$, there are choices of parameters $b, g \in \mathbb{N}$ and $p \in (0, 1/2)$ for $\mathcal{P}_{\text{1D}}$ (\cref{alg: P1D}) such that, for $\bx = (\mathbf{x}_1, \cdots \mathbf{x}_N)$ containing vectors of maximum norm $\max_{i\in [N]} \|\mathbf{x}_i\|\leq L$, the following holds: 1) $\mathcal{P}_{\text{vec}}$ is $(\epsilon, \delta)$-SDP; and 2) $\mathcal{P}_{\text{vec}}(\bx)$ is an unbiased estimate of $\sum_{i=1}^N \mathbf{x}_i$ with bounded variance \[
\expec\left[\left\|\mathcal{P}_{\text{vec}}(\bx; \epsilon, \delta; L) - \sum_{i=1}^N \mathbf{x}_i\right\|^2\right] = \mathcal{O}\left(\frac{d L^2 \log^2\left(\frac{d}{\delta}\right)}{\epsilon^2}\right).
\]
\end{lemma}

With these building blocks in hand, we provide our SDP One-Pass Accelerated Noisy MB-SGD algorithm in~\cref{alg: SDP accel}. 
\begin{algorithm}[ht]
\caption{SDP Accelerated Noisy MB-SGD}
\label{alg: SDP accel}
\begin{algorithmic}[1]
\STATE {\bfseries Input:}~ 
Data $X_i \in \XX_i^{n}, ~i \in [N]$, strong convexity modulus $\mu \geq 0$, privacy parameters $(\epsilon, \delta)$, iteration number $R \in \mathbb{N}$, batch size $K \in [n]$, step size parameters $\{\eta_r \}_{r \in [R]}, \{\alpha_r \}_{r \in [R]}$. 
 \STATE  Initialize $w_0^{ag} = w_0 \in \mathcal{W}$ and $r = 1.$
 \FOR{$r \in [R]$}
 \STATE Server updates and broadcasts $w_r^{md} = \frac{(1- \alpha_r)(\mu + \eta_r)}{\eta_r + (1 - \alpha_r^2)\mu}w_{r-1}^{ag} + \frac{\alpha_r[(1-\alpha_r)\mu + \eta_r]}{\eta_r + (1-\alpha_r^2)\mu}w_{r-1}$
 \FOR{$i \in S_r$ \textbf{in parallel}} 
 \STATE Silo $i$ draws $\{x_{i,j}^r\}_{j=1}^K$ 
 from $X_i$ (without replacement) and computes $Z_i^r := \{ \nabla f(w_r^{md}, x_{i,j}^r)\}_{j=1}^K$.
 \ENDFOR
 \STATE Server receives $\widetilde{g}_{r} := \frac{1}{M_r K} \mathcal{P}_{\text{vec}}(\{Z_i^r\}_{i \in S_{r}}; \epsilon, \delta; L)$. 
 \STATE Server computes $w_{r} := \argmin_{w \in \mathcal{W}}\left\{\alpha_r \left[\langle \widetilde{g}_{r}, w\rangle + \frac{\mu}{2}\|w_r^{md} - w\|^2 \right] + \left[(1-\alpha_r) \frac{\mu}{2} + \frac{\eta_r}{2}\right]\|w_{r-1} - w\|^2\right\}.$
 \STATE Server updates and broadcasts $w_{r}^{ag} = \alpha_r w_r + (1-\alpha_r)w_{r-1}^{ag}.$
 \ENDFOR \\
\STATE {\bfseries Output:} $w_R^{ag}.$
\end{algorithmic}
\end{algorithm}
We now provide the general version of~\cref{thm: sdp hetero} for $M \leq N$:
\begin{theorem}[Complete version of~\cref{thm: sdp hetero}]
\label{thm: sdp hetero full}
Let $f(\cdot, x)$ be $\beta$-smooth~$\forall x \in \XX$. Assume $\epsilon \leq 15, \delta \in (0, \frac{1}{2})$. Then, \cref{alg: SDP accel} is $(\epsilon, \delta)$-SDP. Moreover, there are choices of stepsize, batch size, and $\lambda > 0$ such that (for $\upsilon$ defined in~\cref{eq: upsilon}):\\
1. Running~\cref{alg: SDP accel} on $\widetilde{f}(w, x): = f(w,x) + \frac{\lambda}{2}\|w - w_0\|^2$ (where $w_0 \in \WW$)
yields
\small
\begin{equation}
\label{eq: sdp formal convex noniid}
\small
\EPX = 
\mathcal{O}
\left(
\frac{\phi D}{\sqrt{nM}} + 
 \left(\frac{\beta^{1/4} L D^{3/2}  \sqrt{d} \ln(d/\delta)}{\epsilon n M}\right)^{4/5}
+ \sqrt{\frac{N-M}{N-1}}\mathbbm{1}_{\{N >1\}} \frac{\upsilon L^{1/5} D^{4/5}}{\beta^{1/5}} \left(\frac{\sqrt{d} \ln(d/\delta)}{\epsilon n
M^{3.5}
}\right)^{1/5}
\right).
\end{equation}
\normalsize
\noindent 2. If $f(\cdot, x)$ is $\mu$-strongly convex $\forall x \in \XX$ and $\kappa = \frac{\beta}{\mu}$, then running the Multi-Stage Implementation of~\cref{alg: SDP accel} (recall~\cref{app: multistage}) directly on $f$
yields 
\begin{equation}
\label{eq: sdp formal sc noniid}
\EPX = \widetilde{\mathcal{O}}\left(\frac{\phi^2}{nM} + \frac{L^2}{\mu} \frac{\sqrt{\kappa} d \ln(1/\delta)}{\epsilon^2 n^2 M^2} + \frac{\upsilon^2}{\mu \sqrt{\kappa} M}\left(1 - \frac{M-1}{N-1}\right) \mathbbm{1}_{\{N > 1\}}\right).
\end{equation}
\end{theorem}

\begin{remark}
1. For convex $f$, if $M=N$ or 
\[\upsilon \lesssim 
\sqrt{\frac{N-1}{N-M}}\left[(L^3 D^2 \beta^2)^{1/5} \frac{(d \ln(d/\delta))^{3/10}}{M^{1/10} n^{3/5} \epsilon^{3/5}} + \phi \left(\frac{\beta D}{L}\right)^{1/5} \left(\frac{\epsilon^2 M^2}{n^3 d \ln^2(d/\delta)}\right)^{1/10}
\right],
\]
then~\cref{eq: sdp formal convex noniid} recovers the bound \cref{eq: sdp convex noniid bound} in~\cref{thm: Convex hetero SCO MB-SGD upper} ($M=N$ version), with $N$ replaced by $M$. \\
2. For $\mu$-strongly convex $f$, if $M=N$ or \[\upsilon^2 \lesssim  \left(\frac{N-1}{N-M}\right)\sqrt{\kappa}\left(\frac{\phi^2}{n} + \frac{\sqrt{\kappa}L^2 d \ln(1/\delta)}{M \epsilon^2 n^2}\right),\] then~\cref{eq: sdp formal sc noniid} recovers the bound~\cref{eq: sdp sc noniid bound} in~\cref{thm: Convex hetero SCO MB-SGD upper}, with $N$ replaced by $M$. \\
Also, note that appealing to privacy amplification by subsampling would result in tighter excess risk bounds than those stated in~\cref{thm: sdp hetero full} when $M < N$, but would require a restriction $\epsilon \lesssim M/N$. To avoid this restriction, we do not invoke privacy amplification by subsampling in our analysis. 
\end{remark}
\begin{proof}[Proof of~\cref{thm: sdp hetero full}]
\textbf{Privacy:} By post-processing~\cite[Proposition 2.1]{dwork2014}, it suffices to show that the $R = n/K$ noisy stochastic gradients computed in line 8 of \cref{alg: SDP accel} are $(\epsilon, \delta)$-SDP. Further, since the batches sampled locally are disjoint (because we sample locally \textit{without replacement}), parallel composition \citep{mcsherry2009privacy} implies that if each update in line 8 is $(\epsilon, \delta)$-SDP, then the full algorithm is $(\epsilon, \delta)$-SDP. Since $f(\cdot, x)$ is $L$-Lipschitz, it follows directly from~\cref{thm:cheu vecsum} that each update in line 8 is $(\epsilon, \delta)$-SDP. \\
\textbf{Excess loss:} 1. For the convex case, we choose $\lambda = \frac{V}{2D \sqrt{R}}$, where $V^2 = \frac{\phi^2}{MK} + \frac{\upsilon^2}{M}\mathbbm{1}_{\{N>1\}} \left(\frac{N-M}{N-1}\right) + \text{Var}(\widetilde{g}_r)$ is the conditional variance of the noisy stochastic minibatch gradients given $M_r$, by~\cref{lem: gradient variance}. Also, conditional on $M_r$, we have $\text{Var}(\widetilde{g}_r) \lesssim \frac{d L^2 \ln^2(d/\delta)}{\epsilon^2 M_r^2 K^2}$ by~\cref{thm:cheu vecsum} and independence of the data. Hence, taking total expectation over $M_r$, we get $V^2 = \frac{\phi^2}{MK} + \frac{\upsilon^2}{M}\mathbbm{1}_{\{N>1\}} \left(\frac{N-M}{N-1}\right) + \frac{d L^2 \ln^2(d/\delta)}{\epsilon^2 M^2 K^2}$. Now plugging $V^2$ into \cref{lem: woodworth lem 4}, setting $R = n/K$, and $\lambda := \frac{V}{2D \sqrt{R}}$ yields
\begin{equation*}
\EPX \lesssim \frac{\beta D^2 K^2}{n^2} + \frac{\phi D}{\sqrt{nM}} + \frac{LD \sqrt{d} \ln(d/\delta)}{\epsilon M\sqrt{Kn}} + \frac{\sqrt{K} \upsilon D}{\sqrt{nM}}\sqrt{\frac{N-M}{N-1}} \mathbbm{1}_{\{N >1\}}. 
\end{equation*}
Choosing $K = \left(\frac{L}{\beta D}\right)^{2/5} \frac{n^{3/5} (d \ln(1/\delo)^{1/5}}{\epso^{2/5} M^{2/5}}$ implies \cref{eq: sdp formal convex noniid}.\\
2. For strongly convex loss, we plug the same estimate for $V^2$ used above into \cref{lem: wood lem 5} with $R = n/K$ to obtain 
\begin{equation*}
\EPX \lesssim \Delta \exp\left(\frac{-n}{K \sqrt{\kappa}} \right) + \frac{\phi^2}{\mu n M} + \frac{\upsilon^2 K}{\mu n M}\left(1 - \frac{M-1}{N-1}\right) \mathbbm{1}_{\{N > 1\}} + \frac{L^2}{\mu} \frac{d \ln^2(d/\delta)}{K n \epsilon^2 M^2},
\end{equation*}
where $\Delta \geq F(w_0) - F^*$. 
Choosing $K = \frac{n}{\sqrt{\kappa} \ln\left(\mu \Delta \min\left\{\frac{\epsilon^2 n^2 M^2}{L^2 d \ln(1/\delo)} , \frac{nM}{\phi^2}\right\}\right)}$ yields \cref{eq: sdp formal sc noniid}.
\end{proof}

\section{ISRL-DP Upper Bounds with Unbalanced Data Set Sizes and Differing Privacy Needs Across silos}
\label{app: unbalanced upper bounds}
In order to state the generalized versions of our upper bounds (for arbitrary $n_i, \epsilon_i, \delta_i$, $i \in [N]$), we will require some additional notation and assumptions. 

\subsection{Additional Notation and Assumptions 
}
\label{sec: app additional notation}
First, we define a generalization of $(\epso, \delo)$-ISRL-DP (as it was formally defined in~\cref{app: ISRL-DP}) that allows for differing privacy parameters across silos: 
\begin{definition}{(Generalized Inter-Silo Record-Level Differential Privacy)}
\label{def: gen SSILDP}
Let $\rho_i(X_i, X'_i) := \sum_{j=1}^{n_i} \mathbbm{1}_{\{x_{i,j} \neq x_{i,j}'\}}$, $i \in [N]$. A randomized algorithm $\Al$ is $\{(\epsilon_i, \delta_i)\}_{i=1}^N$-ISRL-DP if for all silos~$i$ and all $\rho_i$-adjacent $X_i, X_i'$, 
\small
\vspace{-0.07cm}
\begin{align*}
\vspace{-0.3cm}
\small
(\rand_1(X_i), \rand_2(\bz_1, X_i), \cdots ,\rand_R(\bz_{1:R-1}, X_i)) \edisim  
(\rand_1(X'_i), \rand_2(\bz'_1, X'_i), \cdots , \rand_R(\bz'_{1:R-1}, X'_i)),
\vspace{-0.5cm}
\end{align*}
\vspace{-.2cm}
\normalsize
where $\bz_r := \{\rand_r(\bz_{1:r-1}, X_i)\}_{i=1}^N$ and $\bz'_r := \{\rand_r(\bz'_{1:r-1}, X_i')\}_{i=1}^N$.
\end{definition}

We also allow for the weights put on each silo in the FL objective to differ and consider: \[
\min_{w \in \WW} \left\{F(w):= \sum_{i=1}^N  p_i F_i(w)\right\},
\]
where $p_i \in [0, 1]$ and $\sum_{i=1}^N p_i = 1$. 
However, we will present our results for the case where $p_i = \frac{1}{N}$ for all $i \in [N]$. This is without loss of generality: given any $\widetilde{F}(w) = \sum_{i=1}^N p_i F_i(w)$, we have $\widetilde{F}(w) = \sum_{i=1}^N p_i F_i(w) = \frac{1}{N} \sum_{i=1}^N \widetilde{F}_i(w)$, where $\widetilde{F}_i(w) = N p_i F_i(w) = \expec_{x_i \sim \DD_i}[N p_i f(w, x_i)] := \expec_{x_i \sim \DD_i}[\widetilde{f}_i(w, x_i)]$. Thus, our results for the case of $p_i = 1/N$ apply for general $p_i$, but $L$ gets replaced with $\widetilde{L} = \max_{i \in [N]} p_i N L$, $\mu$ gets replaced with $\widetilde{\mu} = \min_{i \in [N]} p_i N \mu$, and $\beta$ gets replaced with $\widetilde{\beta} = \max_{i \in [N]} p_i N \beta.$ 

We will choose batch sizes $K_i$ such that $K_i/n_i = K_l/n_l$ for all $i, l \in [N]$ in each round, and denote $K = \min_{i \in [N]} K_i.$ In addition to the assumptions we made in the main body, we also refine the assumption on $\{M_r\}$ to include a description of the second moment of $1/M_r$:

\begin{assumption}
In each round $r$, a uniformly random subset $S_r$ of $M_r \in [N]$ distinct silos 
is available to 
communicate with the server, where $\{M_r\}_{r \geq 0}$ are independent random variables with $\frac{1}{M}:= \mathbb{E}(\frac{1}{M_r})$ and $\frac{1}{M'} := \sqrt{\mathbb{E}(\frac{1}{M_r^2})}$.
\end{assumption}
For $M \in [N]$, denote \[
\widebar{\sigma^2_{M}} := \frac{1}{M}\sum_{i=1}^{M} \sigma_{(i)}^2,\]
where $\sigma^2_{(1)} := \sigma^2_{\max}:= \max_{i \in [N]} \sigma^2_i \geq \sigma^2_{(2)} \geq \cdots \geq \sigma^2_{(N)} := \sigma^2_{\min}:= \min_{i \in [N]} \sigma^2_i,$. (More generally, whenever a bar and $M$ subscript are appended to a parameter, it denotes the average of the $M$ largest  values.) 
Also, define \[
\Sigma^2 := \sqrt{\mathbb{E}(\widebar{\sigma^2_{M_{1}}})^2}
\] for any $\{\sigma_i^2\}_{i=1}^N \subseteq [0, \infty).$

Next, recall the heterogeneity parameter from~\cref{eq: upsilon}:\[
\upsilon^2 := \sup_{w \in \WW} \frac{1}{N}\sum_{i=1}^N \|\nabla F_i(w) - \nabla F(w) \|^2.\]

Lastly, for given parameters, denote
\[
\xi_i := \left(\frac{1}{n_i \epsilon_i}\right)^2 \ln(2.5R/\delta_i) \ln(2/\delta_i)\] for $i \in [N]$, 
\[
\xi_{\max} = \max(\xi_1, \cdots \xi_N),
\]
and 
\[
\Xi := \sqrt{\mathbb{E}_{M_1}\left(\frac{1}{M_1}\sum_{i=1}^{M_1} \xi_{(i)}\right)^2}.\] 
In the case of balanced data and same parameters across silos, we have $\xi_i = \xi= \Xi$ for all $i$. In the general case, we have $\xi_{\min} \leq \Xi \leq \xi_{\max}.$ 
\subsection{Pseudocode for Noisy ISRL-DP MB-SGD in the Unbalanced Case} 
\label{MB-SGD}

The generalized version of Noisy ISRL-DP MB-SGD is described in~\cref{alg: gen dp MB-SGD}. 
\begin{algorithm}[ht]
\caption{Noisy ISRL-DP MB-SGD}
\label{alg: gen dp MB-SGD}
\begin{algorithmic}[1]
\STATE {\bfseries Input:} $N, d, R \in \mathbb{N},$ 
$\{\sigma_i\}_{i \in [N]} \subset [0, \infty),$ 
$X_i \in \XX_i^{n_i}$ for $i \in [N]$, 
loss function $f(w, x),$ 
$\{K_i\}_{i=1}^N \subset \mathbb{N}$ 
, 
$\{\eta_r \}_{r \in [R]}$ and
$\{\gamma_r\}_{r \in [R]}.$
 \STATE Initialize $w_0 \in \WW$.
 \FOR{$r \in \{0, 1, \cdots, R-1\}$} 
 \FOR{$i \in S_r$ \textbf{in parallel}} 
  \STATE Server sends global model $w_r$ to silo $i$. 
 \STATE Silo $i$ draws $K_i$ samples $x_{i,j}^r$ 
uniformly
 from $X_i$ (for $j \in [K_i]$) and noise $u_i \sim \mathcal{N}(0, \sigma_i^2 \mathbf{I}_d).$
 \STATE Silo $i$ computes $\widetilde{g}_r^{i} := \frac{1}{K_i} \sum_{j=1}^{K_i} \nabla f(w_r, x_{i,j}^r) + u_i$ and sends to server.
 \ENDFOR
 \STATE Server aggregates $\widetilde{g}_{r} := \frac{1}{M_r} \sum_{i \in S_r} \widetilde{g}_r^{i}$.
 \STATE Server updates $w_{r+1} := \Pi_{\WW}[w_r - \eta_r \widetilde{g}_r]$.
 \ENDFOR \\
\STATE {\bfseries Output:} $\widehat{w}_R = \frac{1}{\Gamma_R} \sum_{r=0}^{R-1} \gamma_r w_r,$ where $\Gamma_R := \sum_{r=0}^{R-1} \gamma_r.$ 
\end{algorithmic}
\end{algorithm}

\subsection{General Unbalanced Version of 
\cref{thm: informal iid SCO upper bound}}
\label{app: unbalanced convex iid SCO proof}

We first
state the formal version of \cref{thm: informal iid SCO upper bound} for arbitrary $n_i, \epsilon_i, \delta_i$, using the notation defined in~\cref{sec: app additional notation}.

\begin{theorem}[Generalized Version of~\cref{thm: informal iid SCO upper bound}]
\label{thm: unbalanced theorem 2.1}
Let $\epsilon_i \leq 2 \ln(2/\delta_i), \delta_i \in (0,1).$ Then, ~\cref{alg: dp MB-SGD} is $\{(\epsilon_i, \delta_i)\}_{i=1}^N$-ISRL-DP if $\sigma_i^2 = \frac{256 L^2 R \ln(\frac{2.5 R}{\delta_i}) \ln(2/\delta_i)}{n_i^2 \epsilon_i^2}$ and $K_i \geq \frac{\epsilon_i n}{4 \sqrt{2R \ln(2/\delta_i)}}$. 
Moreover, with notation as in \cref{sec: app additional notation}, there are choices of algorithmic parameters such that:

1. If $f(\cdot, x)$ is convex, then 
\[
\EPL \lesssim LD \left(\frac{1}{\sqrt{n_{\min} M}} + \sqrt{d
\min\left\{\frac{\Xi}{M'}, \frac{\xi_{\max}}{M} \right\}}\right).
\]
\\
\noindent 2. If $f(\cdot, x)$ is $\mu$-strongly convex, then 
\begin{equation}
\EPL = \widetilde{\mathcal{O}}\left(\frac{L^2}{\mu}\left(
\frac{1}{M n_{\min}} + d
\min\left\{\frac{\Xi}{M'}, \frac{\xi_{\max}}{M} \right\}
\right) 
\right).
\end{equation}
\end{theorem}

\begin{remark}
\label{rem: hetero unbalanced tradeoff}
Note that $1/M' \geq 1/M$ by the Cauchy-Schwartz inequality. Both of the upper bounds in~\cref{thm: unbalanced theorem 2.1} involve minima of the terms $\Xi/M'$ and $\xi_{\max}/M,$ which trade off the unbalancedness of silo data and privacy needs with the variance of $1/M_r.$ In particular, if the variance of $1/M_r$ is small enough that $\frac{\Xi}{M'}\leq \frac{\xi_{\max}}{M}$, then the excess risk bounds in~\cref{thm: unbalanced theorem 2.1} depend on averages of the parameters across silos, rather than maximums. In FL problems with unbalanced data and disparate privacy needs across a large number of silos, the difference between ``average'' and ``max'' can be substantial. On the other hand, if data is balanced and privacy needs are the same across silos, then $\xi_i = \xi_{\max} = \Xi = \ln(2.5R/\delo) \ln(2/\delo)/n^2 \epso^2$ for all $i$ and $\frac{\Xi}{M'} \geq \frac{\xi_{\max}}{M}$, so we recover~\cref{thm: informal iid SCO upper bound}, with dependence only on the mean $1/M$ of $1/M_r$ and not the square root of the second moment $1/M'$. 
\end{remark}

To prove~\cref{thm: unbalanced theorem 2.1}, we need the following
empirical loss bound for \cref{alg: dp MB-SGD}, which generalizes~\cref{lem: informal empirical loss MB-SGD} to the unbalanced setting: 
\begin{lemma}
\label{lem: empirical loss of alg1}
Let $f: \WW \times \XX \to \mathbb{R}$ be $\mu$-strongly convex (with $\mu = 0$ for convex case), $L$-Lipschitz, and $\beta$-smooth in $w$ for all $x \in \XX,$ where $\WW$ is a closed convex set in $\mathbb{R}^d$ s.t. $\|w - w'\| \leq D$ for all $w, w' \in \WW.$ 
Let $\bx \in \mathbb{X}$. 
Then~\cref{alg: dp MB-SGD} with $\sigma_i^2 = \frac{256 L^2 R \ln(\frac{2.5 R}{\delta_i}) \ln(2/\delta_i)}{n_i^2 \epsilon_i^2}$ attains the following empirical loss bounds as a function of step size and the number of rounds:
\\
1. (Convex) For any $\eta \leq 1/\beta$ and $R \in \mathbb{N}$, $\gamma_r := 1/R$, we have 
\[
\ER \leq 
\frac{D^2}{\eta R} + \frac{\eta}{2}\left(d\min\left\{\frac{\Sigma^2}{M'}, \frac{\sigma^2_{\max}}{M}\right\} + L^2\right)
    \]

2. (Strongly Convex) There exists a constant stepsize $\eta_r = \eta \leq 1/\beta$ such that if $R \geq 2\kappa \ln\left(\frac{\beta \mu D^2}{L^2 d \min\left(\Xi /M',  \frac{\xi_{\max}}{M}\right)} \right)$, then
\begin{equation}
\label{eq: unbalanced strongly convex iid SCO bound}
\ER = \widetilde{\mathcal{O}}\left(\frac{L^2}{\mu}\left(\frac{1}{R} + d \min\left\{\frac{\Xi}{M'}, \frac{\xi_{\max}}{M} \right\}
\right) \right).
\end{equation}
\end{lemma}
\begin{proof}
By the proof of~\cref{lem: informal empirical loss MB-SGD}, we have:
\begin{align*}
 \mathbb{E}\left[\left\|w_{r+1} - \ws \right\|^2 \bigg | M_r \right]\leq & (1 - \mu\eta_r) \mathbb{E}\left[\left\|w_r - \ws \right\|^2 \bigg | M_r \right]- 2\eta_r \mathbb{E}[\hf(w_r) - \hf^* | M_r] \nonumber \\
 &\;\;\;+\eta_r^2 \mathbb{E}\left[\left\|\widebar{u}_r + \frac{1}{M_r} \sum_{i \in S_r} \frac{1}{K_i}\sum_{j=1}^{K_i} \nabla f(w_r, x_{i,j}^r)\right\|^2 \bigg | M_r\right] 
 \nonumber 
 \end{align*}
 which implies 
 \begin{equation}
 \label{eq: bird}
   \mathbb{E}\left[\left\|w_{r+1} - \ws \right\|^2\right] \leq  (1 - \mu\eta_r) \mathbb{E}\left[\left\|w_r - \ws \right\|^2 \right] - 2\eta_r \mathbb{E}[\hf(w_r) - \hf^*] + \eta_r^2 d\min\left\{\frac{\Sigma^2}{M'}, \frac{\sigma^2_{\max}}{M}\right\} + L^2, 
 \end{equation}
since $\expec[\|\widebar{u}_r\|^2 | M_r] = \frac{1}{M_r^2} \sum_{i \in S_r}$ and hence \begin{align*}
\expec[\|\widebar{u}_r\|^2] \leq d \expec\left[\frac{\widebar{\sigma^2_{M_r}}}{M_r}\right] \leq d \min\left(\frac{\sigma^2_{\text{max}}}{M}, \frac{\Sigma^2}{M'}\right),
\end{align*}
using linearity of expectation for the first term in the minimum and Cauchy-Schwartz inequality for the second term in the minimum.   
Now we consider the convex ($\mu = 0$) and strongly convex ($\mu > 0$) cases separately. \\

\noindent \textbf{Convex ($\mu = 0$) case:} 
Re-arranging~\cref{eq: bird}, we get  \[
\expec[\hf(w_r) - \hf^* ] \leq \frac{1}{2 \eta_r}\left(\expec[\|w_r - \ws\|^2 - \|w_{r+1} - \ws\|^2]\right) + \frac{\eta_r}{2}\left(d\min\left\{\frac{\Sigma^2}{M'}, \frac{\sigma^2_{\max}}{M}\right\} + L^2\right).
\]

Then for $\eta_r = \eta$, the average iterate $\widebar{w}_R$ satisfies: 
\begin{align*}
    \expec[\hf(\widebar{w}_R) - \hf^*] &\leq \frac{1}{R}\sum_{r=0}^{R-1}\expec[\hf(w_r) - \hf^*] \nonumber \\
    &\leq \frac{1}{R}\sum_{r=0}^{R-1} \frac{1}{2 \eta} (\expec[\|w_r - \ws\|^2  - \|w_{r+1} - \ws\|]) \nonumber \\
    &+ \frac{\eta}{2}\left(d\min\left\{\frac{\Sigma^2}{M'}, \frac{\sigma^2_{\max}}{M}\right\} + L^2\right)  \nonumber \\
    &\leq \frac{\|w_0 - \ws\|^2}{\eta R} + \frac{\eta}{2}\left(d\min\left\{\frac{\Sigma^2}{M'}, \frac{\sigma^2_{\max}}{M}\right\} + L^2\right),
    \end{align*}
which proves part 1 of the lemma. 

\noindent \textbf{Strongly convex ($\mu > 0$) case:} Note that~\cref{eq: bird} satisfies the conditions for \cref{lem: stich constant stepsize}, with sequences \[
r_t = \expec\|w_t - \ws\|^2, s_t = \expec[\hf(w_t) - \hf^*]
\]
and parameters \[
a = \mu, ~b = 2, ~c = 
d\min\left\{\frac{\Sigma^2}{M'}, \frac{\sigma^2_{\max}}{M}\right\} + L^2, ~g = 2\beta, ~T = R. \]
Then \cref{lem: stich constant stepsize} and Jensen's inequality imply \[
\ER = \widetilde{\mathcal{O}}\left(\beta D^2 \exp\left(\frac{-R}{2\kappa}\right) + \frac{L^2}{\mu}\left(\frac{1}{R} + d\min\left\{\frac{\Xi}{M'}, \frac{\xi_{\max}}{M}\right\}\right)\right),
\]
where $\kappa = \beta/\mu$.
Finally, plugging in $R$ completes the proof. 
\end{proof}

We are prepared to prove \cref{thm: unbalanced theorem 2.1}. 
\begin{proof}[Proof of \cref{thm: unbalanced theorem 2.1}]
\textbf{Privacy:}
The proof follows exactly as in the balanced case, since $\sigma_i^2$ is now calibrated to $(\epsilon_i, \delta_i)$ for all $i \in [N]$. \\

\textbf{Excess loss:} We shall prove the results for the case when $f(\cdot, x)$ is $\beta$-smooth. The non-smooth case follows by Nesterov smoothing, as in the proof of~\cref{thm: informal iid SCO upper bound}. \\
\textbf{1. Convex case:} By \cref{lem: convex unif stability} (and its proof), \cref{lem: generalization gap for unif stable alg}, and \cref{lem: empirical loss of alg1}, we have: 
 \begin{align*}
 \EPL &\leq \alpha + \ER \\
 & \leq \frac{2L^2 R \eta}{n_{\min} M} + \frac{D^2}{\eta R} + \frac{\eta}{2}\left(d\min\left\{\frac{\Sigma^2}{M'}, \frac{\sigma^2_{\max}}{M}\right\} + L^2\right) \\
 &\leq 2 \eta L^2\left(\frac{R}{n_{\min} M} + dR\min\left\{\frac{\Xi}{M'}, \frac{\xi_{\max}}{M}\right\} + 1\right) + \frac{D^2}{\eta R},
\end{align*}
for any $\eta \leq \frac{1}{\beta}$. 
Choosing $\eta = \min\left(1/\beta, \frac{D}{L \sqrt{R}}\min\left(\frac{\sqrt{n_{\min} M}}{\sqrt{R}}, 1, \sqrt{\frac{1}{dR} \max\left\{\frac{M'}{\Xi}, \frac{M}{\xi_{\max}}\right\}} \right) \right)$ yields 
\[
\EPL \lesssim \frac{\beta D^2}{R} + LD \left(\frac{1}{\sqrt{n_{\min} M}} + \frac{1}{\sqrt{R}} + \sqrt{d\min\left\{\frac{\Xi}{M'}, \frac{\xi_{\max}}{M}\right\}}\right).
\]
Choosing $R \geq \frac{\beta D}{L} \min\left(\sqrt{M n_{\min}}, \sqrt{d\min\left\{\frac{\Xi}{M'}, \frac{\xi_{\max}}{M}\right\}} \right) + \min\left(n_{\min} M, \frac{1}{d}\max\left\{\frac{M'}{\Xi}, \frac{M}{\xi_{\max}}\right\} \right)$ completes the proof of the convex case. \\ 
\textbf{2. $\mu$-strongly convex case:} By \cref{lem: convex unif stability} (and its proof), \cref{lem: generalization gap for unif stable alg}, and \cref{lem: empirical loss of alg1}, we have: 
\begin{align*}
\EPL &\leq \alpha + \ER \\
 & \leq \frac{4L^2}{\mu(Mn_{\min} - 1)} + \widetilde{\mathcal{O}}\left(\frac{L^2}{\mu}\left(\frac{1}{R} + d \min\left\{\frac{\Xi}{M'}, \frac{\xi_{\max}}{M} \right\}
\right) \right)\\
\end{align*}
for the $\eta \leq 1/\beta$ prescribed in the proof of~\cref{lem: empirical loss of alg1} and any $R \geq 2\kappa \ln\left(\frac{\beta \mu D^2}{L^2 d \min\left(\Xi /M',  \frac{\xi_{\max}}{M}\right)} \right)$. Thus, choosing $R = 2\kappa \ln\left(\frac{\beta \mu D^2}{L^2 d \min\left(\Xi /M',  \frac{\xi_{\max}}{M}\right)} \right) + M n_{\min}$ completes the proof. 
\end{proof}

\begin{remark}
Generalized versions of the other upper bounds in this paper can also be easily derived with the techniques used above. The key takeaways are: a) the excess empirical risk (and the private term in the SCO bounds) involve a minimum of two terms that trade off the degree of unbalancedness with the variance of $1/M_r$.  In particular, if the variance of $1/M_r$ is sufficiently small (e.g. if $M_r = M$, which is what most existing works on FL assume), then the refined excess risk bounds depend on averages of the parameters across silos, rather than worst-case maximums. b) the generalization error scales with $\min_{i \in [N]} n_i$. 
\end{remark}
\color{black}
\section{Numerical Experiments: Details and Additional Results}
\label{app: experiment details}
In some plots in this section, we include a parameter describing the heterogeneity of the FL problem:\[ 
\upsilon_{*}^2 := \frac{1}{N}\sum_{i=1}^N \|\nabla F_i(\ws)\|^2,
\]
which has appeared in \citep{khaled2019better, koloskova, scaffold, woodworth2020}. 
If the data is homogeneous,
then all $F_i$ share the same minimizers, so $\upsilon_{*}^2 = 0,$ but the converse is false. 

 ISRL-DP Local SGD runs as follows: in round $r$, each silo $i \in S_r$ receives the global model $w_r$ and takes $K$ steps of noisy SGD (with one sample per step) with their local data: $w_r^{i, 0} = w_r$, $w_r^{i, t} = w_r^{i, t-1} - \eta (\nabla f(w_r^{i, t-1}, x_{i, t}^r) + u_i^t)$ for $t \in [K]$, where $x_{i, t}^r$ is drawn uniformly at random from $X_i$ and $u_i^t \sim \mathcal{N}(0, \sigma^2 \mathbf{I}_d)$ for $\sigma^2 = \frac{8 L^2 RK \log(1/\delta)}{\epso^2 n^2}$. Then silo $i$ sends its $K$-th iterate, $w_r^{i,K}$ to the server; the server averages the iterates across all silos and updates the global model to $w_{r+1} = \frac{1}{M_r} \sum_{i \in S_r} w_r^{i,K}$. 

\subsection{Logistic Regression with MNIST}
\label{app: mnist details} 
The data set can be downloaded from \url{http://yann.lecun.com/exdb/mnist}. Our code does this for you automatically. 

\noindent\textbf{Experimental setup:} To divide the data into $N = 25$ silos and for preprocessing, we borrow code from \citep{woodworth2020}, which can be downloaded from: \\
\url{https://papers.nips.cc/paper/2020/hash/45713f6ff2041d3fdfae927b82488db8-Abstract.html}. It is available under a Creative Commons Attribution-Share Alike 3.0 license. There are $n_i = n = 8673$ training and $2168$ test examples per silo; to expedite training, we use only $1/7$ of the MNIST samples ($n = 1238$ training examples per silo). We fix $\delta_i = \delta = 1/n^2$ and test $\epsilon \in \{0.75, 1.5, 3, 6, 12, 18\}.$ The maximum $\upsilon_*^2$ is about $0.17$ for this problem (corresponding to each silo having disjoint local data sets/pairs of digits).  

\vspace{.2cm}
\noindent\textbf{Preprocessing:} We used PCA to reduce the dimensionality to $d = 50.$ We used an $80/20$ train/test split for all silos. To improve numerical stability, we clipped the input $\langle w, x \rangle$ (i.e. projected it onto $[-15,15]$) before feeding into logistic loss.   

\vspace{.2cm}
\noindent\textbf{Hyperparameter tuning:} For each algorithm and each setting of $\epsilon, R, K, \upsilon^2_{*}$, we swept through a range of constant stepsizes and ran 3 trials to find the (approximately) optimal stepsize for that particular algorithm and experiment. We then used the corresponding $w_R$ (averaged over the 3 runs) to compute test error. For (ISRL-DP) MB-SGD, the stepsize grid consisted of 10 evenly spaced points between $e^{-6}$ and $1.$ For (ISRL-DP) Local SGD, the stepsizes were between $e^{-8}$ and $e^{-1}.$ We repeated this entire process $20$ times for fresh train/test splits of the data and reported the average test error in our plots. 
 
\vspace{.2cm}
\noindent\textbf{Choice of $\sigma^2$ and $K$:} We used smaller noise (compared to the theoretical portion of the paper) to get better utility (at the cost of larger $K$/larger computational cost, which is needed for privacy): $\sigma^2 = \frac{8L^2 \ln(1/\delta) R}{n^2 \epsilon^2},$ which provides ISRL-DP by Theorem 1 of \citet{abadi16} if $K = \frac{n \sqrt{\epsilon}}{2 \sqrt{R}}$ (c.f. Theorem 3.1 in \citep{bft19}). Here $L = 2 \max_{x \in X} \|x\|$ is an upper bound on the Lipschitz parameter of the logistic loss and was computed directly from the training data. 

\vspace{.2cm}
\noindent To estimate $\upsilon^2_{*}$, we followed the procedure used in \citep{woodworth2020}, using Newton's method to compute $w^*$ and then averaging $\|\nabla \widehat{F}_i(\ws)\|^2$ over all $i \in [N].$ 

\vspace{.2cm}
\noindent \textbf{Additional experimental result
:}
In \cref{fig: mnist_M18}, we show an additional experiment 
with $M=18$ available to communicate in each round. The results are qualitatively similar to the results presented in the main body for MNIST with $M=25, 12$.

\begin{figure}[ht]
     \begin{center}
        \subfigure{
        \hspace{-0.45cm}
          \includegraphics[width=0.6\textwidth
          ]{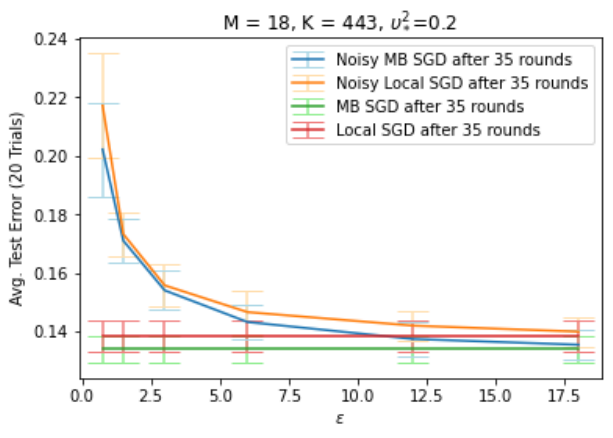}
        }
    \end{center}
    \caption{ \footnotesize
      Test error vs. $\epsilon$ for binary logistic regression on MNIST. $\delta = 1/n^2$. We show 90\% error bars over the 20 trials (train/test splits).
  \label{fig: mnist_M18}}
\end{figure} 

\vspace{.2cm}
\noindent\textbf{Limitations of Experiments:} It is important to note that pre-processing and hyperparameter tuning (and estimation of $L$) were not done in a ISRL-DP manner, since we did not want to detract focus from evaluation of ISRL-DP FL algorithms.\footnote{See \citep{abadi16, liu2019private, steinkehyper} and the references therein for discussion of DP PCA and DP hyperparameter tuning.} As a consequence, the overall privacy loss for the entire experimental process is higher than the $\epsilon$ indicated in the plots, which solely reflects the privacy loss from running the FL algorithms with fixed hyperparameters and (pre-processed) data. Similar remarks apply for the linear regression experiments (see \cref{app: linreg details}). 

\subsection{Linear Regression with Health Insurance Data}
\label{app: linreg details}
\noindent\textbf{Data set:} The data (\url{https://www.kaggle.com/mirichoi0218/insurance}), which is available under an Open Database license, consists of $\widetilde{N} = 1338$ observations. The target variable $y$ is medical charges. There are $d - 1 = 6$ features: age, sex, BMI, number of children, smoker, and geographic region.  

\vspace{.2cm}
\noindent \textbf{Experimental setup:}
For a given $N,$ we grouped data into $N$ (almost balanced) silos by sorting $y$ in ascending order and then dividing into $N$ groups, the first $N-1$ of size $\lceil 1338/ N \rceil$ and the remaining points in the last silo. For each $N,$ we ran experiments with $R = 35$. We ran $20$ trials, each with a fresh random train/test ($80/20$) split. %
We tested $\epsilon \in \{.125,
    .25, 
    .5, 
    1, 
    2,
    3\}$
and fixed $\delta_i = 1/n_i^2$ for all experiments. 

\vspace{.2cm}
\noindent To estimate $\upsilon^2_{*},$ we followed the procedure used in \citep{woodworth2020}, using Newton's method to compute $w^*$ and then averaging $\|\nabla \widehat{F}_i(\ws)\|^2$ over all $i \in [N].$ 

\vspace{.2cm}
\noindent\textbf{Preprocessing:} We first numerically encoded the categorical variables and then standardized the numerical features \textit{age} and \textit{BMI} to have zero mean and unit variance. 

\vspace{.2cm}
\noindent \textbf{Gradient clipping: }
In the absence of a reasonable a priori bound on the Lipschitz parameter of the squared loss (as is typical for unconstrained linear regression problems with potentially unbounded data), we incorporated gradient clipping \citep{abadi16} into the algorithms. We then calibrated the noise to the clip threshold $L$ to ensure LDP. For fairness of comparison, we also allowed for clipping for the non-private algorithms (if it helped their performance).  

\vspace{.2cm}
\noindent \textbf{Hyperparameter tuning:}
For each trial and each algorithm, we swept through a log-scale grid of 10 stepsizes and 5 clip thresholds 3 times, selected the parameter $w$ that minimized average (over 3 repetitions) training error (among all $10 \times 5 = 50$), and computed the corresponding average test error. The stepsize grids we used ranged from $e^{-8}$ and $e^{1}$ for (ISRL-DP) MB-SGD and from $e^{-10}$ to $1$ for (ISRL-DP) Local SGD. The excess risk (train and test) we computed was for the normalized objective function $F(w, X, Y) = \|Y - wX\|^2/2 N_0$ where $N_0 \in \{1070, 268\}$ ($1070$ for train, $268$ for test) and $X$ is $N_0 \times d$ with $d = 7$ (including a column of all $1$s) and $Y \in \mathbb{R}^{N_0}.$ The clip threshold grids were $\{100, 
10^4,
      10^6,
    10^8,
    10^{32}
      \},$ with the last element corresponding to effectively no clipping. 

\vspace{.2cm}
\noindent\textbf{Choice of $\sigma^2$ and $K$:} We used the same $\sigma^2$ and $K = \frac{n \sqrt{\epsilon}}{2 \sqrt{R}}$ as in the logistic regression experiments described in \cref{app: mnist details}. However, here $L$ is the clip threshold instead of the Lipschitz parameter.

\vspace{.2cm}
\noindent\textbf{Relative Root Mean Square Error (RMSE):} We scale our reported errors (in the plots) to make them more interpretable. We define the Relative (test) RMSE of an algorithm to be $\sqrt{MSE/NMSE} = \sqrt{\frac{\sum_{i=1}^{N_{\text{test}}} (y_i - \hat{y}_i)^2}{\sum_{i=1}^{N_{\text{test}}} (y_i - \widebar{y}_{\text{train}})^2}}$, where $NMSE$ (``Naiive Mean Square Error'') is the (test) MSE incurred by the non-private naiive predictor that predicts $y_i$ to be $\widebar{y}_{\text{train}}$, the average of the training labels, for all test data $i \in [N_{\text{test}}]$. Here $\hat{y}_i$ is the predicted label of the algorithm. Relative RMSE can be thought of as the Coefficient of Variation or Relative Standard Deviation of the predictions of the algorithm. Note that even though the naiive predictor is completely non-private and allowed to access the entire training data even when $M < N$ (violating the decentralized data principle of FL), the ISRL-DP FL algorithms still outperform this predictor for most values of $\epsilon$ (except for some experiments when $\epsilon \approx 0$), as evidenced by values of Relative RMSE being below $1$. For $\epsilon \approx 1$, ISRL-DP MB-SGD tends to outperform the naiive predictor by $30-40\%$.

\vspace{.2cm}
\noindent \textbf{Additional experimental results: }
In \cref{fig:linreg_supp plot} and \cref{fig:linreg_supp plot2}, we present results for experiments with additional settings of $N$ and $M$. We observe qualitatively similar behavior as in the plots presented in the main body of the paper. In particular, ISRL-DP MB-SGD continues to outperform ISRL-DP Local SGD in most tested setups/privacy levels (especially the high-privacy regime). On the other hand, for some settings of $M, N, K$, we observe that ISRL-DP Local SGD outperforms ISRL-DP MB-SGD as $\epsilon \to 3$ (e.g. \cref{fig:linreg_supp plot2}, and $N=M=3$). In general, we see that the utility impact of ISRL-DP is relatively insignificant for this problem when $\epsilon \approx 3$.

\begin{figure}[ht]
     \begin{center}
         \subfigure{%
          \includegraphics[width=0.6\textwidth
          ]
        {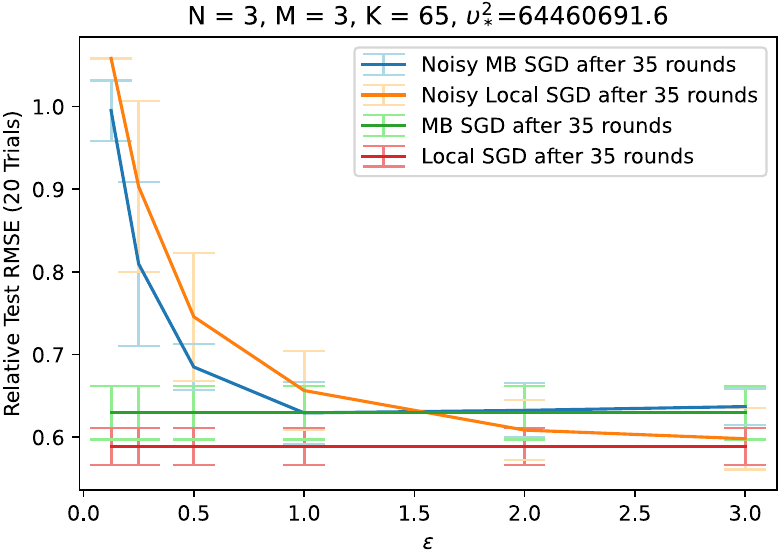}
        }
        \subfigure{
          \includegraphics[width=
         0.6
          \textwidth
          ]{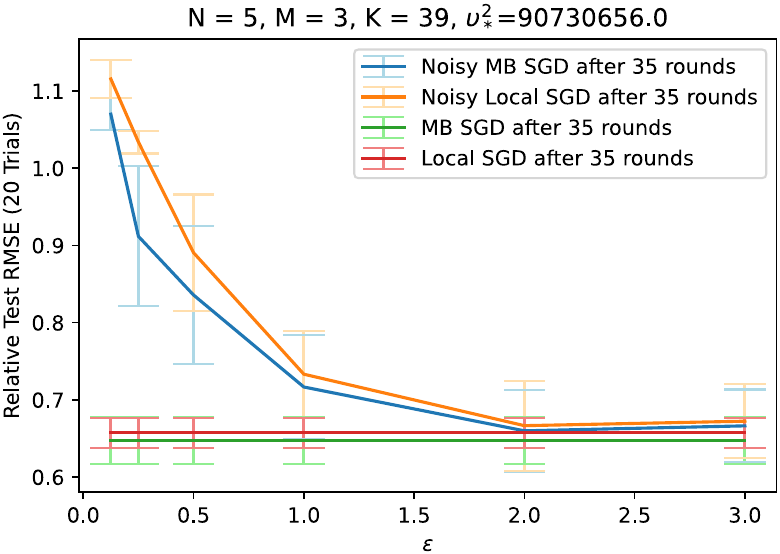}
        }
    \end{center}
    \caption{
    \footnotesize
      Test error vs. $\epsilon$ for linear regression on heterogeneous health insurance data. $\delta = 1/n^2.$ $90\%$ error bars are shown.
  \label{fig:linreg_supp plot}}
\end{figure} 

\begin{figure}
    \begin{center}
        \subfigure{
        \includegraphics[width = 
        0.6\textwidth]{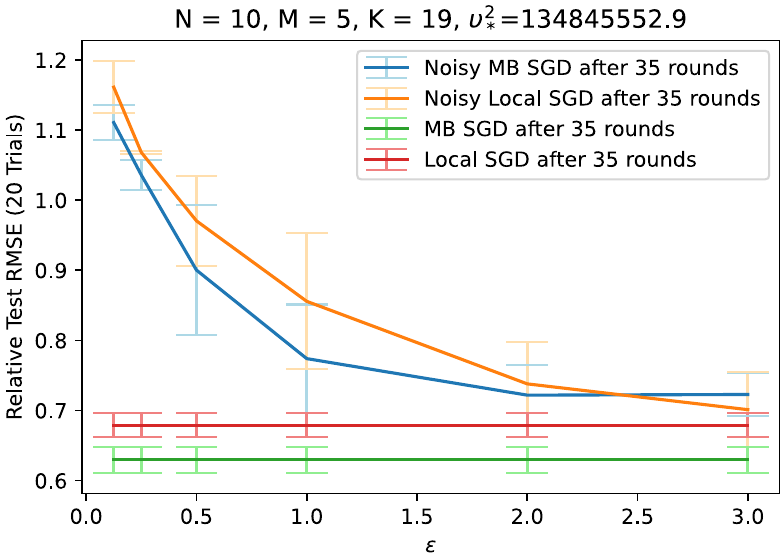}
        }
    \subfigure{
        \includegraphics[width = 
        0.6\textwidth]{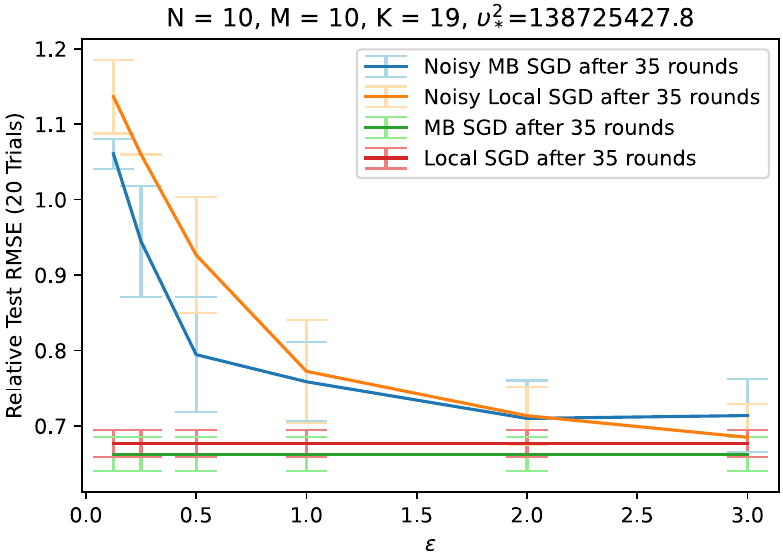}}
        \end{center}
    \caption{
    \footnotesize
      Test error vs. $\epsilon$ for linear regression on heterogeneous health insurance data. $\delta = 1/n^2.$ $90\%$ error bars are shown.
  \label{fig:linreg_supp plot2}}
\end{figure}

\subsection{Softmax Regression with Obesity Dataset}
The data set can be freely downloaded from \url{https://archive.ics.uci.edu/ml/datasets/Estimation+of+obesity+levels+based+on+eating+habits+and+physical+condition+}. The data contains 17 attributes (such as age, gender, physical activity, eating habits, etc.) and 2111 records. 

\noindent\textbf{Experimental setup:} We divide the data into $N = 7$ heterogeneous silos based on the value of the target variable, obesity level, which is categorical and takes 7 values: Insufficient Weight, Normal Weight, Overweight Level I, Overweight Level II, Obesity Type I, Obesity Type II and Obesity Type III. 
We fix $\delta_i = \delta = 1/n^2$ and test $\epsilon \in \{0.5, 1, 3, 6, 9\}.$ We ran three trials with a new train/test split in each trial and reported the average test error. 

\vspace{.2cm}
\noindent\textbf{Preprocessing:} We numerically encode the categorical variables and standardize the continuous numerical features to have mean zero and unit variance. We used an $80/20$ train/test split. We discarded a small number of (randomly selected) training samples from some silos in order to obtain balanced silo sets, to ease implementation of the noisy methods. 

\vspace{.2cm}
\noindent\textbf{Hyperparameter tuning:} For each algorithm and each setting of $\epsilon, R, K$, we swept through a range of constant stepsizes to find the (approximately) optimal stepsize for that particular algorithm and experiment. We then used the corresponding $w_R$ to compute test error for that trial. For (ISRL-DP) MB-SGD, the stepsize grid consisted of 8 evenly spaced points between $e^{-7}$ and $e^{-1}$ For (ISRL-DP) Local SGD, we started with the same stepsize grid, but sometimes required additional tuning with smaller stepsizes (especially for small $\epsilon$) to find the optimal one. 

\vspace{.2cm}
\noindent\textbf{Choice of $\sigma^2$ and $K$:} We used smaller noise (compared to the theoretical portion of the paper) to get better utility (at the cost of larger $K$/larger computational cost, which is needed for privacy): $\sigma^2 = \frac{8L^2 \ln(1/\delta) R}{n^2 \epsilon^2}$ for ISRL-DP MB-SGD, which provides ISRL-DP by Theorem 1 of \citet{abadi16} if $K = \frac{n \sqrt{\epsilon}}{2 \sqrt{R}}$ (c.f. Theorem 3.1 in \citep{bft19}). For ISRL-DP Local SGD, one needs $\sigma^2 = \frac{8L^2 \ln(1/\delta) RK}{n^2 \epsilon^2}$ since the sensitivity and number of gradient evaluations are both larger by a factor of $K$. Here $L = 2 \max_{x \in X} \|x\| = 20$ is an upper bound on the Lipschitz parameter of the softmax loss and was estimated directly from the pre-processed training data.

\color{black}

\section{Limitations and Societal Impacts}
\label{app: limitations}
\subsection{Limitations}
Our results rely on certain assumptions (e.g convex, Lipschitz loss), which may be violated in certain practical applications. Moreover, our theoretical results require an \textit{a priori} bound on the Lipschitz parameter (for noise calibration). While such a bound is fairly easy to obtain for models such as logistic regression with data that is (known to be) bounded (e.g. our MNIST experiments), it is unrealistic for models such as unconstrained linear regression with potentially unbounded data (e.g. our medical cost data experiments). In practice, in such situations, gradient clipping can be incorporated into our algorithms; we have shown empirically (medical cost data experiments) that MB-SGD still performs well with clipping. However, we did not obtain our theoretical results with gradient clipping. An interesting direction for future work would be to determine optimal rates for ISRL-DP FL without Lipschitzness and/or without convexity. Further, as we explained in \cref{app: experiment details}, pre-processing and hyperparameter tuning (and estimation of $L$) were not done in a ISRL-DP manner in our numerical experiments, since we did not want to detract focus from evaluation of ISRL-DP FL algorithms. As a consequence, the overall privacy loss for the entire experimental process is higher than the $\epsilon$ indicated in the plots, which solely reflects the privacy loss from running the FL algorithms with fixed hyperparameters and (pre-processed) data.

\subsection{Societal Impacts}
Our work provides algorithms for protecting the privacy of users during federated learning. Privacy is usually thought of favorably: for example, the right to privacy is an element of various legal systems. However, it is conceivable that our algorithms could be misused by corporations and governments to justify malicious practices, such as collecting personal data without users' permission. Also, the (necessarily) lower accuracy from privately trained models could have negative consequences: e.g. if a ISRL-DP model is used to predict the effects of climate change and the model gives less accurate and more optimistic results, then a government might use this as justification to improperly eliminate environmental protections. Nevertheless, we believe the dissemination of privacy-preserving machine learning algorithms and greater knowledge about these algorithms provides a net benefit to society.

\end{document}